\definecolor{cherry1}{rgb}{0.215686, 0.215686, 0.215686}
\definecolor{cherry2}{rgb}{0.563899, 0.155919, 0.156577}
\definecolor{cherry3}{rgb}{0.747389, 0.178584, 0.180272}
\definecolor{cherry4}{rgb}{0.836168, 0.264453, 0.26819}
\definecolor{cherry5}{rgb}{0.880144, 0.397868, 0.404399}
\definecolor{cherry6}{rgb}{0.911942, 0.567676, 0.576412}
\newcommand*{\tikzmk}[1]{\tikz[remember picture,overlay,] \node (#1) {};\ignorespaces}
\newcommand{\boxit}[1]{\tikz[remember picture,overlay]{\node[yshift=3pt,fill=#1,opacity=.25,fit={(A)($(B)+(.90\linewidth,.8\baselineskip)$)}] {};}\ignorespaces}
  \newcommand{\mathmat}[1]{\mathbfit{#1}}
  \newcommand{\mathvec}[1]{\mathbfit{#1}}
  \newcommand{\mathvecgreek}[1]{\mathbfit{#1}}
  \newcommand{\mathmatgreek}[1]{\mathbfit{#1}}
  \newcommand{\mathrv}[1]{\mathsfit{#1}}
  \newcommand{\mathrvvec}[1]{\mathbfsfit{#1}}
  \newcommand{\mathrvgreek}[1]{\mathsfit{#1}}
  \newcommand{\mathrvvecgreek}[1]{\mathbfsfit{#1}}
  \newcommand{\mathmat}[1]{\mathbold{#1}}
  \newcommand{\mathvec}[1]{\mathbold{#1}}
  \newcommand{\mathvecgreek}[1]{\mathbold{#1}}
  \newcommand{\mathmatgreek}[1]{\mathbold{#1}}
  \newcommand{\mathrv}[1]{\mathsfit{#1}}
  \newcommand{\mathrvvec}[1]{\mathsfit{#1}}
  \newcommand{\mathrvgreek}[1]{\mathsfit{#1}}
  \newcommand{\mathrvvecgreek}[1]{\mathsfit{#1}}
\newcommand{\removelatexerror}{\let\@latex@error\@gobble}
\declaretheoremstyle[
    spaceabove    = \parsep,
    spacebelow    = \parsep,
    bodyfont      = \normalfont\itshape,
]{theoremsty}
\declaretheorem[name=Theorem,    style=theoremsty]{theorem}
\declaretheorem[name=Corollary,  style=theoremsty]{corollary}
\declaretheorem[name=Lemma,      style=theoremsty]{lemma}
\declaretheorem[name=Theorem,    style=theoremsty, numbered=no]{theorem*}
\declaretheorem[name=Proposition,style=theoremsty, numbered=no]{proposition*}
\declaretheorem[name=Corollary,  style=theoremsty, numbered=no]{corollary*}
\declaretheorem[name=Lemma,      style=theoremsty, numbered=no]{lemma*}
\declaretheorem[name=Open Problem,style=theoremsty, mdframed={style = coloredstyle}, numbered=no]{openproblem*}
\declaretheorem[name=Conjecture,  style=theoremsty, mdframed={style = coloredstyle}, numbered=no]{conjecture*}
\declaretheoremstyle[
    spaceabove=\parsep,
    spacebelow=\parsep,
    bodyfont=\normalfont,
]{normalsty}
\declaretheorem[name=Remark,      style=normalsty]{remark}
\declaretheorem[name=Definition,  style=normalsty]{definition}
\declaretheorem[name=Assumption,  style=normalsty]{assumption}
\declaretheorem[name=Remark,      style=normalsty, numbered=no]{remark*}
\declaretheorem[name=Definition,  style=normalsty, numbered=no]{definition*}
\declaretheorem[name=Assumption,  style=normalsty, numbered=no]{assumption*}
\providecommand\IfFormatAtLeastTF{\@ifl@t@r\fmtversion}
  \renewcommand*{\backref}[1]{}
  \renewcommand*{\backrefalt}[4]{({\footnotesize%
  \ifcase #1 Not cited.%
    \or page~#2%
    \else pages #2%
  \fi%
  })}
\crefname{assumption}{A}{A}
\crefname{appendix}{App.}{Apps.}
\crefname{algorithm}{Alg.}{Algs.}
\crefname{equation}{Eq.}{Eqs.}
\crefname{figure}{Fig.}{Figs.}
\crefname{tabular}{Tab.}{Tabs.}
\crefname{section}{\S}{\S}
\Crefname{assumption}{Assumption}{Assumptions}
\Crefname{appendix}{Appendix}{Appendices}
\Crefname{algorithm}{Algorithm}{Algorithms}
\Crefname{equation}{Equation}{Equations}
\Crefname{figure}{Figure}{Figures}
\Crefname{tabular}{Table}{Tables}
\Crefname{section}{Section}{Sections}
\definecolor{linkcolor}{HTML}{6929C4}
\definecolor{citecolor}{HTML}{0043CE}
\crefname{assumption}{Assumption}{Assumption}
\crefname{condition}{Condition}{Condition}
\crefname{framedtheorem}{Theorem}{Theorems}
\crefname{framedproposition}{Proposition}{Propositions}
\crefname{framedlemma}{Lemma}{Lemmas}
\def\adl@drawiv#1#2#3{%
        \hskip.5\tabcolsep
        \xleaders#3{#2.5\@tempdimb #1{1}#2.5\@tempdimb}%
                #2\z@ plus1fil minus1fil\relax
        \hskip.5\tabcolsep}
\newcommand{\cdashlinelr}[1]{%
  \noalign{\vskip\aboverulesep
           \global\let\@dashdrawstore\adl@draw
           \global\let\adl@draw\adl@drawiv}
  \cdashline{#1}
  \noalign{\global\let\adl@draw\@dashdrawstore
           \vskip\belowrulesep}}
\DeclareMathOperator*{\minimize}{minimize}
\DeclareMathOperator*{\argmin}{arg\,min} 
\newcommand*\xbar[1]{%
  \hbox{%
    \vbox{%
      \hrule height 0.6pt 
      \kern0.33ex
      \hbox{%
        \kern-0.1em
        \ensuremath{#1}%
        \kern-0.1em
      }%
    }%
  }%
}
\newcommand{\DKL}[2]{\mathrm{D}_{\mathrm{KL}}(#1,#2)}
\newcommand{\norm}[1]{{\left\lVert #1 \right\rVert}}
\newcommand{\abs}[1]{{\left| #1 \right|}}
\def\do#1{\csdef{v#1}{\mathvec{#1}}}
\def\do#1{\csdef{v#1}{\mathvecgreek{\csname#1\endcsname}}}
\def\do#1{\csdef{rv#1}{\mathrv{#1}}}
\def\do#1{\csdef{rv#1}{\mathrvgreek{\csname#1\endcsname}}}
\def\do#1{\csdef{rvv#1}{\mathrvvec{#1}}}
\def\do#1{\csdef{rvv#1}{\mathrvvecgreek{\csname#1\endcsname}}}
\def\do#1{\csdef{rvm#1}{\mathrvvecgreek{#1}}}
\def\do#1{\csdef{m#1}{\mathmat{#1}}}
\def\do#1{\csdef{m#1}{\mathmatgreek{\csname#1\endcsname}}}
\definecolor{ibmred100}{rgb}{0.17647058823529413,0.027450980392156862,0.03529411764705882}
\definecolor{ibmred90}{rgb}{0.3215686274509804,0.01568627450980392,0.03137254901960784}
\definecolor{ibmred80}{rgb}{0.4588235294117647,0.054901960784313725,0.07450980392156863}
\definecolor{ibmred70}{rgb}{0.6352941176470588,0.09803921568627451,0.12156862745098039}
\definecolor{ibmred60}{rgb}{0.8549019607843137,0.11764705882352941,0.1568627450980392}
\definecolor{ibmred50}{rgb}{0.9803921568627451,0.30196078431372547,0.33725490196078434}
\definecolor{ibmred40}{rgb}{1.0,0.5137254901960784,0.5372549019607843}
\definecolor{ibmred30}{rgb}{1.0,0.7019607843137254,0.7215686274509804}
\definecolor{ibmred20}{rgb}{1.0,0.8431372549019608,0.8509803921568627}
\definecolor{ibmred10}{rgb}{1.0,0.9450980392156862,0.9450980392156862}
\definecolor{ibmmagenta100}{rgb}{0.16470588235294117,0.0392156862745098,0.09411764705882353}
\definecolor{ibmmagenta90}{rgb}{0.3176470588235294,0.00784313725490196,0.1411764705882353}
\definecolor{ibmmagenta80}{rgb}{0.4549019607843137,0.03529411764705882,0.21568627450980393}
\definecolor{ibmmagenta70}{rgb}{0.6235294117647059,0.09411764705882353,0.3254901960784314}
\definecolor{ibmmagenta60}{rgb}{0.8156862745098039,0.14901960784313725,0.4392156862745098}
\definecolor{ibmmagenta50}{rgb}{0.9333333333333333,0.3254901960784314,0.5882352941176471}
\definecolor{ibmmagenta40}{rgb}{1.0,0.49411764705882355,0.7137254901960784}
\definecolor{ibmmagenta30}{rgb}{1.0,0.6862745098039216,0.8235294117647058}
\definecolor{ibmmagenta20}{rgb}{1.0,0.8392156862745098,0.9098039215686274}
\definecolor{ibmmagenta10}{rgb}{1.0,0.9411764705882353,0.9686274509803922}
\definecolor{ibmpurple100}{rgb}{0.10980392156862745,0.058823529411764705,0.18823529411764706}
\definecolor{ibmpurple90}{rgb}{0.19215686274509805,0.07450980392156863,0.3686274509803922}
\definecolor{ibmpurple80}{rgb}{0.28627450980392155,0.11372549019607843,0.5450980392156862}
\definecolor{ibmpurple70}{rgb}{0.4117647058823529,0.1607843137254902,0.7686274509803922}
\definecolor{ibmpurple60}{rgb}{0.5411764705882353,0.24705882352941178,0.9882352941176471}
\definecolor{ibmpurple50}{rgb}{0.6470588235294118,0.43137254901960786,1.0}
\definecolor{ibmpurple40}{rgb}{0.7450980392156863,0.5843137254901961,1.0}
\definecolor{ibmpurple30}{rgb}{0.8313725490196079,0.7333333333333333,1.0}
\definecolor{ibmpurple20}{rgb}{0.9098039215686274,0.8549019607843137,1.0}
\definecolor{ibmpurple10}{rgb}{0.9647058823529412,0.9490196078431372,1.0}
\definecolor{ibmblue100}{rgb}{0.0,0.06666666666666667,0.2549019607843137}
\definecolor{ibmblue90}{rgb}{0.0,0.11372549019607843,0.4235294117647059}
\definecolor{ibmblue80}{rgb}{0.0,0.17647058823529413,0.611764705882353}
\definecolor{ibmblue70}{rgb}{0.0,0.2627450980392157,0.807843137254902}
\definecolor{ibmblue60}{rgb}{0.058823529411764705,0.3843137254901961,0.996078431372549}
\definecolor{ibmblue50}{rgb}{0.27058823529411763,0.5372549019607843,1.0}
\definecolor{ibmblue40}{rgb}{0.47058823529411764,0.6627450980392157,1.0}
\definecolor{ibmblue30}{rgb}{0.6509803921568628,0.7843137254901961,1.0}
\definecolor{ibmblue20}{rgb}{0.8156862745098039,0.8862745098039215,1.0}
\definecolor{ibmblue10}{rgb}{0.9294117647058824,0.9607843137254902,1.0}
\definecolor{ibmcyan100}{rgb}{0.023529411764705882,0.09019607843137255,0.15294117647058825}
\definecolor{ibmcyan90}{rgb}{0.00392156862745098,0.15294117647058825,0.28627450980392155}
\definecolor{ibmcyan80}{rgb}{0.0,0.22745098039215686,0.42745098039215684}
\definecolor{ibmcyan70}{rgb}{0.0,0.3254901960784314,0.6039215686274509}
\definecolor{ibmcyan60}{rgb}{0.0,0.4470588235294118,0.7647058823529411}
\definecolor{ibmcyan50}{rgb}{0.06666666666666667,0.5725490196078431,0.9098039215686274}
\definecolor{ibmcyan40}{rgb}{0.2,0.6941176470588235,1.0}
\definecolor{ibmcyan30}{rgb}{0.5098039215686274,0.8117647058823529,1.0}
\definecolor{ibmcyan20}{rgb}{0.7294117647058823,0.9019607843137255,1.0}
\definecolor{ibmcyan10}{rgb}{0.8980392156862745,0.9647058823529412,1.0}
\definecolor{ibmteal100}{rgb}{0.03137254901960784,0.10196078431372549,0.10980392156862745}
\definecolor{ibmteal90}{rgb}{0.00784313725490196,0.16862745098039217,0.18823529411764706}
\definecolor{ibmteal80}{rgb}{0.0,0.2549019607843137,0.26666666666666666}
\definecolor{ibmteal70}{rgb}{0.0,0.36470588235294116,0.36470588235294116}
\definecolor{ibmteal60}{rgb}{0.0,0.49019607843137253,0.4745098039215686}
\definecolor{ibmteal50}{rgb}{0.0,0.615686274509804,0.6039215686274509}
\definecolor{ibmteal40}{rgb}{0.03137254901960784,0.7411764705882353,0.7294117647058823}
\definecolor{ibmteal30}{rgb}{0.23921568627450981,0.8588235294117647,0.8509803921568627}
\definecolor{ibmteal20}{rgb}{0.6196078431372549,0.9411764705882353,0.9411764705882353}
\definecolor{ibmteal10}{rgb}{0.8509803921568627,0.984313725490196,0.984313725490196}
\definecolor{ibmgreen100}{rgb}{0.027450980392156862,0.09803921568627451,0.03137254901960784}
\definecolor{ibmgreen90}{rgb}{0.00784313725490196,0.17647058823529413,0.050980392156862744}
\definecolor{ibmgreen80}{rgb}{0.01568627450980392,0.2627450980392157,0.09019607843137255}
\definecolor{ibmgreen70}{rgb}{0.054901960784313725,0.3764705882352941,0.15294117647058825}
\definecolor{ibmgreen60}{rgb}{0.09803921568627451,0.5019607843137255,0.2196078431372549}
\definecolor{ibmgreen50}{rgb}{0.1411764705882353,0.6313725490196078,0.2823529411764706}
\definecolor{ibmgreen40}{rgb}{0.25882352941176473,0.7450980392156863,0.396078431372549}
\definecolor{ibmgreen30}{rgb}{0.43529411764705883,0.8627450980392157,0.5490196078431373}
\definecolor{ibmgreen20}{rgb}{0.6549019607843137,0.9411764705882353,0.7294117647058823}
\definecolor{ibmgreen10}{rgb}{0.8705882352941177,0.984313725490196,0.9019607843137255}
\begin{document}

\icmltitlerunning{Tuning SMC Samplers via Greedy Incremental Divergence Minimization}

\twocolumn[
\icmltitle{
Tuning Sequential Monte Carlo Samplers via\\Greedy Incremental Divergence Minimization
}


\icmlsetsymbol{equal}{*}

\begin{icmlauthorlist}
\icmlauthor{Kyurae Kim}{equal,penn}
\icmlauthor{Zuheng Xu}{equal,ubc}
\icmlauthor{Jacob R. Gardner}{penn}
\icmlauthor{Trevor Campbell}{ubc}
\end{icmlauthorlist}

\icmlaffiliation{penn}{Dept. Computer and Information Science, University of Pennsylvania, Philadelphia, U.S.}
\icmlaffiliation{ubc}{Dept. Statistics, University of British Columbia, Vancouver, Canada}

\icmlcorrespondingauthor{Kyurae Kim}{\href{mailto:kyrim@seas.upenn.edu}{kyrkim@seas.upenn.edu}}
\icmlcorrespondingauthor{Zuheng Xu}{\href{mailto:zuheng.xu@stat.ubc.ca}{zuheng.xu@stat.ubc.ca}}
\icmlcorrespondingauthor{Trevor Campbell}{\href{mailto:trevor@stat.ubc.ca}{trevor@stat.ubc.ca}}
\icmlcorrespondingauthor{Jacob R. Gardner}{\href{mailto:jrgardner@seas.upenn.edu}{jacobrg@seas.upenn.edu}}

\icmlkeywords{sequential Monte Carlo, Bayesian inference, Markov chain Monte Carlo, annealed importance sampling}

\vskip 0.3in
]



\printAffiliationsAndNotice{\icmlEqualContribution} 

\begin{abstract}
The performance of sequential Monte Carlo (SMC) samplers heavily depends on the tuning of the Markov kernels used in the path proposal.
For SMC samplers with unadjusted Markov kernels, standard tuning objectives, such as the Metropolis-Hastings acceptance rate or the expected-squared jump distance, are no longer applicable.
While stochastic gradient-based end-to-end optimization has been explored for tuning SMC samplers, they often incur excessive training costs, even for tuning just the kernel step sizes.
In this work, we propose a general adaptation framework for tuning the Markov kernels in SMC samplers by minimizing the incremental Kullback-Leibler (KL) divergence between the proposal and target paths.
For step size tuning, we provide a gradient- and tuning-free algorithm that is generally applicable for kernels such as Langevin Monte Carlo (LMC).
We further demonstrate the utility of our approach by providing a tailored scheme for tuning \textit{kinetic} LMC used in SMC samplers.
Our implementations are able to obtain a full \textit{schedule} of tuned parameters at the cost of a few vanilla SMC runs, which is a fraction of gradient-based approaches.
\end{abstract}


\section{Introduction}
Sequential Monte Carlo (SMC; \citealp{dai_invitation_2022,delmoral_sequential_2006,chopin_introduction_2020}) is a general methodology for simulating Feynman-Kac models~\citep{delmoral_feynmankac_2004,delmoral_mean_2016}, which describe the evolution of distributions through sequential changes of measure.
When tuned well, SMC provides state-of-the-art performance in a wide range of modern problem settings, 
from inference in both state-space models and static models~\citep{dai_invitation_2022,chopin_introduction_2020,doucet_tutorial_2011,cappe_overview_2007},
to training deep generative models~\citep{arbel_annealed_2021,matthews_continual_2022,doucet_differentiable_2023,maddison_filtering_2017}, steering large language models~\citep{zhao_probabilistic_2024,lew_sequential_2023}, conditional generation from diffusion models~\citep{trippe_diffusion_2023,wu_practical_2023}, and solving inverse problems with diffusion model priors~\citep{cardoso_monte_2024,dou_diffusion_2024,achituve_inverse_2025}.

In practice, however, tuning SMC samplers is often a significant challenge. For example, for static models~\citep{chopin_sequential_2002,delmoral_sequential_2006}, one must tune the number of steps, number of particles, target distribution, and Markov kernel at each step, as well as criteria for triggering particle resampling.
Since the asymptotic variance of SMC samplers is additive over the steps~\citep{delmoral_sequential_2006,gerber_negative_2019,chopin_central_2004,webber_unifying_2019,bernton_schrodinger_2019}, all of the above must be tuned adequately \textit{at all times}; an SMC run will not be able to recover from a single mistuned step.
While multiple methods for adapting the path of intermediate targets have been proposed~\citep{zhou_automatic_2016,syed_optimised_2024}, especially in the AIS context~\citep{kiwaki_variational_2015,goshtasbpour_adaptive_2023,masrani_qpaths_2021,jasra_inference_2011}, 
methods and criteria for tuning the path proposal kernels are relatively scarce.


Markov kernels commonly used in SMC can be divided into two categories: those of the Metropolis-Hastings (\citealp{metropolis_equation_1953,hastings_monte_1970}) type, commonly referred to as \textit{adjusted} kernels, and \textit{unadjusted} kernels. 
For tuning adjusted kernels, one can leverage ideas from the adaptive Markov chain Monte Carlo (MCMC;~\citealp{robert_monte_2004}) literature, such as controlling the acceptance probability~\citep{andrieu_controlled_2001,atchade_adaptive_2005} or maximizing the expected-squared jump distance~\citep{pasarica_adaptively_2010}. 
Both have previously been incorporated into adaptive SMC methods \citep{fearnhead_adaptive_2013,buchholz_adaptive_2021}. 
On the other hand, tuning unadjusted kernels, which 
have favorable high-dimensional convergence properties compared to their adjusted counterparts~\citep{lee_lower_2021,chewi_optimal_2021,roberts_optimal_1998,wu_minimax_2022,biswas_estimating_2019} and enable fully differentiable samplers~\citep{geffner_mcmc_2021,zhang_differentiable_2021,doucet_scorebased_2022}, is not as straightforward as most techniques from adaptive MCMC cannot be used.

%

Instead, the typical approach to tuning unadjusted kernels is to minimize a variational objective via stochastic gradient descent (SGD; \citealp{robbins_stochastic_1951,bottou_optimization_2018}) in an end-to-end fashion~\citep{doucet_scorebased_2022,goshtasbpour_optimization_2023,salimans_markov_2015,caterini_hamiltonian_2018,gu_neural_2015,arbel_annealed_2021,matthews_continual_2022,maddison_filtering_2017,geffner_mcmc_2021,heng_controlled_2020,chehab_provable_2023,geffner_langevin_2023,naesseth_variational_2018,le_autoencoding_2018,zenn_resampling_2023}.
End-to-end optimization approaches are costly: SGD typically requires at least thousands of iterations to converge (\textit{e.g}, \citealt{geffner_mcmc_2021} use \(1.5 \times 10^{5}\) SGD steps for tuning AIS), where each iteration itself involves an entire run of SMC/AIS. Moreover, SGD is sensitive to several tuning parameters, such as the step size, batch size, and initialization~\citep{sivaprasad_optimizer_2020}. 
But many of the unadjusted kernels, \textit{e.g.}, random walk MH~\citep{metropolis_equation_1953,hastings_monte_1970}, Metropolis-adjusted Langevin~\citep{rossky_brownian_1978,besag_comments_1994}, Hamiltonian Monte Carlo~\citep{duane_hybrid_1987,neal_mcmc_2011}, have only a few scalar parameters (\textit{e.g.}, step size) subject to tuning. In this setting, the full generality (and cost) of SGD is not required; it is possible to design a simpler and more efficient method for tuning each transition kernel sequentially in a single SMC/AIS run.



In this work, 
we propose a novel strategy for tuning path proposal kernels of SMC samplers.
Our approach is based on greedily minimizing the incremental Kullback-Leibler (KL; \citealp{kullback_information_1951}) divergence between the target and the proposal path measures at each SMC step (\cref{section:objective}).
This is reminiscent of annealed flow transport (AFT; \citealp{arbel_annealed_2021,matthews_continual_2022}), where a normalizing flow~\citep{papamakarios_normalizing_2021} proposal is trained at each step by minimizing the incremental KL.
Instead of training a whole normalizing flow, which requires expensive gradient-based optimization, we tune the parameters of off-the-shelf kernels at each step.
This simplifies the optimization process, leading to a gradient- and tuning-free step size adaptation algorithm with quantitative convergence guarantees  (\cref{section:stepsize}). 

Using our tuning scheme, we provide complete implementations of tuning-free adaptive SMC samplers for static models:
\begin{enumerate*}
    \item[(i)] SMC-LMC, which is based on Langevin Monte Carlo (LMC;~\citealp{rossky_brownian_1978,parisi_correlation_1981,grenander_representations_1994}), also commonly known as the unadjusted Langevin algorithm, and
    \item[(ii)] SMC-KLMC, which uses kinetic Langevin Monte Carlo with the ``OBABO'' discretization~\citep{duane_hybrid_1987,horowitz_generalized_1991,monmarche_highdimensional_2021}, also known as unadjusted generalized Hamiltonian Monte Carlo~\citep{neal_mcmc_2011}.
\end{enumerate*}
Our method achieves lower variance in normalizing constant estimates compared to the best fixed step sizes obtained through grid search or SGD-based tuning methods. 
Additionally, the step size schedules found by our method achieve lower or comparable variance than those found by end-to-end optimization approaches without involving any manual tuning (\cref{section:experiments}).




\vspace{-1ex}
\section{Background}
\paragraph{Notation.}
Let \(\mathcal{B}\left(\mathcal{X}\right)\) be the set of Borel-measurable subsets of some set \(\mathcal{X} \subseteq \mathbb{R}^d\).
With some abuse of notation, we use the same symbol to denote both a distribution and its density.
Also, \(\log_+\left(x\right) \triangleq \log \max\left(x, 1\right)\), \({[\cdot]}_+ \triangleq \max\left(\cdot, 0\right) \), and \([T] \triangleq \{1, \ldots, T\} \).

\vspace{-1ex}
\subsection{SMC sampler and Feynman Kac Models}
\textit{Sequential Monte Carlo} (SMC; \citealp{dai_invitation_2022,delmoral_sequential_2006,chopin_introduction_2020}) is a general framework for sampling from Feynman-Kac models~\citep{delmoral_feynmankac_2004,delmoral_mean_2016}.
Consider a space $\mathcal{X}$ with a $\sigma$-finite base measure.  
Feynman-Kac models describe a change of measure between the \emph{target path distribution}
{%
\setlength{\belowdisplayskip}{1ex} \setlength{\belowdisplayshortskip}{1ex}
\setlength{\abovedisplayskip}{1ex} \setlength{\abovedisplayshortskip}{1ex}
\begin{align*}
    P^{\vtheta}_{0\text{:}T}(\mathrm{d} x_{0:T})
    \! \triangleq \!
    \frac{1}{Z_T^{\vtheta}} \Bigg\{ G_0\left(\vx_0\right) \prod^T_{t=1} G_t^{\vtheta}\left(\vx_{t-1}, \vx_t\right)\Bigg\}
    Q_{0\text{:}T}^{\vtheta}(\mathrm{d} \vx_{0:T})
\end{align*}
}%
and the
\emph{proposal path distribution}
{%
\setlength{\belowdisplayskip}{1ex} \setlength{\belowdisplayshortskip}{1ex}
\setlength{\abovedisplayskip}{1ex} \setlength{\abovedisplayshortskip}{1ex}
\begin{align*}
    Q^{\vtheta}_{0\text{:}T}(\mathrm{d}\vx_{0:T})
    \triangleq
    q\left(\mathrm{d}\vx_0\right) {\prod^T_{t=1}} K_t^{\vtheta}\left(\vx_{t-1}, \mathrm{d}\vx_t\right) \; , \; \text{where} 
\end{align*}
}%
\begin{center}
\vspace{-3ex}
   {\begingroup
    \setlength\tabcolsep{2pt} 
  \begin{tabular}{cp{0.8\linewidth}}
    \(q\) & is the \emph{reference} or initial proposal distribution, \\
    \({(K_t^{\vtheta})}_{t \in [T]}\) & are Markov kernels parameterized with \(\vtheta\), and \\
    \({(G_t^{\vtheta})}_{t \in [T]}\) & are non-negative \(Q\)-measurable functions referred to as \textit{potentials}. \\
  \end{tabular}
  \endgroup}
\vspace{-3ex}
\end{center}
The (intermediate) normalizing constant at time $t \in [T]$ is 
{%
\setlength{\belowdisplayskip}{1ex} \setlength{\belowdisplayshortskip}{1ex}
\setlength{\abovedisplayskip}{0ex} \setlength{\abovedisplayshortskip}{0ex}
\[
    Z_t^{\vtheta} = \int_{\mathcal{X}^{t+1}} G_0\left(\vx_0\right) \prod^t_{s = 1} G_{s}^{\vtheta}\left(\vx_{s-1}, \vx_{s}\right)
    Q^{\vtheta}_{0\text{:}t}(\mathrm{d}\vx_{0:t})
    \, .
\]
}%
The goal is often to draw samples from $P_{0:T}^{\vtheta}$ or to estimate the normalizing constant $Z_T^{\vtheta}$.

At time $t=0$, SMC draws $N$ particles $\rvvx_0^{1:N}$ from the initial proposal $q$, each assigned with equal weights $\rvvw_0^n = 1$ for $n \in [N]$.  
At each subsequent time $t \in[T]$, particles $\rvvx_{t-1}^{1:N}$ are transported via the transition kernel $K_t^\theta$, reweighted using the potentials $G^\theta_t$, and optionally resampled to discard particles with low weights. See the textbook by \citet{chopin_introduction_2020} for more details.

At each time $t \in [T]$, the SMC sampler outputs a set of weighted particles \((\bar{\rvw}_t^{1\text{:}N}, \rvvx_t^{1\text{:}N})\), where \(\bar{\rvw}_t^n \triangleq \rvw_t^n / \sum_{m \in [N]} \rvw^{m}_{t}\), along with an estimate of the normalizing constant \(\widehat{\rvZ}_{t, N}\).
Under suitable conditions, SMC samplers return consistent estimates of the expectation \(P_t\left(\varphi\right)\) of a measurable function $\varphi: \mathcal{X} \to \mathbb{R}$ over the marginal $P_t$ and the normalizing constant $Z_t$~\citep{delmoral_feynmankac_2004,delmoral_mean_2016}:
{%
\setlength{\belowdisplayskip}{0ex} \setlength{\belowdisplayshortskip}{0ex}
\setlength{\abovedisplayskip}{1ex} \setlength{\abovedisplayshortskip}{1ex}
\begin{align*}
    \sum_{n \in [N]} \bar{\rvw}_t^n \, \varphi\left(\rvvx_t^n\right) 
    \xrightarrow{N \to \infty} P_t\left(\varphi\right)
    \;\;\text{and}\;\;
    \widehat{\rvZ}_{t, N} &\xrightarrow{N \to \infty} Z_t \, .
\end{align*}
}%
Different choices of $G_{0:T}$, $q$, and $K^\theta_{0:T}$ can describe the same target path distribution $P_{0:T}^\theta$ but result in vastly different SMC algorithm performance. Proper tuning is thus essential for achieving high efficiency and accuracy.

\vspace{-1ex}
\subsection{Sequential Monte Carlo for Static Models}\label{section:static_models}

In this work, we focus on SMC samplers for \emph{static models} 
 where we target a ``static'' distribution $\pi$, whose density \(\pi : \mathcal{X} \to \mathbb{R}_{> 0}\) is known up to a normalizing constant $Z$ through the unnormalized density function \(\gamma : \mathcal{X} \to \mathbb{R}_{> 0}\):
{%
\setlength{\belowdisplayskip}{1ex} \setlength{\belowdisplayshortskip}{1ex}
\setlength{\abovedisplayskip}{1ex} \setlength{\abovedisplayshortskip}{1ex}
\[
    \pi\left(\vx\right) \triangleq \frac{\gamma\left(\vx\right)}{Z}, \quad \text{where}\quad Z = {\textstyle\int}_{\mathcal{X}} \gamma\left(\vx\right) \mathrm{d}\vx \, .
\]
}%
This can be embedded into a sequential inference targeting a ``path'' of distributions 
\(\left(\pi_0, \ldots, \pi_T\right)\), 
where the endpoints are constrained as \(\pi_0 = q\) and \(\pi_T = \pi\).
It is common to choose the \emph{geometric annealing path} by setting the density of \(\pi_t\), for \(t \in \left\{ 0, \ldots, T \right\}\), as
{%
\setlength{\belowdisplayskip}{1ex} \setlength{\belowdisplayshortskip}{1ex}
\setlength{\abovedisplayskip}{1ex} \setlength{\abovedisplayshortskip}{1ex}
\begin{align}
    \pi_t\left(\vx\right) \propto \gamma_t\left(\vx\right) \triangleq {q\left(\vx\right)}^{1 - \lambda_t} {\gamma\left(\vx\right)}^{\lambda_t} ,
    \label{eq:geometric_annealing}
\end{align}
}%
where the ``temperature schedule'' \({(\lambda_t)}_{t \in \left\{0, \ldots, T\right\}}\) is monotonically increasing as \(0 = \lambda_0 < \ldots < \lambda_T = 1\). 

To implement an SMC sampler that simulates the path ${\left(\pi_t\right)}_{t \in [T]}$, we introduce a sequence of \emph{backward} Markov kernels ${(L^\theta_{t-1})}_{t \in [T]}$ (and refer to the ${(K^\theta_t)}_{t \in [T]}$ as \emph{forward} kernels). 
We can then form a Feynman-Kac model by setting the potential for \(t \geq 1\) as
{%
\setlength{\belowdisplayskip}{1ex} \setlength{\belowdisplayshortskip}{1ex}
\setlength{\abovedisplayskip}{1ex} \setlength{\abovedisplayshortskip}{1ex}
\begin{align}
    G_t^{\vtheta}\left(\vx_{t-1}, \vx_t\right) 
    =
    \frac{Z_{t-1}}{Z_t}
    \frac{\mathrm{d} \left( \pi_{t} \otimes L_{t-1}^{\vtheta} \right) }{\mathrm{d} \left( \pi_{t-1} \otimes K_{t}^{\vtheta} \right) } (\vx_{t-1}, \vx_t) \, .
    \label{eq:potential_measure_static_models}
\end{align}
}%
As long as the condition
{%
\setlength{\belowdisplayskip}{1ex} \setlength{\belowdisplayshortskip}{1ex}
\setlength{\abovedisplayskip}{1ex} \setlength{\abovedisplayshortskip}{1ex}
\begin{equation}
\pi_{t} \otimes L_{t-1}^{\vtheta} \ll \pi_{t-1} \otimes K_{t}^{\vtheta} \label{eq:backward_absolute_continuity}
\end{equation}
}%
holds for all \(t \geq 0\) and the Radon-Nikodym derivative can be evaluated pointwise, \cref{eq:potential_measure_static_models} is equivalent to 
{%
\setlength{\belowdisplayskip}{1ex} \setlength{\belowdisplayshortskip}{1ex}
\setlength{\abovedisplayskip}{1ex} \setlength{\abovedisplayshortskip}{1ex}
\begin{align}
    G_t^{\vtheta}\left(\vx_{t-1}, \vx_t\right) 
    &= \frac{ \gamma_{t}\left(\vx_t\right) L^{\theta}_{t-1}\left(\vx_{t}, \vx_{t-1}\right) }{ \gamma_{t-1}\left(\vx_{t-1}\right) K_{t}^{\theta}\left(\vx_{t-1}, \vx_{t}\right) } \, .
    \label{eq:potential_static_models}
\end{align}
}%
Other than the constraint \cref{eq:backward_absolute_continuity}, the choice of forward and backward kernels is a matter of design.
Typically, the forward kernel \(K_t^{\vtheta}\) is selected as a $\pi_t$-invariant (\textit{i.e.}, adjusted) MCMC kernel~\citep{delmoral_sequential_2006}, such that the particles following \(P_{t-1}\) are transported to approximately follow \(\pi_t\).
This Feynman-Kac model targets the path measure
{%
\setlength{\belowdisplayskip}{2ex} \setlength{\belowdisplayshortskip}{2ex}
\setlength{\abovedisplayskip}{2ex} \setlength{\abovedisplayshortskip}{2ex}
\[
    P_{0\text{:}T}^{\vtheta} (\mathrm{d}\vx_{0:T}) =  \pi\left(\mathrm{d}\vx_{T}\right) {\textstyle \prod^T_{t=1}} L_{t-1}^{\vtheta}\left(\vx_{t}, \mathrm{d}\vx_{t-1}\right) \, .
\]
}%
Then, the marginal of \(\rvvx_T\) is \(\pi\), and for \(t \in [T]\), the intermediate normalizing constant $Z^\theta_t$ is precisely \(Z_t = \int_{\mathcal{X}} \gamma_t\left(\vx\right) \mathrm{d}\vx\).





\section{Adaptation Methodology}

\subsection{Adaptation Objective}\label{section:objective}
The variance of sequential Monte Carlo is minimized when the target path measure \(P\) and the proposal path measure \(Q\) are close together~\citep{delmoral_sequential_2006,gerber_negative_2019,chopin_central_2004,webber_unifying_2019,bernton_schrodinger_2019}.
A common practice has been to make them close by solving
{%
\setlength{\belowdisplayskip}{1ex} \setlength{\belowdisplayshortskip}{1ex}
\setlength{\abovedisplayskip}{1ex} \setlength{\abovedisplayshortskip}{1ex}
\[
    \minimize_{\vtheta}\; \DKL{Q^{\vtheta}_{0\text{:}T}}{P^{\vtheta}_{0\text{:}T}} \, .
\]
}%
In this work, we are interested in a scheme enabling efficient online adaptation within SMC samplers.
One could appeal to the chain rule of the KL divergence:
{%
\setlength{\belowdisplayskip}{0ex} \setlength{\belowdisplayshortskip}{0ex}
\setlength{\abovedisplayskip}{1ex} \setlength{\abovedisplayshortskip}{1ex}
\begin{align*}
    &\DKL{Q^{\vtheta}_{0\text{:}T}}{P^{\vtheta}_{0\text{:}T}} 
    \\
    &\;\;=
    \DKL{Q_0}{P_0}
    +
    \sum_{t  \in [T]} \mathbb{E}_{Q_{t-1}}  \{{ \DKL{Q_{t \mid t-1}}{P_{t \mid t-1}} } \} \, ,
\end{align*}
}%
and attempt to minimize the incremental KL terms.
Unfortunately, at each step of SMC, we have access to a particle approximation \(P_{t-1}\) but not the marginal path proposal \(Q_{t-1}\) due to resampling.
Instead, we can consider the \textit{forward/inclusive} KL divergence
{%
\setlength{\belowdisplayskip}{0ex} \setlength{\belowdisplayshortskip}{0ex}
\setlength{\abovedisplayskip}{1ex} \setlength{\abovedisplayshortskip}{1ex}
\begin{align*}
    &\DKL{P^{\vtheta}_{0\text{:}T}}{Q^{\vtheta}_{0\text{:}T}} 
    \\
    &\;\;=
    \DKL{P_0}{Q_0}
    +
    \sum_{t \in [T]} \mathbb{E}_{P_{t-1}}  \{{ \DKL{P_{t \mid t-1}}{Q_{t \mid t-1}} } \} \, .
\end{align*}
}%
Estimating the incremental forward KL divergence \(\DKL{P_{t \mid t-1}}{Q_{t \mid t-1}}\), however, is difficult due to the expectation taken over \(P_{t \mid t - 1}\), often resulting in high variance.
Therefore, we would like to have a proper divergence measure between the joint paths that
\begin{enumerate*}
    \item[(i)] decomposes into \(T\) incremental terms like the chain rule of the KL divergence, 
    \item[(ii)] is easy to estimate, just like the exclusive KL divergence.
\end{enumerate*}

Notice that naively summing the incremental exclusive KL divergences as
{%
\setlength{\belowdisplayskip}{0ex} \setlength{\belowdisplayshortskip}{0ex}
\setlength{\abovedisplayskip}{1ex} \setlength{\abovedisplayshortskip}{1ex}
\begin{alignat*}{4}
    &
    \mathrm{D}_{\mathsf{path}}\left(P_{0\text{:}T}, Q_{0\text{:}T}\right)
    \nonumber
    \\
    &
    \triangleq 
    \mathrm{D}_{\mathrm{KL}}\left( Q_{0}, P_{0} \right)
    +
    \sum_{t \in [T]}
    \mathbb{E}_{P_{0\text{:}t-1}} 
    \left\{ 
        \mathrm{D}_{\mathrm{KL}} \left( Q_{t \mid 0\text{:}t-1}, P_{t \mid 0\text{:}t - 1} \right)
    \right\} ,
\end{alignat*}
}%
satisfies both requirements and turns out to be a valid divergence between path measures:

\begin{theoremEnd}[category=pathdivergence]{proposition}\label{thm:pathdivergence}
    Consider joint distributions \(Q_{0\text{:}T}, P_{0\text{:}T}\). 
    Then \(\mathrm{D}_{\mathsf{path}}\) satisfies the following:
    \begin{enumerate}[itemsep=0ex]
        \vspace{-2ex}
        \item[(i)] \(\mathrm{D}_{\mathsf{path}}\left(P_{0\text{:}T}, Q_{0\text{:}T}\right) \geq 0\) for any \(Q_{0\text{:}T}, P_{0\text{:}T}\).
        \item[(ii)] \(\mathrm{D}_{\mathsf{path}}\left(P_{0\text{:}T}, Q_{0\text{:}T}\right) = 0\) if and only if \(P_{0\text{:}T} = Q_{0\text{:}T}\).
        \vspace{-2ex}
    \end{enumerate}
\end{theoremEnd}
\begin{proofEnd}
    (i) is trivial. (ii) follows from the fact that if \(P_{0\text{:}T} = Q_{0\text{:}T}\), the incremental KL divergences are all 0, while if \(P_{0\text{:}T} \neq Q_{0\text{:}T}\), \( \mathrm{D}_{\mathsf{path}}\left( Q_{0 \text{:} T}, P_{0 \text{:} T} \right) \geq \mathrm{D}_{\mathsf{path}}\left( Q_{t \mid 0 \text{:} t - 1}, P_{t \mid 0 : \text{t - 1}} \right) > 0\) for any \(t \in [T]\) by the fact that the conditional KL divergence is 0 if and only if \(Q_{t \mid 0 \text{:} t - 1} = P_{t \mid 0 \text{:} t - 1}\).
\end{proofEnd}

%
    

\vspace{-2ex}
\paragraph{Ideal Adaptation Scheme.}
Therefore, we propose to adapt SMC samplers by minimizing $\mathrm{D}_{\mathsfit{path}}$.
{%
\setlength{\belowdisplayskip}{1ex} \setlength{\belowdisplayshortskip}{1ex}
\setlength{\abovedisplayskip}{1ex} \setlength{\abovedisplayshortskip}{1ex}
\begin{align}
    \minimize_{\theta} \; 
    \mathrm{D}_{\mathsf{path}}\left(P_{0\text{:}T}^{\theta}, Q_{0\text{:}T}^{\theta}\right) \, .
    \label{eq:objective}
\end{align}
}%
The key convenience of this objective is that for most cases that we will consider, the tunable parameters \(\vtheta\) decompose into a sequence of subsets \(\vtheta = \left(\theta_1, \ldots, \theta_T\right)\), where at any \(t \in [T]\), \(K_t\) and \(G_t\) depend on only \(\theta_{1\text{:}t}\) while \(\theta_t\) dominates their variance contribution.
This suggests a greedy scheme where we solve for a subset of parameters at a time.
By fixing \(\vtheta_{1\text{:}t-1}\) from previous iterations, we solve for
{%
\setlength{\belowdisplayskip}{0ex} \setlength{\belowdisplayshortskip}{0ex}
\setlength{\abovedisplayskip}{1ex} \setlength{\abovedisplayshortskip}{1ex}
\begin{align}
    \theta_t
    =
    \argmin_{\theta_t} \; \mathbb{E}_{P_{t-1}^{\theta_{1\text{:}t-1}}} \left\{ \DKL{ Q_{t \mid t-1}^{\theta_{1\text{:}t}} }{ P_{t \mid t - 1}^{\theta_{1\text{:}t}} } \right\} \, .
    \label{eq:greedy_objective}
\end{align}
}%
This greedy strategy does not guarantee a solution to the joint optimization in \cref{eq:objective}.
However, as long as greedily setting \(\theta_t\) does not negatively influence future and past steps, which is reasonable for the kernels we consider, this strategy should yield a good approximate solution.

\vspace{-2ex}
\paragraph{Relation with Annealed Flow Transport.}
For the static model case, \citet{arbel_annealed_2021} noted that the objective in \cref{eq:greedy_objective} approximates 
{%
\setlength{\belowdisplayskip}{1ex} \setlength{\belowdisplayshortskip}{1ex}
\setlength{\abovedisplayskip}{1ex} \setlength{\abovedisplayshortskip}{1ex}
\begin{align}
    \hspace{-0.5em}
    \mathbb{E}_{\rvvx_{t-1} \sim \pi_{t-1}}
    \! \left\{
    \mathrm{D}_{\mathrm{KL}}
    (
        \pi_{t-1} \otimes K_{t}^{\vtheta_{1\text{:}t}} , \,
        \pi_t \otimes L_{t-1}^{\vtheta_{1\text{:}t}}
        \!\mid\!
        \rvvx_{t-1}
    )
    \right\} \, .
    \label{eq:static_model_objective_approximation}
\end{align}
}%
Furthermore,~\citet[\S 3]{matthews_continual_2022} showed that, when \(K_t\) is taken to be a normalizing flow \(\mathcal{F}_t\)~\citep{papamakarios_normalizing_2021} and \(L_{t-1} = \mathcal{F}_t^{-1}\), there exists a joint objective associated with \cref{eq:static_model_objective_approximation}, 
{%
\setlength{\belowdisplayskip}{1ex} \setlength{\belowdisplayshortskip}{1ex}
\setlength{\abovedisplayskip}{1ex} \setlength{\abovedisplayshortskip}{1ex}
\begin{align}
    \mathrm{D}_{\mathrm{KL}} \left( {\textstyle\prod^{T}_{t=1}} \mathcal{F}^{\#} \pi_{t-1} , \; {\textstyle\prod^T_{t=1}} \pi_t \right) \, ,
    \label{eq:static_model_joint_objective_approximation}
\end{align}
}%
where \(\mathcal{F}_t^{\#}\pi_{t-1}\) is the pushforward measure of \(\pi_{t-1}\) pushed through \(\mathcal{F}_t\).
Our derivation of \cref{eq:greedy_objective} shows that it is not just minimizing an approximation to some joint objective as~\cref{eq:static_model_joint_objective_approximation}, but a proper divergence between the joint target \(P\) and joint path \(Q\).
This general principle applies to all Feynman-Kac models, not just those for static models.

\vspace{-2ex}
\paragraph{Incremental KL Objective for Feynman-Kac Models.}
For Feynman-Kac models, \cref{eq:greedy_objective} takes the form
{\small%
\setlength{\belowdisplayskip}{1ex} \setlength{\belowdisplayshortskip}{1ex}
\setlength{\abovedisplayskip}{1ex} \setlength{\abovedisplayshortskip}{1ex}
\begin{align*}
    &\mathbb{E}_{P_{1\text{:}t-1}^{\theta_{1\text{:}t-1}}} \left\{ \DKL{ Q_{t \mid 1\text{:}t-1}^{\theta_{1\text{:}t}} }{ P_{t \mid 1\text{:}t - 1}^{\theta_{1\text{:}t}} } \right\}
    \\
    &=
    \int
    \int \log\left( \frac{ \mathrm{d} Q_{t \mid t-1}^{\theta_{1\text{:}t}} }{ \mathrm{d} P_{t \mid t-1}^{\theta_{1\text{:}t}} } \right)
    \, \mathrm{d} Q^{\theta_{1\text{:}t}}_{t \mid t - 1}
    \mathrm{d} P^{\theta_{1\text{:}t-1}}_{t-1}
    \\
    &=
    \int 
    \int
    \!
    - \log G_t^{\theta_{1\text{:}t}}\left(\vx_{t-1}, \vx_t\right)  K_t^{\theta_{1\text{:}t}}\left(\vx_{t-1}, \mathrm{d}\vx_t\right)
    \, 
    \mathrm{d} P^{\theta_{1\text{:}t-1}}_{t-1}
    \\
    &\qquad
    \!
    - \log \left( {Z^{\theta_{1\text{:}t}}_{t}}/{Z^{\theta_{1\text{:}t-1}}_{t-1}} \right) \, .
\end{align*}
}%
The normalizing constant ratio forms a telescoping sum such that the path divergence becomes
{%
\setlength{\belowdisplayskip}{0ex} \setlength{\belowdisplayshortskip}{0ex}
\setlength{\abovedisplayskip}{0ex} \setlength{\abovedisplayshortskip}{0ex}
\begin{alignat*}{4}
    &\mathrm{D}_{\mathsf{path}}\left(P_{0\text{:}T}^{\theta}, Q_{0\text{:}T}^{\theta}\right)
    = \DKL{Q_0}{P_0} - \log \frac{Z_{T}}{Z_{0}} 
    \\
    &\quad+
    \sum_{t \in [T]}
    \mathbb{E}_{(\rvvx_{t-1}, \rvvx_t) \sim P^{\theta_{1\text{:}t-1}}_{t-1} \otimes K^{\theta_{1\text{:}t}}_{t} }
    \left\{
    - \log G_t^{\theta_{1\text{:}t}}\left(\rvvx_{t-1}, \rvvx_t\right) 
    \right\}
    \, .
\end{alignat*}
}%
In practice, Feynman-Kac models are designed such that both \(Z_{T}\) and \(Z_{0}\) are fixed regardless of \(\vtheta\):
\(Z_0\) is the normalizing constant of \(q\), which is usually 1, and \(Z_T\) is the normalizing constant of the target \(P_{0\text{:}T}^{\vtheta}\).
Therefore, for such Feynman-Kac models, solving \cref{eq:greedy_objective} is equivalent to
{%
\setlength{\belowdisplayskip}{1ex} \setlength{\belowdisplayshortskip}{1ex}
\setlength{\abovedisplayskip}{1ex} \setlength{\abovedisplayshortskip}{1ex}
\begin{align}
    \hspace{-.3em}
    \theta_t
    \!=\!
    \argmin_{\theta_t}
    \mathbb{E}_{(\rvvx_{t-1}, \rvvx_t) \sim P^{\theta_{1\text{:}t-1}}_{t-1} \otimes K^{\theta_{1\text{:}t}}_{t} }
    \{
    - \!\log G_t^{\theta_{1\text{:}t}}\!\left(\rvvx_{t-1}, \rvvx_t\right) 
    \} .
    \label{eq:greedy_objective_feynman_kac}
\end{align}
}%

\subsection{General Adaptation Scheme}\label{section:adaptation}
\vspace{-1ex}
\paragraph{Estimating the Incremental KL Objective.}
Now that we have discussed our ideal objective for adaptation in \cref{eq:greedy_objective_feynman_kac}, we turn to estimating this objective in practice.
At each iteration \(t \in [T]\), we have access to a collection of weighted particles
{%
\setlength{\belowdisplayskip}{0ex} \setlength{\belowdisplayshortskip}{0ex}
\setlength{\abovedisplayskip}{0ex} \setlength{\abovedisplayshortskip}{0ex}
\[
    \sum_{n \in [N]} \frac{1}{Z_{t-1}^{\theta_{1\text{:}t-1}}} \rvw_{t-1}^n \delta_{\rvvx_{t-1}^n} \sim P_{t-1}^{\theta_{1\text{:}t-1}} \, 
\]
}%
up to a constant with respect to \(\theta_t\), \(Z_{t-1}^{\theta_{1\text{:}t-1}}\), where \(\delta_{\rvvx_{t-1}^n}\) is a Dirac measure centered on \(\rvvx_{t-1}^n\). 
Consider the case where sampling from \(K_t^{\theta_{1\text{:}t}}\) can be represented by a map \(M_t^{\vtheta_{1\text{:}t}} : \mathcal{X} \times \mathcal{E} \to \mathcal{X}\), where the randomness over the space \(\mathcal{E}\) following \(\psi : \mathcal{B}\left(\mathcal{E}\right) \to \mathbb{R}_{\geq 0}\) is captured by \(\rvvepsilon^n_t \sim \psi\) :
{%
\setlength{\belowdisplayskip}{1ex} \setlength{\belowdisplayshortskip}{1ex}
\setlength{\abovedisplayskip}{1ex} \setlength{\abovedisplayshortskip}{1ex}
\[
    \rvvx_{t}^n = M_t^{\vtheta_{1\text{:}t}}\left(\rvvx_{t-1}^n; \rvvepsilon^n_t\right) .
\]
}%
Then, up to a constant, we obtain a conditionally unbiased estimate of the expectation in \cref{eq:greedy_objective_feynman_kac} as a function of \(\vtheta\):
{%
\setlength{\belowdisplayskip}{0ex} \setlength{\belowdisplayshortskip}{0ex}
\setlength{\abovedisplayskip}{1ex} \setlength{\abovedisplayshortskip}{1ex}
\begin{align}
    &
    \hspace{-.5em}
    \widehat{\mathcal{L}}_t(\vtheta_t; \rvvx_{t-1}^{1\text{:}N}, \rvw^{1\text{:}N}_{t-1}, \rvvepsilon_t^{1\text{:}N})
    \nonumber
    \\
    &
    \hspace{-.5em}
    \;\triangleq
    -
    \sum_{n \in [N]} \bar{\rvw}^n_{t-1} \log G_t^{\vtheta_{1\text{:}t}}\left(\rvvx_{t-1}^n, M_t^{\vtheta_{1\text{:}t}}\left(\rvvx_{t-1}^n; \rvvepsilon^n_t\right)\right) .
    \label{eq:empirical_objective}
\end{align}
}%

\begin{figure}[t]
    \vspace{-1ex}
    \removelatexerror
    \input{algorithms/alg_adapted_smc}
    \vspace{-4ex}
\end{figure}

\vspace{-2ex}
\paragraph{Efficiently Optimizing the Objective.}
Directly optimizing \(\widehat{\mathcal{L}}_t\), however, is challenging:
\begin{enumerate*}
    \item[(i)] Evaluating \(\widehat{\mathcal{L}}_t\) takes \(\mathrm{O}(N)\) evaluations of the potential, which can be expensive.
    \item[(ii)] The expectation over the kernel \(K_t\) or,  equivalently, over \(\rvvepsilon_t^n \sim \psi\), is intractable. 
\end{enumerate*}
We address these issues as follows:
\begin{enumerate}[itemsep=0ex,leftmargin=1.5em]
    \vspace{-2ex}
    \item \textbf{Subsampling of Particles.}
    To reduce the \(\mathrm{O}(N)\) cost of evaluating \(\widehat{\mathcal{L}}_t\), we apply resampling over the particles according to the weights \(\rvw_{t-1}^{1\text{:}N}\) such that we end up with a smaller subset of particles of size \(B \ll N\), which remains a valid approximation of \(P_{t-1}^{\theta_{1\text{:}t-1}}\).
    Then, evaluating \(\widehat{\mathcal{L}}_t\) takes \(\mathrm{O}(B)\) evaluations of the potential.
    
    \item \textbf{Sample Average Approximation.}
    Properly minimizing the expectation over \(K_t\) requires stochastic optimization algorithms, which introduce numerous challenges related to convergence determination, step size tuning, handling instabilities, and such.
    Instead, we draw a single batch of randomness \({(\rvvepsilon_t^{b})}_{b \in [B]}\), and fix it throughout the optimization procedure.
    This sample average approximation (SAA; \citealp{kim_guide_2015})
    introduces bias in the optimized solution but enables the use of more reliable deterministic techniques.
    
    \item \textbf{Regularization.}
    Subsampling the particles results in a higher variance for estimating the objective.
    We counteract this by adding a weighted regularization term \(\tau \, \mathrm{reg}\left(\theta_t\right)\) to the objective.
    For example, for the case of step sizes at \(t > 1\) such that \(\theta_t\) contains \(h_t\), we will set \(\tau \, \mathrm{reg}\left(h_t\right) = \tau \abs{\log h_t - \log h_{t-1}}^2\), which has a smoothing effect over the tuned step size schedule.
    This also makes the objective ``more convex,'' easing optimization.
    For time \(t = 1\), where we don't have \(h_{t-1}\), we use a guess \(h_0\) instead. 
    Effective values of \(\tau\) depend on the type of kernel in question, but not much on the target problem.
    We thus used a fixed value (\cref{section:config}) throughout all our experiments.
    \vspace{-2ex}
\end{enumerate}
The high-level workflow of the proposed adaptive SMC scheme is shown in~\cref{alg:adaptive_smc}.
The notable change is the addition of the adaptation step in Line 3 (\textcolor{blue!50}{colored region}), where the tunable parameters to be used at time \(t\) are tuned to perform best at the \(t\)th SMC step, which follows the ``pre-tuning'' principle of \citet{buchholz_adaptive_2021}.
In contrast, retrospective tuning~\citep{fearnhead_adaptive_2013}, which uses parameters that performed well in the previous step, forces SMC to run with suboptimal parameters at all times.

\begin{figure}[t]
    \vspace{-1ex}
    \removelatexerror
    {\small
        \input{algorithms/alg_adapt_stepsize}
    }
    \vspace{-4ex}
\end{figure}

\vspace{-1ex}
\subsection{Algorithm for Step Size Tuning}\label{section:stepsize}
\vspace{-1ex}
Recall that for SMC samplers applied to static models (\cref{section:static_models}), the path proposal kernel is typically chosen to be an MCMC kernel.
For most popular MCMC kernels such as random walk MH~\citep{metropolis_equation_1953,hastings_monte_1970} or Metropolis-adjusted Langevin (MALA; \citealp{besag_comments_1994,rossky_brownian_1978}), the crucial tunable parameter is a scalar-valued parameter called the \textit{step size} denoted as \(h_t > 0\) for \(t \in [T]\).
In this section, we will describe a general procedure for tuning such step sizes.

\begin{figure}[t]
    \centering
    \scalebox{0.9}{
    \tikzexternalenable

\begin{tikzpicture}
    \begin{semilogxaxis}[
        xtick style={draw=none},
        ytick style={draw=none},
        xtick = {0.35, 1.2, 2.1},
        xticklabels = {\(\log \underline{h}\), \(\log \overline{h}\), \(\log h^{\infty}\)},
        yticklabels = {},
        xtick align=outside,
        ytick align=outside,
        xlabel near ticks,
        ylabel near ticks,
        major tick length=2pt,
        minor tick length=1pt,
        axis line style = thick,
        every tick/.style={black,thick},
        log basis x=10,
        xtick pos=bottom,
        ytick pos=left,
        width=0.75\columnwidth,
        height=0.55\columnwidth,
        ylabel=$\widehat{\mathcal{L}}\left(h\right)$,
        xmin=3e-2,
        xmax=4,
        ymin=0,
        ymax=35,
        xticklabel style={
           tick label style={rotate=300}, 
       }
    ]
    
    \addplot[very thick, black] coordinates {
        (0.0009118819655545162, 46.45394600587368)
        (0.0010519151149398368, 45.32240355711508)
        (0.0012134524541956588, 44.19456958820021)
        (0.0013997962741296717, 43.07041644936007)
        (0.0016147559818205847, 41.94991451961002)
        (0.0018627259759257083, 40.83303209415609)
        (0.002148775480909723, 39.719735274677795)
        (0.0024787521766663585, 38.60998786617676)
        (0.002859401742022366, 37.50375128540637)
        (0.0032985057559390915, 36.400984487637956)
        (0.003805040775511363, 35.30164392080341)
        (0.0043893618427784395, 34.20568351904834)
        (0.005063414171757494, 33.11305475165227)
        (0.005840977343195246, 32.02370674841078)
        (0.006737946999085467, 30.937586529302344)
        (0.0077726597956035365, 29.8546393750796)
        (0.008966268257436799, 28.774809386978028)
        (0.01034317319661825, 27.698040298892643)
        (0.011931522535756143, 26.62427662526438)
        (0.013763786733050402, 25.553465254041498)
        (0.015877422572448632, 24.485557628391543)
        (0.01831563888873418, 23.42051270589356)
        (0.02112827988118328, 22.358300943083826)
        (0.0243728440732796, 21.298909630845795)
        (0.028115659748972035, 20.24235000801508)
        (0.032433240894795524, 19.188666714336225)
        (0.03741385136723659, 18.137950319617694)
        (0.0431593092614526, 17.09035389695494)
        (0.049787068367863944, 16.04611491240591)
        (0.05743261926761735, 15.005584105640196)
        (0.06625225915226167, 13.969263563337169)
        (0.07642628699076807, 12.937856871268528)
        (0.08816268936235745, 11.91233515737648)
        (0.10170139230422684, 10.894024184749734)
        (0.11731916609425078, 9.884719404135575)
        (0.1353352832366127, 8.886837704862947)
        (0.15611804531597107, 7.903616744471827)
        (0.1800923121479524, 6.939376662190448)
        (0.20774818714360085, 5.999866146844652)
        (0.2396510364417758, 5.092720409167542)
        (0.2764530466295643, 4.228065107823052)
        (0.31890655732397044, 3.4193145232987767)
        (0.36787944117144233, 2.6842226615457316)
        (0.42437284567695, 2.0462630898553837)
        (0.4895416595569531, 1.5364598040831245)
        (0.5647181220077593, 1.1958020920083237)
        (0.6514390575310556, 1.0784540487431291)
        (0.751477293075286, 1.2560073434497268)
        (0.8668778997501816, 1.8230784096770454)
        (1.0, 2.904689816213706)
        (1.1535649948951077, 4.666105765861593)
        (1.33071219744735, 7.325823079850404)
        (1.5350630092552098, 11.172786000261803)
        (1.770794952435155, 16.589245227535265)
        (2.042727070266142, 24.08097299729315)
    };
    \draw[fill=ibmblue40, fill opacity=0.3, draw=none] ({axis cs:0.03,0} |- {axis description cs:0,0}) 
             rectangle ({axis cs:0.35,0} |- {axis description cs:0,1}) node[pos=0.5,rotate=90,text opacity=1.0,text=ibmblue50] {\scriptsize{continuous-time regime}}
             ;
             
    \draw[fill=ibmpurple40, fill opacity=0.3, draw=none] ({axis cs:1.0,0} |- {axis description cs:0,0}) 
             rectangle ({axis cs:2.042727070266142,0} |- {axis description cs:0,1})
             node[pos=0.5,rotate=90,text opacity=1.0,text=ibmpurple50] {\scriptsize{divergence regime}}
             ;

    \draw[fill=ibmred40, fill opacity=0.3, draw=none] ({axis cs:2.042727070266142,0} |- {axis description cs:0,0}) 
             rectangle ({axis cs:4,0} |- {axis description cs:0,1})
             node[pos=0.5,rotate=90,text opacity=1.0,text=ibmred50] {\scriptsize{degenerate regime}}
             ;
             

   \addplot[mark=*,solid,fill=white,thick] coordinates {(2.042727070266142,24.08097299729315)};
   
    \addplot[dotted, thick, mark=*, black, draw opacity=0] coordinates{(2.042727070266142,0) (2.042727070266142,40)}; 
    
    \addplot[dotted, thick, mark=*, black, draw opacity=0] coordinates{(1.0,0) (1.0,40)};  
    
    \addplot[dotted, thick, mark=*, black, draw opacity=0] coordinates{(0.35,0) (0.35,40)}; 

\end{semilogxaxis}
\end{tikzpicture}

\tikzexternaldisable
    }
    \vspace{-4ex}
    \caption{\textbf{Illustration of \cref{assumption:initialization}.} The solid line is the 
empirical objective \(\widehat{\mathcal{L}}_t\) for the LMC kernel computed using the Bones model from PosteriorDB at time \(t = 1\).}\label{fig:assumption}
    \vspace{-2ex}
\end{figure}
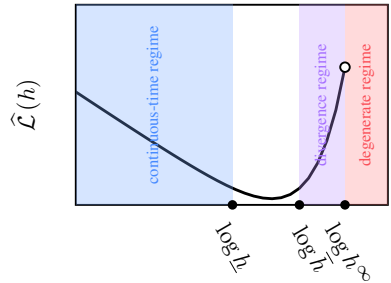

\vspace{-1ex}
\paragraph{\texttt{AdaptStepsize}.}
The adaptation routine is shown in \cref{alg:adapt_stepsize}.
First, in Line 1 and 2, we convert the optimization space to log-space; from \((0, \infty)\) to \((-\infty, \infty)\).
At the SMC iteration \(t = 1\), \(h_{\mathsf{guess}}\) is provided by the user.
Here, it is unsafe to immediately trust \(h_{\mathsf{guess}}\) to be non-degenerate (\(\mathcal{L}\left(h_{\mathsf{guess}}\right) < \infty\)).
Therefore, \(\mathrm{FindFeasible}\) in Line 4 ensures that \(\mathcal{L}\left(\exp\left(\ell\right)\right) < \infty\).
At time \(t > 1\), we set \(h_{\mathsf{guess}} = h_{t-1}\), which should be non-degenerate as long as adaptation at time \(t - 1\) went successfully.
Then we proceed to optimization in \(\mathtt{Minimize}\) (\cref{alg:minimize}), which mostly relies on the \emph{golden section search} algorithm (GSS; \citealp{avriel_golden_1968,kiefer_sequential_1953}), a gradient-free 1-dimensional optimization method.
GSS deterministically achieves an absolute tolerance of \(\epsilon > 0\).
Since we optimize in log-space, this translates to a natural \textit{relative} tolerance \(\mathrm{e}^{\pm\epsilon/2}\) with respect to the minimizer of \(\mathcal{L}\).
In our implementation and choice of \(r, c, \epsilon\) (described in \cref{section:config}), this procedure terminates after around 10 objective evaluations for \(t > 1\) and few tens of iterations for \(t = 1\).
For an in-depth discussion on the algorithm, please refer to \cref{section:algorithms}. 

\subsection{Analysis of the Algorithm for Step Size Tuning}\label{section:theory}

We provide quantitative performance guarantees of the presented step size adaptation procedures.
To theoretically model various degeneracies that can happen in the large step size regime, we will assume that the objective function \(\mathcal{L}\) takes the value of \(+\infty\) beyond some threshold.
In practice, whenever a numerical degeneracy is detected when evaluating \(\log \gamma\) (\(\mathtt{NaN}\) or \(-\infty\)), we ensure that the objective value is accordingly set as \(\infty\).
Our algorithm can deal with such cases by design, as reflected in the following assumptions:
\vspace{1ex}
\begin{assumption}\label{assumption:initialization}
    For the objective \(\mathcal{L} : (0, \infty) \to \mathbb{R} \cup \{+\infty\}\), we assume the following:
    \vspace{-2ex}
    \begin{enumerate}[itemsep=0ex]
        \item[(a)] There exists some \(h^{\infty} \in (0, \infty]\) such that \(\mathcal{L}\) is finite and continuous on \((0, h^{\infty})\) and \(+\infty\) on \([h^{\infty}, \infty)\).
        
        \item[(b)] There exists some \(\underline{h} \in (0, h^{\infty})\) such that \(\mathcal{L}\) is strictly monotonically decreasing on \((0, \underline{h}]\).
        
        \item[(c)] There exists some \(\overline{h} \in [\underline{h}, h^{\infty})\) such that \(\mathcal{L}\) is strictly monotonically increasing on \([\overline{h}, h^{\infty})\)
    \end{enumerate}
\vspace{-2ex}
\end{assumption}
Assumption (a) stipulates that degenerate regions are never disconnected and only exist in the direction of large step sizes.
Assumptions (b) and (c) represent the intuition that when the step size is too small or too large, the MCMC kernels degenerate predictably.
Most of the MCMC kernels used in practice are based on time-discretized diffusions. 
In these cases, (b) is satisfied as they approach the continuous-time regime, while (c) will be satisfied as the discretization becomes unstable (divergence).
\cref{fig:assumption} validates this intuition on one of the examples.

\begin{theoremEnd}[category=initializestepsize]{theorem}\label{thm:initialize_stepsize}
    Suppose \cref{assumption:initialization} holds.
    Then \(\mathtt{AdaptStepsize}\left( \mathcal{L}, t, h_{\mathsf{guess}}, \delta, c, r, \epsilon \right)\) returns a step size \(h \in (0, h^{\infty})\) that is \(\epsilon\)-close to a local minimum of \(\mathcal{L}\) in log-scale
    after \(\mathcal{C}_{\mathsf{feas}} + \mathcal{C}_{\mathsf{bm}} + \mathcal{C}_{\mathsf{gss}}\) objective evaluations, where, defining
    \(
        \Delta
        \triangleq
        \log_+\!\left({\overline{h} }/{ h_0 }\right)
        +
        \log_+\!\left({h_0}/{\underline{h}}\right)
    \)
    and
    \(h_0 \triangleq \min(h_{\mathsf{guess}}, h^{\infty})\),
{%
\setlength{\belowdisplayskip}{1ex} \setlength{\belowdisplayshortskip}{1ex}
\setlength{\abovedisplayskip}{1ex} \setlength{\abovedisplayshortskip}{1ex}
    \begin{align*}
        {\mathcal{C}}_{\mathsf{feas}}
        &= 
        \mathrm{O}\big\{ 
            {\delta}^{-1} \log_+ \left( \nicefrac{ h_{\mathrm{guess}}}{h^{\infty}} \right)
        \big\}
        \\
        {\mathcal{C}}_{\mathsf{bm}}
        &= 
        \mathrm{O}\big\{
        {\left(\log r\right)}^{-1}
        \log_+ \left(\Delta r c^{-1}\right)
        \big\}
        \\
        {\mathcal{C}}_{\mathsf{gss}}
        &= 
        \mathrm{O}\left\{ 
        \log_+ 
        \left(
        \left(
        r^3 \Delta + r^2 c
        \right)
        \epsilon^{-1}
        \right)
        \right\}  \; .
    \end{align*}
    }%
\end{theoremEnd}
\vspace{-1ex}
\begin{proofEnd}
    Since \cref{assumption:initialization} implies that the function \(\mathcal{L}^{\mathsf{log}}\left(h\right)\) satisfies \cref{assumption:general} with
    \begin{align*}
        f = \mathcal{L}^{\mathsf{log}}\left(h\right), \;\;
        \overline{x} = \log \overline{h}, \;\;
        \underline{x} = \log \underline{h}, \;\;
        x^{\infty} = \log h^{\infty} \; . 
    \end{align*}
    Then the result is a simple application of the lemmas in the previous sections.
    
    First, under \cref{assumption:initialization}, \cref{alg:find_feasible} can find a point \(\ell^{\prime} \in (-\infty, \log h^{\infty})\) that guarantees \(\mathcal{L}^{\mathsf{log}}\left(\ell\right) < \infty\) within 
    \begin{align*}
        \mathcal{C}_{\mathsf{feas}}
        &\leq
        \mathrm{O}\left(
        {\delta}^{-1} \log_+\left(h_{\mathrm{guess}} / h^{\infty}\right)  
        \right)
    \end{align*}
    steps.
    Furthermore, \(\ell^{\prime} = \log h_{\mathsf{guess}}\) if \(h_{\mathsf{guess}} < h^{\infty}\), and \(\ell^{\prime} < \log h^{\infty}\) otherwise.
    Then, \cref{thm:minimize} states that Line 6 of \cref{alg:adapt_stepsize} is guaranteed to find a local minimum of \(\mathcal{L}\) after \(\mathcal{C}_{\mathsf{bm}} + \mathcal{C}_{\mathsf{gss}}\) iterations, while
    \begin{align*}
        \Delta 
        &= 
        {\left[\overline{x} - x_0\right]}_+ + {\left[ x_0 - \underline{x} \right]}_+ 
        \\
        &= 
        {\left[\log \overline{h} - \ell^{\prime} \right]}_+ 
        + {\left[ \ell^{\prime} - \log \underline{h} \right]}_+ 
        \\
        &= 
        \log_+ \left( \overline{h}/h_0 \right)
        +
        \log_+ \left( h_0/\underline{h}\right) \; .
    \end{align*}
\end{proofEnd}

This suggests that, ignoring the dependence on \(r, c\), the objective query complexity of our optimization procedure is \(\mathrm{O}\left(\log \left(\Delta/\epsilon\right)\right)\).
Here, \(\Delta\) represents the difficulty of the problem, where \(\Delta \geq \abs{\log \overline{h} - \log \underline{h}}\).
In essence, \(\abs{\log \overline{h} - \log \underline{h}}\) represents how ``multimodal'' the problem is.
In practice, however, many problems result in less pessimistic objective surfaces such as follows:
\vspace{.5ex}
\begin{assumption}\label{assumption:unimodal}
    \(\mathcal{L}\) is unimodal on \((0, h^{\infty})\).
\end{assumption}
\vspace{-.5ex}
This is equivalent to assuming (b) and (c) in \cref{assumption:initialization} with \(\overline{h} = \underline{h}\) and implies there is a unique global minimum.
Then \cref{thm:initialize_stepsize} can be strengthened into the following:

\vspace{.5ex}
\begin{corollary}\label{thm:best_case_performance}
    Suppose \cref{assumption:general,assumption:unimodal} hold.
    Then \cref{thm:initialize_stepsize} holds, where \(\mathtt{AdaptStepsize}\left( \mathcal{L}, t, h_{\mathsf{guess}}, \delta, c, r, \epsilon \right)\) returns \(h \in (0, h^{\infty})\) that is \(\epsilon\)-close to the global optimum \(h^*\) and $\Delta = \abs{\log h^* - \log h_0}$.
\end{corollary}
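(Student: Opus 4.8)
The plan is to obtain the corollary as a direct specialization of \cref{thm:initialize_stepsize}, the only real content being the reduction of the two thresholds to a single point and the resulting simplification of \(\Delta\). First I would observe, exactly as in the remark preceding the statement, that unimodality of \(\mathcal{L}\) on \((0, h^{\infty})\) is equivalent to \cref{assumption:initialization}(b) and (c) with the thresholds collapsed, \(\overline{h} = \underline{h} = h^*\), where \(h^*\) denotes the unique minimizer supplied by unimodality: the function strictly decreases on \((0, h^*]\) and strictly increases on \([h^*, h^{\infty})\). Since \cref{assumption:general} is also assumed, the full hypotheses of \cref{thm:initialize_stepsize} are in force, so the theorem applies verbatim and the query-complexity bounds \(\mathcal{C}_{\mathsf{feas}}, \mathcal{C}_{\mathsf{bm}}, \mathcal{C}_{\mathsf{gss}}\) carry over unchanged.

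Next I would upgrade the conclusion from "local minimum" to the global optimum. Under unimodality there is a single turning point on \((0, h^{\infty})\), so \(h^*\) is the only local minimum and coincides with the global minimum; hence the \(\epsilon\)-in-log-scale guarantee of \cref{thm:initialize_stepsize} is automatically \(\epsilon\)-closeness to \(h^*\). It then remains to rewrite \(\Delta\). Substituting \(\overline{h} = \underline{h} = h^*\) into the definition \(\Delta = \log_+\!\left(\overline{h}/h_0\right) + \log_+\!\left(h_0/\underline{h}\right)\) gives \(\Delta = \log_+\!\left(h^*/h_0\right) + \log_+\!\left(h_0/h^*\right)\). I would finish with the elementary identity \(\log_+\!\left(a\right) + \log_+\!\left(1/a\right) = \abs{\log a}\) valid for any \(a > 0\): splitting into \(a \geq 1\) and \(a < 1\), exactly one of the two \(\log_+\) terms is nonzero, and in either case the sum equals \(\abs{\log a}\). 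Taking \(a = h^*/h_0\) yields \(\Delta = \abs{\log h^* - \log h_0}\), as claimed.

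There is no genuine obstacle here, as the corollary is a specialization rather than a new argument; the only places needing a line of care are verifying that unimodality really does furnish the two-sided monotonicity demanded by \cref{assumption:initialization}(b),(c) so that the collapse \(\overline{h} = \underline{h}\) is legitimate, and checking the \(\log_+\) identity above. Both are immediate from the respective definitions, so the proof is short.
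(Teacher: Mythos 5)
Your proposal is correct and takes essentially the same route as the paper, which treats the corollary as an immediate specialization of \cref{thm:initialize_stepsize}: unimodality collapses the thresholds to \(\overline{h} = \underline{h} = h^*\), so the unique local minimum is the global one (this is also why GSS returns a point \(\epsilon\)-close to the global minimizer), and substituting into \(\Delta = \log_+\left(\overline{h}/h_0\right) + \log_+\left(h_0/\underline{h}\right)\) yields \(\abs{\log h^* - \log h_0}\). The \(\log_+\) identity you check is exactly the computation left implicit in the paper's statement.
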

\vspace{-.5ex}

Note that, at \(t > 1\), it is sensible to set \(h_{\mathsf{guess}} \leftarrow h_{t-1}\) since \(\pi_{t-1} \approx \pi_{t}\) by design.
Therefore, after \(t = 1\), \(\mathtt{AdaptStepsize}\) will run in a ``warm start'' regime where \(\Delta \approx 0\).
For instance, assume the initial guess is warm such that \(\abs{\log h_{\mathsf{guess}} - \log h^*} \leq \epsilon\) and $h_{\mathsf{guess}} \in (0, h^{\infty})$.
Then \cref{thm:best_case_performance} states that the number of objective evaluations will be $\mathrm{O}\left\{ \log_+\left( r^2 c \epsilon^{-1} + r^3\right) \right\}$.

The parameters of $\mathtt{AdaptStepsize}$, $r$ and $c$, must balance the performance of both the warm and cold start cases.
For a warm start, \(c r^{-1} = \mathrm{O}\left(\epsilon\right)\) optimizes performance.
For a cold start, \(r\) needs to be large enough to keep the \((\log r)^{-1}\) term in \(\mathcal{C}_{\mathsf{bm}}\) small.
Thus, leaning towards making \(c\) small and \(r\) moderately large balances both cases.
The values we use in the experiments are shown in \cref{section:setup_smc}.


\vspace{-1ex}
\section{Implementations}\label{section:implementations}
\vspace{-1ex}

Based on the procedure in \cref{section:stepsize}, we now describe complete implementations of adaptive SMC samplers.
Here, we will focus on the static model setting (\cref{section:static_models}), where the main objective is tuning of the MCMC kernels \({(K_t^{\vtheta})}_{t \in [T]}\).

\vspace{-1ex}
\subsection{SMC with Langevin Monte Carlo}\label{section:implementation_lmc}
First, we consider SMC with Langevin Monte Carlo (LMC;~\citealp{grenander_representations_1994,rossky_brownian_1978,parisi_correlation_1981}), also known as the unadjusted Langevin algorithm.
LMC forms a kernel \(K_t : \mathbb{R}^d \times \mathcal{B}\left(\mathbb{R}^d\right) \to \mathbb{R}_{>0}\) on the state space \(\mathcal{X} = \mathbb{R}^d\), which, for $s \geq 0$, simulates the Langevin stochastic differential equation (SDE)
{%
\setlength{\belowdisplayskip}{1ex} \setlength{\belowdisplayshortskip}{1ex}
\setlength{\abovedisplayskip}{1ex} \setlength{\abovedisplayshortskip}{1ex}
\begin{align}
    \mathrm{d}\rvvx_{s} = \nabla \log \pi_t\left(\rvvx_s\right) \mathrm{d}s + \sqrt{2} \, \mathrm{d}B_s,
    \label{eq:ula}
\end{align}
}%
where \({(B_s)}_{s \geq 0}\) is Brownian motion.
Under appropriate conditions on the target \(\pi_t\), it is well known that the stationary distribution of the process \({(\rvvx_s)}_{s \geq 0}\) is \(\pi_t\), where converges exponentially fast in total variation~\citep[Thm 2.1]{roberts_exponential_1996}.
The Euler-Maruyama discretization of \cref{eq:ula} yields a Markov kernel
{%
\setlength{\belowdisplayskip}{1ex} \setlength{\belowdisplayshortskip}{1ex}
\setlength{\abovedisplayskip}{1ex} \setlength{\abovedisplayshortskip}{1ex}
\[
    K_{t}^h\left(\vx, \mathrm{d}\vx^{\prime}\right) = \mathcal{N}\left(\mathrm{d}\vx^{\prime}; \, \vx + h \nabla \log \pi_t\left(\vx\right), 2 h \, \mathrm{I}_d \right) \, ,
\]
}%
where \(h > 0\) is the step size, which conveniently has a tractable density with respect to the Lebesgue measure.

Note that LMC is an \textit{approximate} MCMC algorithm; for any \(h > 0\), the stationary distribution of \(K_t^h\) is only \textit{approximately} \(\pi_t\). 
This contrasts with its MH-adjusted counterpart MALA~\citep{besag_comments_1994,roberts_exponential_1996,rossky_brownian_1978}, which is stationary on \(\pi_t\).

\vspace{-2ex}
\paragraph{Backward Kernel.}
For the sequence of backward kernels \({( L_{t-1}^{\vtheta} )}_{t = 2, \ldots, T}\), multiple choices are possible.
For instance, in the literature, a typical choice is \(L_{t-1}^{h_t} = K_{t}^{h_t}\).
In this work, we instead take the choice of
{%
\setlength{\belowdisplayskip}{1ex} \setlength{\belowdisplayshortskip}{1ex}
\setlength{\abovedisplayskip}{1ex} \setlength{\abovedisplayshortskip}{1ex}
\[
    L_{t-1}^{h_{t-1}}\left(\vx_t, \vx_{t-1}\right) \triangleq K_{t-1}^{h_{t-1}}\left(\vx_t, \vx_{t-1}\right) \, ,
\]
}%
which we call the ``time-correct forward kernel.''
Compared to more popular alternatives, this choice results in significantly lower variance.
(An in-depth discussion can be found in \cref{section:backward}.)
{%
\setlength{\belowdisplayskip}{1ex} \setlength{\belowdisplayshortskip}{1ex}
\setlength{\abovedisplayskip}{0ex} \setlength{\abovedisplayshortskip}{0ex}
The resulting potentials are
\begin{align*}
    G_1^{h_1}\left(\vx_{0}, \vx_1\right) &= \frac{\gamma_1\left(\vx_1\right)}{ K_1^{h_1}\left(\vx_0, \vx_1\right) } \\
    G_t^{h_{t-1},h_t}\left(\vx_{t-1}, \vx_t\right) &= 
    \frac{ \gamma_{t}\left(\vx_t\right) L^{h_{t-1}}_{t-1}\left(\vx_{t}, \vx_{t-1}\right) }{ \gamma_{t-1}\left(\vx_{t-1}\right) K_{t}^{h_t}\left(\vx_{t-1}, \vx_{t}\right) } \, ,
\end{align*}
}%
where, at each step \(t \in [T]\), we optimize for \(h_t\) using the general step size tuning procedure described in~\cref{section:stepsize} while the backward kernel re-uses the tuned parameter $h_{t-1}$ from the previous iteration.

\begin{figure}[t]
    \vspace{-2ex}
    \removelatexerror
    {\small
    \input{algorithms/alg_klmc_adapt_stepsize}
    }
    \vspace{-5ex}
\end{figure}

\begin{figure*}[h]
    \vspace{-1ex}
    \subfloat[Pines]{
        \hspace{-1em}
        \includegraphics[]{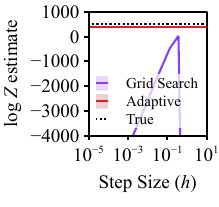}
        \vspace{-2ex}
    }
    \subfloat[Capture]{
        \hspace{-1em}
        \includegraphics[]{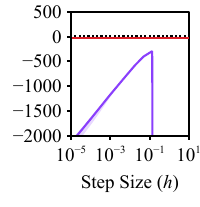}
        \vspace{-2ex}
    }
    \subfloat[Science]{
        \hspace{-1em}
        \includegraphics[]{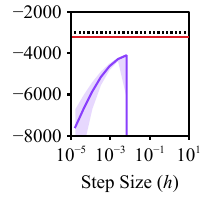}
        \vspace{-2ex}
    }
    \subfloat[Three Men]{
        \hspace{-1em}
        \includegraphics[]{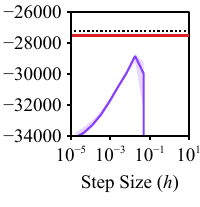}
        \vspace{-2ex}
    }
    \subfloat[Rats]{
        \hspace{-1em}
        \includegraphics[]{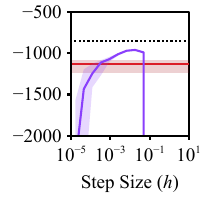}
        \vspace{-2ex}
    }
    \vspace{-1.5ex}
    \caption{
        \textbf{SMC-LMC with adaptive tuning v.s. fixed step sizes.}
        The solid lines are the median of the estimates of \(\log Z\), while the colored regions are the \(80\%\) empirical quantiles computed over 32 replications.
    }\label{fig:fixed_stepsize_smcula}
    \vspace{-2ex}
\end{figure*}

\begin{figure*}[h]
    \subfloat[Pines]{
        \hspace{-1em}
        \includegraphics[]{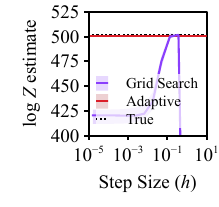}
        \vspace{-2ex}
    }
    \subfloat[Capture]{
        \hspace{-1em}
        \includegraphics[]{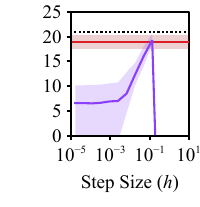}
        \vspace{-2ex}
    }
    \subfloat[Science]{
        \hspace{-1em}
        \includegraphics[]{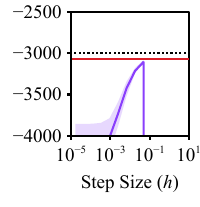}
        \vspace{-2ex}
    }
    \subfloat[Three Men]{
        \hspace{-1em}
        \includegraphics[]{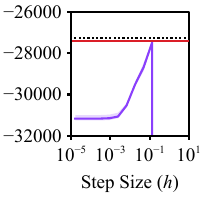}
        \vspace{-2ex}
    }
    \subfloat[Rats]{
        \hspace{-1em}
        \includegraphics[]{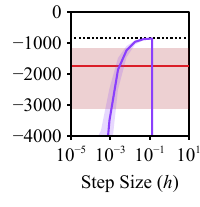}
        \vspace{-2ex}
    }
    \vspace{-1.5ex}
    \caption{
        \textbf{SMC-KLMC with adaptive tuning v.s. fixed step sizes and refreshment rates.}
        For SMC-KLMC with fixed parameters \(h, \rho\), we show the result of the best-performing refreshment rate.
        The solid lines are the median of the estimates of \(\log Z\), while the colored regions are the \(80\%\) empirical quantiles computed over 32 replications.
    }\label{fig:fixed_stepsize_smcuha}
    \vspace{-3ex}
\end{figure*}

\vspace{-1ex}
\subsection{SMC with Kinetic Langevin Monte Carlo}\label{section:klmc}
\vspace{-1ex}
Next, we consider a variant of the LMC that operates on the augmented state space \(\mathcal{Z} = \mathcal{X} \times \mathcal{X}\), where, for \(t \geq 0\), each state of the Feynman-Kac model is denoted as \(\vz_t = (\vx_t, \vv_t) \in \mathcal{Z}\), \(\vx_t, \vv_t \in \mathcal{X}\), \(\mathcal{X} = \mathbb{R}^d\), and the target is
{%
\setlength{\belowdisplayskip}{1ex} \setlength{\belowdisplayshortskip}{1ex}
\setlength{\abovedisplayskip}{1ex} \setlength{\abovedisplayshortskip}{1ex}
\[
    \pi_t^{\mathsf{klmc}}\left(\vx, \vv\right) \triangleq \pi_t\left(\vx\right) \mathcal{N}\left(\vv; \mathrm{0}_d, \mathrm{I}_d\right) \, .
\]
}%
Evidently, the \(\vx\)-marginal of the augmented target is \(\pi\).
Therefore, a Feynman-Kac model targeting \(\pi_t^{\mathsf{klmc}}\) is also targeting \(\pi\) by design.
Kinetic Langevin Monte Carlo (KLMC; \citealp{horowitz_generalized_1991,duane_hybrid_1987}), also referred to as underdamped Langevin, for $s \geq 0$, is given by
{%
\setlength{\belowdisplayskip}{.5ex} \setlength{\belowdisplayshortskip}{.5ex}
\setlength{\abovedisplayskip}{.5ex} \setlength{\abovedisplayshortskip}{.5ex}
\begin{align*}
    \mathrm{d}\rvvx_s &=  \rvvv_s \mathrm{d}s
    \\
    \mathrm{d}\rvvv_s &= \nabla \log \pi\left(\rvvv_{s}\right) \mathrm{d}s -  \eta \rvvv_s \mathrm{d}s + \sqrt{2 \eta} \, \mathrm{d}B_s \, ,
\end{align*}
}%
where \(\eta > 0\) is a tunable parameter called the \textit{damping} coefficient.
The stationary distribution of the joint process \({(\rvvx_s, \rvvv_s)}_{s \geq 0}\) is then \(\pi^{\mathsf{klmc}}\).
This continuous time process corresponds to the ``Nesterov acceleration~\citep{nesterov_method_1983,su_differential_2016}'' of~\cref{eq:ula}~\citep{ma_there_2021}, meaning that the process should converge faster.
We thus expect KLMC to reduce the required number of steps \(T\) compared to LMC.

To simulate this, we consider the OBABO discretization~\citep{leimkuhler_rational_2013}, which operates in a Gibbs scheme~\citep{geman_stochastic_1984}:
its kernel
{%
\setlength{\belowdisplayskip}{0.5ex} \setlength{\belowdisplayshortskip}{0.5ex}
\setlength{\abovedisplayskip}{0.5ex} \setlength{\abovedisplayshortskip}{0.5ex}
\begin{align*}
    &K_t\left( \vz_{t-1}, \mathrm{d}\vz_t\right)
    \\
    &\;=
    R^{\rho}\left( \vv_{t-1}, \mathrm{d}\vv_{t - \nicefrac{1}{2}} \right)
    S_t^{h, L}\left( 
        \left(\vx_{t-1}, \vv_{t - \nicefrac{1}{2}} \right), 
        \left( \mathrm{d} \vx_{t}, \mathrm{d} \vv_{t} \right)
    \right)
\end{align*}
}%
is a composition of the \textit{momentum refreshment kernel}
{%
\setlength{\belowdisplayskip}{0.5ex} \setlength{\belowdisplayshortskip}{0.5ex}
\setlength{\abovedisplayskip}{0.5ex} \setlength{\abovedisplayshortskip}{0.5ex}
\begin{align*}
    R^{\rho}\left(\vv_{t-1}, \mathrm{d}\vv_{t - \nicefrac{1}{2}}\right) \triangleq \mathcal{N}\left(\mathrm{d}\vv_{t - \nicefrac{1}{2}}; \; \sqrt{1 - \rho^2} \, \vv_{t-1}, \rho^2 \, \mathrm{I}_d \right) \, , 
\end{align*}
where \(\rho \triangleq 1 - \exp\left( - \eta h \right) \in (0, 1) \) is the ``momentum refreshment rate'' for some step size \(h > 0\), and the \textit{Leapfrog integrator kernel}
{
\begin{align*}
    S^{h}_t\left(\left(\vx_{t-1}, \vv_{t - \nicefrac{1}{2}}\right), \cdot \right)
    \triangleq
    \delta_{
    \Phi_{t}^h\left( \vx_{t-1}, \vv_{t - \nicefrac{1}{2}} \right)
    } \left( \cdot \right) \, ,
\end{align*}
}%
}%
where \(\Phi_{t}^h\) is a single step of leapfrog integration with step size \(h\) preserving the ``Hamiltonian energy'' \(-\log \pi_t^{\mathsf{klmc}}\).
This discretization also coincides with the unadjusted version of the generalized Hamiltonian Monte Carlo~\citep{duane_hybrid_1987,neal_mcmc_2011} with a single leapfrog step.


\vspace{-2ex}
\paragraph{Backward Kernel.}
Since the kernel \(S^{h_t}\) is a deterministic mapping, \(K_t^{\theta_t}\) does not admit a density with respect to the Lebesgue measure.
Therefore, we are restricted to a specific backward kernel that satisfies the condition in \cref{eq:backward_absolute_continuity}:
Since the leapfrog integrator \(\Phi_{t}^h\) is a diffeomorphism, its inverse map \({(\Phi_{t}^h)}^{-1}\) exists.
Therefore, we can choose
{%
\setlength{\belowdisplayskip}{1ex} \setlength{\belowdisplayshortskip}{1ex}
\setlength{\abovedisplayskip}{1ex} \setlength{\abovedisplayshortskip}{1ex}
\begin{align*}
    \hspace{-.5em}
    L_{t-1}^{h, \rho}\left(\vz_{t}, \cdot\right) 
    = 
    \delta_{
    {(\Phi_{t}^h)}^{-1}\left( \vx_{t}, \vv_{t} \right)
    } \left(\left(\mathrm{d} \vx_t, \mathrm{d}\vv_{t - \nicefrac{1}{2}} \right) \right)
    R^{\rho}\left(\vv_{t - \nicefrac{1}{2}}, \mathrm{d}\vv_{t-1}\right) 
\end{align*}
}%
This results in the deterministic component of $K_t$ and $L_{t-1}$ being supported on the same pair of points, ensuring absolute continuity~\citep{doucet_scorebased_2022,geffner_langevin_2023}.
Then the potential is given by the Radon-Nikodym derivative between the momentum refreshments $R^{\rho}$ as
{\small
\setlength{\belowdisplayskip}{1ex} \setlength{\belowdisplayshortskip}{1ex}
\setlength{\abovedisplayskip}{1ex} \setlength{\abovedisplayshortskip}{1ex}
\begin{align*}
    &G_t^{h_t, \rho_t}\left(\vz_{t-1}, \vz_{t}\right) 
    \\
    &=
    \frac{
        \gamma_t\left(\vx_t\right) \mathcal{N}\left(\vv_t; 0_d, \mathrm{I}_d\right) \mathcal{N}\left(\vv_{t-1}; \sqrt{1 - \rho_t^2} \, \vv_{t- \nicefrac{1}{2}},  \rho_t^2 \, \mathrm{I}_d \right)
    }{
        \gamma_{t-1}\left(\vx_{t-1}\right) \mathcal{N}\left(\vv_{t-1}; 0_d, \mathrm{I}_d\right) 
        \mathcal{N}\left(\vv_{t - \nicefrac{1}{2}};  \sqrt{1 - \rho_t^2} \, \vv_{t-1}, \rho_t^2 \mathrm{I}_d \right)
    } \, 
\end{align*}
}%
with two tunable parameters: \( (h_t, \rho_t ) \in \mathbb{R}_{>0} \times (0, 1)\).

\vspace{-2ex}
\begingroup
\setlength{\columnsep}{2ex}%
\begin{wrapfigure}{r}{0.40\columnwidth}
    \vspace{-4ex}
    \centering
    \includegraphics[scale=0.9]{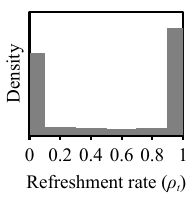}
    \vspace{-3ex}
    \caption{\textbf{Distribution of tuned refreshment rates \(\rho_t\).} The results were obtained by running adaptive SMC on the Sonar problem with \(T = 256\) and \(N = 1024\)}
    \vspace{-2ex}
    \label{fig:refreshment_rate}
\end{wrapfigure}
\paragraph{Adaptation Algorithm.}
As KLMC has two parameters, we cannot immediately apply the tuning procedure offered in \cref{section:stepsize}.
Thus, we will tailor it to KLMC.
At each \(t \in [T]\), we will minimize the incremental KL objective \(\widehat{\mathcal{L}}_t\left(h, \rho\right)\) through coordinate descent.
That is, we alternate between minimizing over \(h\) and \(\rho\).
This is shown in \cref{alg:klmc_adapt_stepsize}.
In particular, \(h_t\) is updated using the procedure used in~\cref{section:stepsize}, while \(\rho_t\) is directly minimized over a grid \(\Xi \in {(0, 1)}^{k}\) of \(k\) grid points.
As shown in \cref{fig:refreshment_rate}, empirically, the minimizers of \(\widehat{\mathcal{L}}_t\) with respect to \(\rho_t\) tend to concentrate on the boundary, as if the adaptation problem is determining to ``fully refresh'' or ``not refresh at all.''
Therefore, the grid \(\Xi\) can be made as coarse as \(\Xi = \left\{ 0.1, 0.9 \right\}\), which is what we use in the experiments.

\endgroup

\vspace{-1ex}
\section{Experiments}\label{section:experiments}
\subsection{Implementation and General Setup}
We implemented our SMC sampler\footnote{
    Link to \textsc{GitHub} repository: \url{https://github.com/Red-Portal/ControlledSMC.jl/tree/v0.0.4}.
}
using the Julia language~\citep{bezanson_julia_2017}.
For resampling, we use the Srinivasan sampling process (SSP) by \citet{gerber_negative_2019}, which performs similarly to the popular systematic resampling strategy~\citep{carpenter_improved_1999,kitagawa_monte_1996}, while having stronger theoretical guarantees.
Resampling is triggered adaptively, which has been theoretically shown to work well~\citep{syed_optimised_2024} under the typical rule of resampling as soon as the effective sample size~\citep{kong_note_1992,elvira_rethinking_2022} goes below \(\nicefrac{N}{2}\).
In all cases, the reference distribution is a standard Gaussian \(q = \mathcal{N}\left(0_d, \mathrm{I}_d\right)\), while we use a quadratic annealing schedule \(\lambda_t = {\left(t/T\right)}^2 \).

\vspace{-2ex}
\paragraph{Evaluation Metric.}
We will compare the estimate $\log \widehat{\rvZ}_{T, N} $, where, for unbiased estimates of \(Z\) against a ground truth estimate obtained by running a large budget run with \(N = 2^{14}\) and \(T = 2^9\).
Due to adaptivity, our method only yields \textit{biased} estimates of \(Z\). 
Therefore, after adaptation, we run vanilla SMC with the tuned parameters, which yields unbiased estimates.

\vspace{-2ex}
\paragraph{Benchmark Problems.}
For the benchmarks, we ported some problems from the Inference Gym~\citep{sountsov_inference_2020} to Julia, where the rest of the problems are taken from PosteriorDB~\citep{magnusson_textttposteriordb_2025}.
Details on the problems considered in this work are in~\cref{section:problems}, while the configuration of our adaptive method is specified in~\cref{section:config}.

\begin{figure*}
    \centering
    \vspace{-1ex}
    \subfloat[\(\log Z\) Estimates]{
        \hspace{-.5em}
        \includegraphics[scale=0.95]{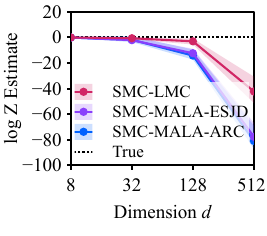}
        \label{fig:metropolis_scaling_logz}
        \vspace{-1.5ex}
    }
    \subfloat[SMC-LMC]{
        \hspace{-1em}
        \includegraphics[scale=0.95]{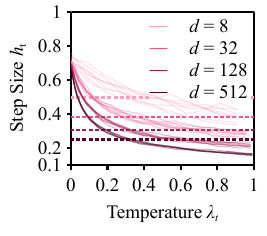}
        \label{fig:metropolis_scaling_smc_lmc}
        \vspace{-1.5ex}
    }
    \subfloat[SMC-MALA-ESJD]{
        \hspace{-1em}
        \includegraphics[scale=0.95]{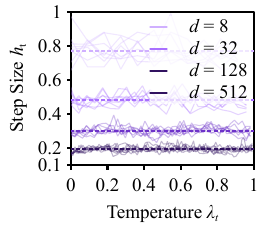}
        \label{fig:metropolis_scaling_smc_mala_esjd}
        \vspace{-1.5ex}
    }
    \subfloat[SMC-MALA-ARC]{
        \hspace{-1em}
        \includegraphics[scale=0.95]{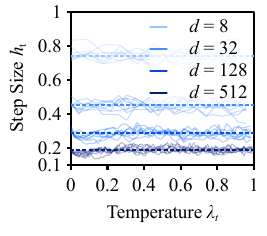}
        \label{fig:metropolis_scaling_smc_mala_arc}
        \vspace{-1.5ex}
    }
    \vspace{-1ex}
    \caption{
        \textbf{Dimensional scaling of adaptive SMC with Langevin-based kernels with (MALA) and without (LMC) MH adjustment.
        } 
        (a) Comparison of the \(\log Z\) estimates under growing dimensionality.
        The solid lines are the median, while the shaded regions are the \(80\%\) quantiles obtained from 32 replications.
        (b-d) Tuned step size schedules obtained under each sampler.
        SMC-MALA-ESJD uses ESJD maximization for adaptation, while SMC-MALA-ARC uses acceptance rate control (ARC).
        Each solid line is a step size schedule obtained from a single run (eight examples are shown), while the dotted lines are the average over $t$.
    }
    \label{fig:metropolis_scaling}
    \vspace{-1ex}
\end{figure*}

\vspace{-1ex}
\subsection{Comparison Against Fixed Step Sizes}
\vspace{-.5ex}
\paragraph{Setup.}
First, we evaluate the quality of the parameters tuned through our method.
For this, we compare the performance of SMC-LMC and SMC-KLMC against hand tuning a fixed step size \(h\), such that \(h_t = h\), over a grid of step sizes.
For KLMC, we also perform a grid search of the refreshment rate over \(\{0.1, 0.5, 0.9\}\).
The computational budgets are set as \(N = 1024\), \(B = 128\), and \(T = 64\).

\vspace{-2ex}
\paragraph{Results.}
A representative subset of the results is shown in \cref{fig:fixed_stepsize_smcula,fig:fixed_stepsize_smcuha}, while the full set of results is shown in \cref{section:additional_fixed_stepsizes}.
First, we can see that SMC with fixed step sizes is strongly affected by tuning.
On the other hand, our adaptive sampler obtains estimates that are closer or comparable to the best fixed step size on 20 out of 22 benchmark problems.
Our method performed poorly on the Rats problem, which is shown in the right-most panes in \cref{fig:fixed_stepsize_smcula,fig:fixed_stepsize_smcuha}.
Overall, our method results in estimates that are better or comparable to those obtained with the best fixed step size.

\begin{figure}[t]
    \vspace{-1ex}
    \subfloat[Sonar]{
        \hspace{-1em}
        \includegraphics[scale=0.9]{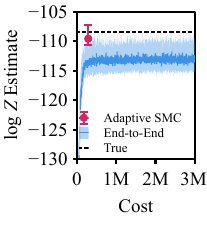}
        \vspace{-2ex}
    }
    \subfloat[Brownian]{
        \hspace{-1em}
        \includegraphics[scale=0.9]{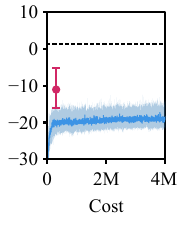}
        \vspace{-2ex}
    }
    \subfloat[Pines]{
        \hspace{-1em}
        \includegraphics[scale=0.9]{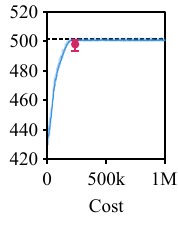}
        \vspace{-2ex}
    }
    \vspace{-1ex}
    \caption{
        \textbf{Comparison against end-to-end optimization.}
        The ``cost'' is the cumulative number of gradient evaluations of the target. 
        32 independent runs for end-to-end optimization are shown.
        The error bars/bands are \(80\%\) empirical quantiles of the cost and the estimates of $\log Z$ computed from 32 replications. 
    }\label{fig:sgd}
    \vspace{-2ex}
\end{figure}

\vspace{-1ex}
\subsection{Comparison Against End-to-End Optimization}
\vspace{-.5ex}
\paragraph{Setup.}
Now, we compare our adaptive tuning strategy against end-to-end optimization strategies.
In particular, we compare against differentiable AIS~\citep{geffner_langevin_2023,geffner_mcmc_2021,zhang_differentiable_2021} instead of SMC, as differentiating through resampling does not necessarily improve the results~\citep{zenn_resampling_2023}.
To focus on the tuning capabilities, we do not optimize the reference \(q\).
However, results with variational reference tuning can be found in~\cref{section:end_to_end_additional}.
Furthermore, we performed a grid search over the SGD step sizes \(\{10^{-4}, 10^{-3}, 10^{-2}\}\) and show the best results.
Additional implementation details can be found in \cref{section:setup_endtoend}.
Since end-to-end methods need to differentiate through the models, we only ran them on the problems with JAX~\citep{bradbury_jax_2018} implementations (Funnel, Sonar, Brownian, and Pine).
We use \(T = 32\) SMC iterations for all methods.
For adaptation, our method uses \(B = 128\) particles out of \(N = 1024\) particles, while end-to-end optimization uses an SMC sampler with \(32\) particles during optimization, and \(N = 1024\) particles when actually estimating \(\widehat{\rvZ}_{T,N}\).
For both methods, the cost of estimating the unbiased normalizing constant is excluded.

\vspace{-2ex}
\paragraph{Results.}
The results with the KLMC kernel are shown in \cref{fig:sgd}, while additional results can be found in \cref{section:end_to_end_additional}.
Our Adaptive SMC sampler achieves more accurate estimates than the best-tuned end-to-end tuning results on Sonar and Brownian, while the estimate on Pines is comparable.
This demonstrates that our SMC tuning approach achieves estimates that are better or on par with those obtained through end-to-end optimization.

\subsection{Dimensional Scaling with and without Metropolis-Hastings Adjustment}
We will now compare the tuned performance of unadjusted versus adjusted kernels, in particular, LMC versus MALA.
To maintain a non-zero acceptance rate, MH-adjusted methods generally require \(h\) to decrease with dimensionality \(d\).
Theoretical results suggest that, for MALA, the step size has to decrease as \(\mathrm{O}\left(d^{-1/3}\right)\)~\citep{chewi_optimal_2021,roberts_optimal_1998} for Gaussian targets and as \(\mathrm{O}\left(d^{-1/2}\right)\) in general~\citep{chewi_optimal_2021,wu_minimax_2022}.
In contrast, LMC only needs to reduce \(h\) to counteract the \textit{asymptotic} bias in the stationary distribution, which grows as \(\mathrm{O}\left(d\right)\) in squared Wasserstein distance~\citep{dalalyan_theoretical_2017,durmus_asymptotic_2024,durmus_highdimensional_2019}.
However, since SMC never operates in the stationary regime (except for the waste-free variant by~\citealt{dau_wastefree_2022}), we expect SMC-LMC to scale better than SMC-MALA with dimensionality \(d\).
Here, we will empirically verify this intuition.

\vspace{-2ex}
\paragraph{Setup.}
We set \(\pi = \mathcal{N}\left(3 \cdot 1_{d}, \mathrm{I}_d\right)\) and \(q = \mathcal{N}\left(0_d, \mathrm{I}_d\right)\) under varying dimensionality \(d\). 
The computational budgets are set as \(N = 1024\), \(T = 4 \lceil \sqrt{d} \rceil \), where the latter is suggested by~\citet[\S 4.7]{syed_optimised_2024}.
For MALA, we will consider two common adaptation strategies: controlling the acceptance rate~\citep{buchholz_adaptive_2021} such that it is \(0.574\)~\citep{roberts_optimal_1998} and maximizing the ESJD~\citep{pasarica_adaptively_2010,buchholz_adaptive_2021,fearnhead_adaptive_2013}.
For both, we use the tricks stated in \cref{section:adaptation}, such as subsampling and SAA.

\vspace{-2ex}
\paragraph{Results.}
The results are shown in \cref{fig:metropolis_scaling}.
For SMC-LMC, the step size schedule is shown to decrease with $t$ (\cref{fig:metropolis_scaling_smc_lmc}).
Since the smoothness constant of the target density does not change with $t$, this means our adaptation scheme is automatically performing a trade-off between convergence speed and asymptotic bias.
Also, the average step sizes decrease with $d$, which is expected since the bias grows with $d$.
However, for $d \geq 128$, the step sizes of SMC-LMC tend to be larger than those of SMC-MALA (\cref{fig:metropolis_scaling_smc_mala_esjd,fig:metropolis_scaling_smc_mala_arc}).
Consequently, SMC-ULA obtains more accurate estimates in higher dimensions (\cref{fig:metropolis_scaling_logz}).




\vspace{-1ex}
\section{Conclusions}
\vspace{-1ex}
In this work, we established a methodology for tuning path proposal kernels in SMC samplers, which involves greedily minimizing an incremental KL divergence at each SMC step.
We also developed a specific instantiation of the method for tuning scalar-valued step sizes of MCMC kernels used in SMC samplers.
A potential future direction would be to investigate the consistency of the proposed scheme, possibly through the framework of~\citet{beskos_convergence_2016}.

\clearpage

\section*{Acknowledgements}
The authors sincerely thank Nicolas Chopin for pointing out relevant theoretical results, Alexandre Bouchard-C\^{o}t\'e for discussions throughout the project, and the anonymous reviewers for helpful suggestions.

K. Kim and J. R. Gardner were supported through the NSF award [IIS2145644]; T. Campbell and D. Xu were supported by the NSERC Discovery Grant RGPIN-2025-04208. 
We also gratefully acknowledge the use of the ARC Sockeye computing platform at the University of British Columbia.

\section*{Impact Statement}
This paper presents a method for automatically tuning sequential Monte Carlo (SMC) samplers.
Due to the technical nature of the work, we do not expect to have direct societal consequences.
Downstream applications of SMC samplers, however, include uncertainty quantification~\citep{dai_invitation_2022}, statistical model comparison~\citep{zhou_automatic_2016}, and more recently, conditional generation from diffusion models~\citep{wu_practical_2023,trippe_diffusion_2023} and steering large language models to ensure their output conforms to ethical constraints~\citep{zhao_probabilistic_2024}. 
Our work may improve the efficiency and efficacy of such tasks, indirectly affecting their societal consequences.


\bibliographystyle{icml2025}
\bibliography{references}

\clearpage
\appendix


\onecolumn
\renewcommand{\baselinestretch}{0.75}\normalsize
{\hypersetup{linkbordercolor=black,linkcolor=black}
\tableofcontents
}
\renewcommand{\baselinestretch}{1.0}\normalsize
\twocolumn

\clearpage
\section{Benchmark Problems}\label{section:problems}

\begin{table*}[t]
    \centering
    \def\arraystretch{1.5}
    \begin{tabular}{lp{0.9\columnwidth}rlc}
        \multicolumn{1}{c}{\textbf{Name}} & \multicolumn{1}{c}{\textbf{Description}} & \(d\) & \multicolumn{1}{c}{\textbf{Source}} & \multicolumn{1}{c}{\textbf{Reference}} 
        \\ \midrule
        Funnel & Neal's funnel distribution. & 10 & Inference Gym & \makecell[t]{\citealt{sountsov_inference_2020}\\\citealt{neal_slice_2003}}
        \\
        Brownian & Latent Brownian motion with missing observations. & 32 & Inference Gym & \makecell[t]{\citealt{sountsov_inference_2020}}
        \\
        Sonar & Bayesian logistic regression with the sonar classification dataset. & 61 & Inference Gym & \makecell[t]{\citealt{sountsov_inference_2020}\\\citealt{gorman_analysis_1988}}
        \\
        Pines & Log-Gaussian Cox process model of the concentration of Scotch pine saplings in Finland over a \(40 \times 40\) grid. & 1600 & Inference Gym & \makecell[t]{\citealt{sountsov_inference_2020}\\\citealt{moller_log_1998}}
    \end{tabular}
    \caption{Overview of Benchmark Problems}
    \label{tab:problems_gym}
\end{table*}

\begin{table*}[t]
    \centering
    \def\arraystretch{1.5}
    \begin{tabular}{lp{0.9\columnwidth}rlc}
        \multicolumn{1}{c}{\textbf{Name}} & \multicolumn{1}{c}{\textbf{Description}} & \(d\) & \multicolumn{1}{c}{\textbf{Source}} & \multicolumn{1}{c}{\textbf{References}} 
        \\ \midrule
        Bones & Latent trait model for multiple ordered categorical responses for quantifying skeletal maturity from radiograph maturity ratings with missing entries. (model: \(\mathtt{bones\_model}\); dataset: \(\mathtt{bones\_data}\)) & 13 & PosteriorDB & \makecell[t]{\citealt{magnusson_textttposteriordb_2025}\\\citealt{spiegelhalter_bugs_1996}}
        \\
        Surgical & Binomial regression model for estimating the mortality rate of pediatric cardiac surgery.  (model: \(\mathtt{surgical\_model}\); dataset: \(\mathtt{surgical\_data}\)) & 14 &  PosteriorDB & \makecell[t]{\citealt{magnusson_textttposteriordb_2025}\\\citealt{spiegelhalter_bugs_1996}}
        \\
        HMM & Hidden Markov model with a Gaussian emission applied to a simulated dataset.
        (model: \(\mathtt{hmm\_gaussian}\); dataset: \(\mathtt{hmm\_gaussian\_simulated}\))
         & 14 & PosteriorDB & \makecell[t]{\citealt{magnusson_textttposteriordb_2025}
         \\\citealt{cappe_inference_2005}}
        \\
        Loss Curves &
        Loss model of insurance claims.
        The model is the single line-of-business, single insurer (SISLOB) variant, where the dataset is the ``ppauto'' line of business, part of the ``Schedule P loss data'' provided by the Casualty Actuarial Society.
        (model: \(\mathtt{losscurve\_sislob}\); dataset: \(\mathtt{loss\_curves}\)) & 15 & PosteriorDB & \makecell[t]{\citealt{magnusson_textttposteriordb_2025}\\\citealt{cooney_modelling_2017}}
        \\
        Pilots & Linear mixed effects model with varying intercepts for estimating the psychological effect of pilots when performing flight simulations on various airports.
        (model: \(\mathtt{pilots}\); dataset: \(\mathtt{pilots}\)) & 18 & PosteriorDB & \makecell[t]{\citealt{magnusson_textttposteriordb_2025}\\\citealt{gelman_data_2007}}
        \\
        Diamonds & 
        Log-log regression model for the price of diamonds with highly correlated predictors.
        (model: \(\mathtt{diamonds}\); dataset: \(\mathtt{diamonds}\)) 
        & 26 & PosteriorDB & \makecell[t]{\citealt{magnusson_textttposteriordb_2025}\\\citealt{wickham_toolbox_2016}}
        \\
        Seeds &  Random effect logistic regression model of the seed germination proportion of seeds from different root extracts.
        We use the variant with a half-Cauchy prior on the scale.
        (model: \(\mathtt{seeds\_stanified\_model}\); dataset: \(\mathtt{seeds\_data}\)) & 26 & PosteriorDB & \makecell[t]{\citealt{magnusson_textttposteriordb_2025}\\\citealt{crowder_betabinomial_1978}\\\citealt{spiegelhalter_bugs_1996}}
        \\
        Downloads & Prophet time series model applied to the download count of \texttt{rstan} over time. The model is an additive combination of
        \begin{enumerate*}[label=(\roman*)]
            \item a trend model,
            \item a model of seasonality, and
            \item a model for events such as holidays.
        \end{enumerate*}
        (model: \(\mathtt{prophet}\); dataset: \(\mathtt{rstan\_downloads}\)) 
        & 62 & PosteriorDB & \makecell[t]{\citealt{magnusson_textttposteriordb_2025}\\\citealt{taylor_forecasting_2018}\\\citealt{bales_selecting_2019}}
        \\
        Rats & Linear mixed effects model with varying slopes and intercepts for modeling the weight of young rats over five weeks. (model: \(\mathtt{rats\_model}\); data: \(\mathtt{rats\_data}\)) & 65 & PosteriorDB & \makecell[t]{\citealt{magnusson_textttposteriordb_2025}\\\citealt{spiegelhalter_bugs_1996}\\\citealt{gelfand_illustration_1990}}
        \\
        Radon & Multilevel mixed effects model with log-normal likelihood and varying intercepts for modeling the radon level measured in U.S. households. We use the Minnesota state subset. (model: \(\mathtt{radon\_hierarchical\_intercept\_centered}\); dataset: \(\mathtt{radon\_mn}\)) & 90 & PosteriorDB & \makecell[t]{\citealt{magnusson_textttposteriordb_2025}\\\citealt{gelman_bayesian_2014}}
    \end{tabular}
    \caption{Overview of Benchmark Problems}
    \label{tab:problems_posteriordb1}
\end{table*}

\begin{table*}[t]
    \centering
    \def\arraystretch{1.5}
    \begin{tabular}{lp{0.9\columnwidth}rlc}
        \multicolumn{1}{c}{\textbf{Name}} & \multicolumn{1}{c}{\textbf{Description}} & \(d\) & \multicolumn{1}{c}{\textbf{Source}} & \multicolumn{1}{c}{\textbf{Reference}} 
        \\ \midrule
        Election88 & Generalized linear mixed effects model of the voting outcome of individuals at the 1988 U.S. presidential election. (model: \(\mathtt{election88\_full}\); dataset: \(\mathtt{election88}\)) & 90 & PosteriorDB & \makecell[t]{\citealt{magnusson_textttposteriordb_2025}\\\citealt{gelman_data_2007}}
        \\
        Butterfly & Multispecies occupancy model with correlation between sites. The dataset contains counts of butterflies from twenty grassland sites in south-central Sweden (model: \(\mathtt{butterfly}\); dataset: \(\mathtt{multi\_occupancy}\)) & 106 & PosteriorDB & \makecell[t]{\citealt{magnusson_textttposteriordb_2025}\\\citealt{dorazio_estimating_2006}}
        \\
        Birds & Mixed effects model with a Poisson likelihood and varying intercepts for modeling the occupancy of the Coal tit (\textit{Parus ater}) bird species during the breeding season in Switzerland. (model: \(\mathtt{GLMM1\_model}\); dataset: \(\mathtt{GLMM\_data}\)) & 237 & PosteriorDB & \makecell[t]{\citealt{magnusson_textttposteriordb_2025}\\\citealt{kery_bayesian_2012}}
        \\
        Drivers & 
        Time series model with seasonal effects of driving-related fatalities and serious injuries in the U.K. from Jan. 1969 to Dec. 1984. 
        (model: \(\mathtt{state\_space\_stochastic\_level\_stochastic\_seasonal}\); dataset: \(\mathtt{uk\_drivers}\)) & 389 & PosteriorDB & \makecell[t]{\citealt{magnusson_textttposteriordb_2025}\\\citealt{commandeur_introduction_2007}}
        \\
        Capture &
        Model of capture-recapture data for estimating the population size.
        This is the ``heterogeneity model,'' where the detection probability is assumed to be heterogeneous across the individuals.
        The data is simulated.
        (model: \(\mathtt{Mh\_model}\); dataset: \(\mathtt{Mh\_data}\)) & 388 & PosteriorDB & \makecell[t]{\citealt{magnusson_textttposteriordb_2025}\\\citealt{kery_bayesian_2012}}
        \\
        Science & 
        Item response model with generalized rating scale.
        The dataset was taken from the Consumer Protection and Perceptions of Science and Technology section of the 1992 Euro-Barometer Survey.
        (model: \(\mathtt{grsm\_latent\_reg\_irt}\); dataset: \(\mathtt{science\_irt}\)) & 408 & PosteriorDB & \makecell[t]{\citealt{magnusson_textttposteriordb_2025}\\\citealt{reif_eurobarometer_1993}\\\citealt{furr_rating_2017}}
        \\
        Three Men 
        &
        Latent Dirichlet allocation for topic modeling.
        The number of topics is set as \(K = 2\), while the dataset is corpus 3 among pre-processed multilingual corpora of the book ``Three Men and a Boat.''
        (model: \(\mathtt{ldaK2}\); dataset: \(\mathtt{three\_men3}\)) & 505 & PosteriorDB & \makecell[t]{\citealt{magnusson_textttposteriordb_2025}\\\citealt{farkas_three_2014}\\\citealt{blei_latent_2003}}
        \\
        TIMSS & 
        Item response model with generalized partial credit.
        The dataset is from the TIMSS 2011 mathematics assessment of Australian and Taiwanese students.
        (model: \(\mathtt{gpcm\_latent\_reg\_irt}\); dataset: \(\mathtt{timssAusTwn\_irt}\)) & 530 & PosteriorDB & \makecell[t]{\citealt{magnusson_textttposteriordb_2025}\\\citealt{muraki_generalized_1997}\\\citealt{mullis_timss_2012}}
    \end{tabular}
    \caption{Overview of Benchmark Problems (continued)}
    \label{tab:problems_posteriordb2}
\end{table*}

In this section, we provide additional details about the benchmark problems.
A full list of the problems is shown in \cref{tab:problems_gym,tab:problems_posteriordb1,tab:problems_posteriordb2}.
For the problems we ported from the Inference Gym, we provide additional details for clarity:

\paragraph{Funnel.}
This is the classic benchmark problem by \citet{neal_slice_2003}.
We use the formulation:
\begin{align*}
    y &\sim \mathcal{N}\left(0, 3^2\right) \\
    x &\sim \mathcal{N}\left(0_{d-1},  e^{y} \mathrm{I}_{d - 1} \right)  \, ,
\end{align*}
where \(d = 10\).

\paragraph{Sonar.}
This is a logistic regression problem with a standard normal prior on the coefficients.
That is, for $d = 61$, given a dataset \((X, y)\), where \(X \in \mathbb{R}^{n, d -1}\) and \(y \in \mathbb{R}^n\), the design matrix is augmented with a column containing 1s denoted with $\tilde{X}$ to include an intercept.
The data-generating process is 
\begin{align*}
    \beta &\sim \mathcal{N}\left(0_{d}, \mathrm{I}_d\right)  \\
    y &\sim \mathrm{Bernoulli}\big( \sigma\big( \tilde{X} \beta\big) \big) \, ,
\end{align*}
where \(\sigma\left(x\right) \triangleq 1 / \left(1 + \mathrm{e}^{-x}\right)\) is the logistic function.
Here, we use the sonar classification dataset by \citet{gorman_analysis_1988}.
The features are pre-processed with \(z\)-standardization following~\citet{phillips_particle_2024}.

\newpage
\paragraph{Pines.}
This is a log-Gaussian Cox process (LGCP;~\citealp{moller_log_1998}) model applied to a dataset of Scotch pine saplings in Finland~\citep{moller_log_1998}.
A LGCP is a nonparametric model of intensity fields, where the observations are assumed to follow a Poisson point process (PPP).
Consider a 2-dimensional grid of \(n\) cells indexed by \(i \in [n]\), each denoted by \(S_i \in \mathcal{S}\) and centered on the location \(x_i \in \mathbb{R}^d\).
The dataset is the number of points contained in the \(i\)th cell, \(y_i \in \mathbb{N}_{\geq 0}\), for all \(i \in [n]\), which is assumed to follow a PPP such that
\begin{align*}
    y_i \sim \mathrm{Poisson}\left(\int_{S_i} \lambda\left(x\right) \mathrm{d}x\right)
\end{align*}
with the intensity field 
\[
    \log \lambda \sim \mathcal{GP}\left(\mu, k\right) \, ,
\]
where \(\mathcal{GP}\left(\mu, k\right)\) is a Gaussian process prior (GP;~\citealp{rasmussen_gaussian_2005}) with mean \(\mu\) and covariance kernel \(k : \mathbb{R}^2 \times \mathbb{R}^2 \to \mathbb{R}_{>0}\).
We use the grid approximation 
\begin{align*}
    \int_{S_i} \lambda\left(x\right) \mathrm{d}x \approx A_i \exp\left( \log\lambda\left(x_i\right) \right) \, ,
\end{align*}
where \(A_i\) is the area of \(S_i\).
The likelihood is then
\[
    \ell\left(y_i, x_i, \lambda\right) = \exp\left\{ \lambda\left(\vx_i\right) y_i - A_i \exp\left(\lambda\left(\vx_i\right)\right) \right\} \, .
\]

Following~\citet{moller_log_1998}, the hyperparameters of the GP are set as
\begin{align*}
    \mu &= \log\left(126\right) - \frac{\sigma^2}{2}  \\
    k\left(\vx_i, \vx_j\right) &= \sigma^2 \exp\left(-\frac{ \norm{\vx_i - \vx_j }_2 }{ \sqrt{ \abs{\mathcal{S}} \beta^2 } } \right) \, ,
\end{align*}
where
\begin{align*}
    \sigma^2 = 1.91
    \quad\text{and}\quad
    \beta = \frac{1}{33} \, .
\end{align*}
The field \([0, 1]^2\) is discretized into a \(40 \times 40\) grid such that \(\abs{\mathcal{S}} = 40^2\) and \(A_i = 1/\abs{\mathcal{S}}\).
Furthermore, to improve the conditioning of the posterior, we whiten the GP prior~\citep[\S 2.1]{murray_slice_2010}.

\clearpage
\section{Details on the Experimental  Setup}\label{section:config}

\subsection{Setup of Adaptive SMC Samplers}\label{section:setup_smc}

\paragraph{Configuration of the Adaptation Procedure.}
Here, we collected the specifications of the tunable parameters in our adaptive SMC samplers.
The parameters of SMC-LMC are set as in \cref{table:smclmc_params}:

\begin{table}[h]
    \centering
    \begin{tabular}{ccc}
        \multicolumn{1}{c}{\textbf{Name}} & \multicolumn{1}{c}{\textbf{Source}} & \multicolumn{1}{c}{\textbf{Value}}
        \\ \midrule
        \(\tau\) & \cref{section:adaptation} & 0.1 \\
        \(\epsilon\) & \cref{alg:gss}         & 0.01 \\
        \(c\)      & \cref{alg:find_minimum}  & 0.1 \\
        \(r\)      & \cref{alg:find_minimum}  & 2 \\
        \(\delta\) & \cref{alg:find_feasible} & \(-1\) \\
        \(h_{\mathsf{guess}}\) & \cref{alg:find_feasible} & \(\exp\left(-10\right) \approx 4.54 \times 10^{-5}\) 
    \end{tabular}
    \caption{Configuration of SMC-LMC}\label{table:smclmc_params}
\end{table}

The parameters of SMC-KLMC are set as in \cref{table:smcklmc_params}:

\begin{table}[h]
    \centering
    \begin{tabular}{ccc}
        \multicolumn{1}{c}{\textbf{Name}} & \multicolumn{1}{c}{\textbf{Source}} & \multicolumn{1}{c}{\textbf{Value}}
        \\ \midrule
        \(\tau\)    & \cref{section:adaptation} & 5 \\
        \(\epsilon\) & \cref{alg:gss}         & \(0.01\) \\
        \(c\)      & \cref{alg:find_minimum}  & \(0.01\)  \\
        \(r\)      & \cref{alg:find_minimum}  & 3 \\
        \(\delta\) & \cref{alg:find_feasible} & \(-1\) \\
        \(\Xi\) & \cref{alg:klmc_adapt_stepsize} & \(\{0.1, 0.9\}\) \\
        \(\rho_{\mathsf{guess}}\) & \cref{alg:klmc_adapt_stepsize} & \(0.1\) \\
        \(h_{\mathsf{guess}}\) & \cref{alg:find_feasible} & \(\exp\left(-7.5\right) \approx 5.53 \times 10^{-4}\) 
    \end{tabular}
    \caption{Configuration of SMC-KLMC}\label{table:smcklmc_params}
\end{table}

\paragraph{Schedule Adaptation.}
In some of the experimental results in the appendix, we evaluate the performance of our step size adaptation procedure when combined with an annealing temperature schedule (\({(\lambda_t)}_{t = 0, \ldots, T}\)) adaptation scheme.
In particular, we use the recently proposed method of \citet{syed_optimised_2024}, which is able to tune both the schedule \({(\lambda_t)}_{t = 0, \ldots, T}\) and the number of SMC steps \(T\).
Under regularity assumptions, the resulting adaptation schedule asymptotically ($N\to \infty$ and $T \to \infty$) approximates the optimal geometric annealing path that minimizes the variance of the normalizing constant estimator. 
For a detailed description, see \citet[Sec. 5]{syed_optimised_2024}. Below, we provide a concise description of the schedule adaptation process.

Under suitable regularity assumptions, the asymptotically optimal schedule is the one that makes the ``local communication barrier''
\[
    \text{LCB}(\lambda_{t-1}, \lambda_t) \approx \sqrt{\mathrm{R}(\pi_{t-1} \otimes K_{t}^{\vtheta} || \pi_{t} \otimes L_{t-1}^{\vtheta})}
\]
uniform across all adjacent steps \(\lambda_t, \lambda_{t-1}\) for all \(t \in [T]\)~\citep[\S 4.3]{syed_optimised_2024}.
As such, the corresponding adaptation scheme estimates the local communication barrier and uses it to obtain a temperature schedule that makes it uniform.
Intuitively, $\text{LCB}(\lambda_{t-1}, \lambda_t)$ quantifies the ``difficulty'' of approximating $\pi_{t} \otimes L_{t-1}^{\vtheta} $ using weighted particles drawn from $\pi_{t-1} \otimes K_{t}^{\vtheta}$. 

In addition, let us denote the local communication barrier accumulated up to time step 
\(t \in \{0, \ldots, T\}\), 
\[
    \Lambda(\lambda_t) \triangleq \sum_{s = 1}^t \text{LCB}(\lambda_{s-1}, \lambda_s) \; .
\]
This serves as a divergence measure for the ``length'' of the annealing path from $\lambda_0 = 0$ to $\lambda_t$.
Furthermore, the \textit{total} accumulated local barrier
\[
    \Lambda \triangleq \Lambda(\lambda_T) \; ,
\]
which is referred to as the \emph{global communication barrier}, quantifies the total difficulty of simulating the annealing path ${(\pi_t)}_{t \in \{0, \ldots, T\}}$. 
For the normalizing constant to be accurate, SMC needs to operate in what they call the ``stable discretization regime,'' which occurs at \(T = \mathrm{O}\left(\Lambda\right)\)~\citep{syed_optimised_2024}.
Therefore, for tuning the number of SMC steps, a good heuristic is to set \(T\) to be a constant multiple of the estimated global communication barrier.

The corresponding schedule adaptation scheme is as follows: From the estimates of the communication barrier ${ (\widehat{\Lambda}\left(\lambda_t\right)) }_{t \in [T]}$ obtained from a previous run, the updated schedule for the \textit{next} run of length \(T^{\prime}\) is set via mapping
\begin{align}
\lambda_t^\star = \widehat\Lambda_\text{inv}\left(\widehat\Lambda \times \frac{t}{T'} \right) , \quad t' = 0, \dots, T',
\label{eq:schedule_adaptation}
\end{align}
where the inverse mapping $\widehat\Lambda_\text{inv}$ is approximated using a monotonic spline with knots $\{(\widehat \Lambda(\lambda_t), \lambda_t)\}_{t = 0}^T$.
In our case, the length of the new schedule is set as $T' = 2 \widehat \Lambda$. 

Below, we summarize the general steps for adaptive SMC with round-based schedule adaptation:

\begin{algorithm2e}
\caption{Round-Based Annealing Schedule Adaptation}
\RestyleAlgo{ruled} 
\LinesNumbered
\DontPrintSemicolon
\KwIn{%
    Number of rounds $r_{\mathrm{max}}$,\newline
    Initial number of SMC iterations $T_1$.\newline
    Initial schedule ${(\lambda^{1}_t)}_{t = 0, \ldots, T_1}$.
}
\For{$r = 1 \ldots, r_{\mathrm{max}}$}{
Run adaptive SMC with the schedule ${(\lambda^{r}_t)}_{t = 0, \ldots, T_r}$.\;
Estimate $\widehat{\Lambda}$ and compute $\widehat{\Lambda}_{\mathrm{inv}}$.\;
Set $T_{r+1} = 2 \widehat{\Lambda}$.\;
Obtain ${(\lambda^{r+1}_t)}_{t = 0, \ldots, T_{r+1}}$ using \cref{eq:schedule_adaptation}.\;
}
\end{algorithm2e}

\newpage
\subsection{Setup of End-to-End Optimization}\label{section:setup_endtoend}

We provide additional implementation details for end-to-end optimization\footnote{Link to \textsc{GitHub} repository: \url{https://github.com/zuhengxu/dais-py/releases/tag/v1.1}}. We implemented two differentiable AIS methods: one based on LMC~\citep{thin_monte_2021} and another based on KLMC~\citep{geffner_mcmc_2021,doucet_scorebased_2022}. 
Both methods are implemented in JAX \citep{bradbury_jax_2018}, modified from the code provided by \citet{geffner_langevin_2023}. 

For optimization, we used the Adam optimizer \citep{kingma_adam_2015} with three different learning rates $\{10^{-4}, 10^{-3}, 10^{-2}\}$ for $5{,}000$ iterations, with a batch size of $32$. We evaluated two different annealing step sizes ($32$ and $64$), keeping the number of steps fixed during training while optimizing the annealing schedule (detailed in \cref{section:end_to_end_additional}). Each setting was repeated $32$ times, and we report results from the best-performing configurations.


Following the setup of \citet{doucet_scorebased_2022},
the vector-valued step sizes are amortized through a function $\epsilon_\theta(t): [0, 1] \to \mathbb{R}^{d}$.
This function is parametrized through a 2-layer fully connected neural network with 32 hidden units and ReLU activation, followed by a scaled sigmoid function which enforces $\epsilon_\theta(t) < 0.1$ for the ULA variant and $\epsilon_\theta(t) < 0.25$ for the KLMC variant. 
Enforcing these step size constraints is necessary to prevent numerical issues during training, which was acknowledged in prior works \citep{doucet_scorebased_2022,geffner_mcmc_2021}.

For schedule adaptation,  \citet{doucet_scorebased_2022,geffner_mcmc_2021}, parametrize the temperature schedule as 
\[
\lambda_t=\frac{\sum_{t^{\prime} \leq t} \sigma\left(b_{t^{\prime}}\right)}{\sum_{t^{\prime}=1}^T \sigma\left(b_{t^{\prime}}\right)} \; ,
\]
where $\sigma$ is the sigmoid function, $\lambda_0$ is fixed to be 0, and $b_{0:T-1}$ is subject to optimization.
Following \citet{doucet_scorebased_2022}, we additionally learn the momentum refreshment rate $\rho$ (shared across $t\in [T]$) for SMC-KLMC.
That is, we parametrize $\rho$ with a parameter $u$ as $\rho=.98 \sigma(u)+.01$, which ensures $\rho \in (0.01, 0.99)$, 




\clearpage
\section{Algorithms}\label{section:algorithms}

In this section, we will provide a detailed description of our proposed adaptation algorithms and their components.

\subsection{\(\mathtt{FindFeasible}\) (\Cref{alg:find_feasible})}

\begin{figure}[!t]
    \removelatexerror
    \input{algorithms/alg_find_feasible}
    \vspace{-2ex}
\end{figure}

Our adaptation schemes receive a guess from the user.
For robustness, it is safe to assume that this guess may not be non-degenerate.
As such, we must first check that it is non-degenerate, and if it is not, move it to somewhere that is.
This is done by \(\mathtt{FindFeasible}\left(f, x_0, \delta\right)\) shown in \cref{alg:find_feasible}.
If \(x_0\) is already non-degenerate, it immediately returns the initial point \(x_0\).
Otherwise, if \(x_0\) is degenerate, it increases or decreases \(x_0\) with a step size of \(\delta\) until the objective function becomes finite.


\subsection{\(\mathtt{GoldenSectionSearch}\) (\cref{alg:gss})}\label{section:gss}

\begin{figure}[!t]
    \removelatexerror
    \input{algorithms/alg_gss}
    \vspace{-4ex}
\end{figure}

The workhorse of our step size adaptation scheme is the golden section search (GSS) algorithm~\citep{avriel_golden_1968}, which is a variation of the Fibonacci search algorithm~\citep{kiefer_sequential_1953}.
In particular, we are using the implementation of~\citet[\S 10.1]{press_numerical_1992}, shown in \cref{alg:gss}, which uses a \textit{triplet}, \((a, b, c) \in \mathbb{R}^3\), for initialization.
This triplet requires the condition 
{%
\begin{equation}
    a < b < c , \quad
    f\left(b\right) < f\left(a\right), 
    \;\text{and}\;
    f\left(b\right) < f\left(c\right)
    \label{eq:triplet_condition}
\end{equation}
}%
to hold.
Then, by \cref{thm:minima}, this implies that the open interval \((a, c)\) contains a local minimum.
Then GSS is guaranteed to find the contained local minimum at a ``linear rate'' of \(\nicefrac{\left(1 - \sqrt{5}\right)}{2} \approx 1.62\), the golden ratio.
For finding a point $\epsilon$-close to a local minimum, this translates into an objective query complexity of \(\mathcal{O}\left(\log \abs{c - a}/\epsilon\right)\).
Furthermore, if \(f\) is unimodal (\cref{assumption:unimodal}), then the solution will be \(\epsilon\)-close to the global minimum~\citep[\S 7.1]{luenberger_linear_2008}.
The key is to find a triplet \((a, b, c)\) satisfying the condition in \cref{eq:triplet_condition}, which is done in \cref{alg:find_minimum} in the next section.

\clearpage
\begin{figure}[!t]
    \removelatexerror
    \input{algorithms/alg_find_minimum}
\end{figure}

\newpage
\subsection{\(\mathtt{BracketMinimum}\) (\Cref{alg:find_minimum})}\label{section:bracket_minimum}
The main difficulty of applying GSS in practice is setting the initial bracketing interval.
If the bracketing interval does not contain a local minimum, nothing can be said about what GSS is converging towards.
In our case, we require a triplet \((a, b, c)\) that satisfies the sufficient conditions in \cref{thm:gss}.
Therefore, an algorithm for finding such an interval is necessary.
Naturally, this algorithm should have a computational cost that is better or at least comparable to GSS.
Otherwise, a more naive way of setting the intervals would make more sense.
Furthermore, the width of the interval found by the algorithm should be as narrow as possible so that GSS can be run more efficiently.

While \citet[\S 10.1]{press_numerical_1992} presents an algorithm for finding such a bracket using parabolic interpolation, the efficiency and quality of the output of this algorithm are not analyzed. 
Furthermore, the presence of discontinuities in our objective function (\cref{assumption:general}) warrants a simpler algorithm that is provably robust.
Therefore, we use a specialized routine, \(\mathtt{BracketMinimum}\), shown in \cref{alg:find_minimum}.

\(\mathtt{BracketMinimum}\) works in two stages: Given an initial point \(x_0\), it expands the search interval to the right (towards \(+\infty\); Line 4-15) and then to the left (towards \(-\infty\); Line 17-28).
During this, it generates a sequence of exponentially increasing intervals (Lines 5 and 18) and, in the second stage, stops when it detects points that satisfy the condition in \Cref{eq:triplet_condition}.
This algorithm was inspired by a Stack Exchange post by \citet{lavrov_answer_2017}, which was in turn inspired by the exponential search algorithm~\citep{bentley_almost_1976}.

Most tunable parameters of the adaptation method in \cref{section:stepsize} come from \(\mathtt{BracketMinimum}\).
In fact, the parameters of \(\mathtt{BracketMinimum}\) most crucially affect the overall computational performance of our schemes.
Recall that the convergence speed of GSS depends on the width of the provided triplet, \(\abs{a - c}\).
Given, this \(c\) and \(r\) affect performance as follows:
\begin{enumerate*}
    \item[(a)] The width of the resulting triplet, \(\abs{a - c}\), increases with \(r\) and \(c\).
    \item[(b)] Smaller \(r\) and \(c\) requires more time to find a valid triplet.
\end{enumerate*}
For a discussion on how to set these parameters, see \cref{section:theory}.


\subsection{\(\mathtt{Minimize}\) (\Cref{alg:minimize})}\label{section:minimize}

\begin{figure}[!t]
    \removelatexerror
    \input{algorithms/alg_minimize}
\end{figure}

We finally discuss our complete optimization routine, which is shown in \cref{alg:minimize}.
Given an initial point \(x_0\) and suitable assumptions, \(\mathtt{Minimize}\left(f, x_0, c, r, \epsilon\right)\) finds a point that is \(\epsilon\)-close to a local minimum.
This is done by first finding an interval that contains the minimum (Line 1) by calling \(\mathtt{BracketMinimum}\), which is then used by \(\mathtt{GoldenSectionSearch}\) for proper optimization (Line 2).
As such, the computation cost of the routine is the sum of the two stages.

There are four parameters: \(x_0, c, r, \epsilon\).
Admissible values of \(\epsilon\) will depend on the requirements of the downstream task.
On the other hand, \(c\) and \(r\) can be optimized.
The effect of these parameters on the execution time is analyzed in~\cref{thm:minimize}, while a discussion on how to interpret the theoretical analysis is in \cref{section:theory}.

\clearpage
\section{Theoretical Analysis}\label{section:analysis}

In this section, we will provide a formal theoretical analysis of the algorithms presented in \cref{section:algorithms} as well as the omitted proof of the theorems in the main text.

\subsection{Definitions and Assumptions}

Formally, when we say ``local minimum,'' we follow the following definition:
\vspace{2ex}
\begin{definition}[Definition 7; \citealp{rudin_principles_1976}]
    Consider some continuous function \(f : \mathcal{X} \to \mathbb{R} \) on a metric space \(\mathcal{X} \subseteq \mathbb{R}\). 
    We say \(f\) has a local minimum at \(x^* \in \mathcal{X}\) if there exists some \(\epsilon > 0\) such that \(f\left(x^*\right) \leq f\left(x\right)\) for all \(x \in \mathcal{X}\) with \(\abs{x^{*} - x} < \epsilon\) .
\end{definition}

Also, unimodal functions are defined as follows:
\vspace{2ex}
\begin{definition}
    We say \(f : [a, b] \to \mathbb{R}\) is unimodal if there exists some point \(x^*\) such that \(f\) is monotonically strictly decreasing on \([a, x^*]\) and strictly increasing on \([x^*, b]\).
\end{definition}

Now, recall that our adaptation objective in \cref{section:adaptation} operates on \(\mathbb{R}_{>0}\).
During adaptation, however, the objectives are log-transformed so that optimization is performed on \(\mathbb{R}\).
Therefore, it is convenient to assume everything happens on \(\mathbb{R}\).
That is, instead of \cref{assumption:initialization,assumption:unimodal}, we will work with the following assumptions that are equivalent up to log transformation:
\vspace{1ex}
\begin{assumption}\label{assumption:general}
    For the objective \(f : \mathbb{R} \to \mathbb{R} \cup \{+\infty\}\), we assume the following:
    \vspace{-2ex}
    \begin{enumerate}[itemsep=0ex]
        \item[(a)] There exists some \(x^{\infty} \in (-\infty, \infty]\) such that \(x\) is finite and continuous on \((-\infty, x^{\infty})\) and \(\infty\) on \([x^{\infty}, \infty)\).
        
        \item[(b)] There exists some \(\underline{x} \in (-\infty, x^{\infty})\) such that \(f\) is strictly monotonically decreasing on \((-\infty, \underline{x}]\).
        
        \item[(c)] There exists some \(\overline{x} \in [\underline{x}, x^{\infty})\) such that \(f\) is strictly monotonically increasing on \([\overline{x}, x^{\infty})\)
    \end{enumerate}
\end{assumption}

\begin{assumption}\label{assumption:general_minima}
    \(f\) is unimodal on \((-\infty, x^{\infty})\).
\end{assumption}

Evidently, these assumptions are equivalent to \cref{assumption:initialization,assumption:unimodal} by setting
\begin{align*}
    f\left(x\right) &= \mathcal{L}\left(\exp\left(x\right)\right)  
    \\
    \overline{x} &= \log \overline{h}
    \\
    \underline{x} &= \log \underline{h}
    \\
    x^{\infty} &= \log h^{\infty} \, .
\end{align*}

\newpage
\subsection{Proof of \cref{thm:pathdivergence}}
\printProofs[pathdivergence]

\subsection{Sufficient Condition for an Interval to Contain a Local Minimum (\Cref{thm:minima})}

Our adaptation algorithms primarily rely on GSS (\Cref{alg:gss}) to identify a local optimum.
To guarantee this, however, GSS needs to be initialized on an interval that contains a local minimum.
For this, the following lemma establishes a sufficient condition for identifying such intervals.
As such, we will use these conditions as invariants during the execution of GSS, such that it finds narrower and narrower intervals that continue to contain a local minimum.

\vspace{2ex}

\begin{lemma}\label{thm:minima}
    Suppose $f$ satisfies \cref{assumption:general} and there exist some $a < b < c$
    such that
    \begin{align*}
        f\left(b\right) \leq f\left(a\right) < \infty \quad\text{and}\quad f\left(b\right) \leq f\left(c\right) \, .
    \end{align*}
    Then \((a, c)\) contains a local minimum of \(f\) on $(-\infty, x^\infty)$.
\end{lemma}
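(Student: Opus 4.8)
The plan is to reduce the statement to the extreme value theorem applied on a well-chosen \emph{compact} subinterval of \((-\infty, x^\infty)\), and then to argue that the minimizer cannot sit at either endpoint, forcing it into the open interior, where global minimality over the compact set immediately upgrades to a local minimum in the paper's sense. First I would record the trivial consequences of the hypotheses: since \(f(b) \le f(a) < \infty\), both \(a\) and \(b\) lie in the region \((-\infty, x^\infty)\) where \(f\) is finite and continuous by \cref{assumption:general}(a), and \(a < b\). The argument then naturally splits according to whether the right endpoint \(c\) is also finite-valued.

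In the case \(c < x^\infty\), the interval \([a,c]\) is compact and contained in \((-\infty, x^\infty)\), so by continuity \(f\) attains its minimum \(m\) at some \(x^\star \in [a,c]\). Because \(b \in (a,c)\), I get the squeeze \(m \le f(b) \le f(a)\) together with \(m \le f(b) \le f(c)\). If \(m\) is strictly below both endpoint values then \(x^\star\) is interior and we are done; the only alternative is \(m = f(a)\) or \(m = f(c)\), in which case the corresponding chain (say \(f(a) = m \le f(b) \le f(a)\)) collapses to force \(f(b) = m\), so the interior point \(b\) is itself a minimizer. Either way a minimizer lies in \((a,c)\).

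The delicate case, and the main obstacle, is \(c \ge x^\infty\): here \(f(c) = \infty\) renders the hypothesis \(f(b) \le f(c)\) vacuous, so the right endpoint must be controlled by a completely different mechanism, namely the eventual monotonicity from \cref{assumption:general}(c). I would instead pick any \(c' \in (\max(b,\overline{x}),\, x^\infty)\); this interval is nonempty because both \(b\) and \(\overline{x}\) lie strictly below \(x^\infty\), and the choice guarantees \(c' < x^\infty \le c\) while placing \(c'\) strictly inside the increasing region \([\overline{x}, x^\infty)\). Running the extreme value theorem on the compact set \([a, c']\), strict monotonicity gives \(f(c') > f(c' - \epsilon) \ge m\) for all sufficiently small \(\epsilon > 0\), which rules out the right endpoint; the left endpoint \(a\) is ruled out exactly as before, using \(m \le f(b) \le f(a)\) and the same collapsing-chain observation. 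This deposits a minimizer in \((a, c') \subseteq (a, c)\).

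In both cases the resulting minimizer \(x^\star\) is an interior point of a compact interval on which it is globally minimal and which sits inside \((-\infty, x^\infty)\); taking the neighborhood radius to be its distance to the two endpoints verifies the paper's definition of a local minimum, and \(x^\star \in (a,c)\) as required. I expect the only genuine subtleties to be (i) recognizing that for \(c \ge x^\infty\) one must substitute \cref{assumption:general}(c) for the now-useless inequality at \(c\), and (ii) the degenerate possibility that the compact-interval minimum is attained only at an endpoint, which the squeeze cleanly resolves by promoting \(b\) to a minimizer. Everything else is routine continuity and compactness bookkeeping.
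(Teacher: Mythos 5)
Your proof is correct, and its skeleton matches the paper's: both arguments run the extreme value theorem on a compact subinterval of \((-\infty, x^{\infty})\) and dispose of the degenerate endpoint-minimizer case by observing that the inequality \(f(b) \leq \min\{f(a), f(\cdot)\}\) then forces the interior point \(b\) itself to be a minimizer. The genuine difference is in the truncation case \(c \geq x^{\infty}\). The paper insists on manufacturing a \emph{valid triplet} (middle value \(\leq\) both finite endpoint values) and therefore needs a three-way case split: \((a,b,d)\) when \(b \geq \overline{x}\), \((a,b,\overline{x})\) when \(b < \overline{x}\) and \(f(\overline{x}) \geq f(b)\), and the re-centered triplet \((b,\overline{x},d)\) when \(b < \overline{x}\) and \(f(\overline{x}) < f(b)\). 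You instead pick a single truncation point \(c' \in (\max(b,\overline{x}),\, x^{\infty})\) and never verify the triplet inequality at \(c'\) at all: you rule out \(c'\) as a minimizer of \(f\) on \([a,c']\) directly, via strict monotonicity of \(f\) on \([\overline{x}, x^{\infty})\) from \cref{assumption:general}(c), which gives \(f(c') > f(c'-\epsilon) \geq m\) for small \(\epsilon\). This substitution of a monotonicity argument for the triplet invariant eliminates the case analysis entirely and yields a uniform argument; what the paper's formulation buys in exchange is that its intermediate claim is stated exactly in terms of the triplet condition \eqref{eq:triplet_condition} that is maintained as the invariant of \(\mathtt{GoldenSectionSearch}\) and \(\mathtt{BracketMinimum}\), so the same statement gets reused verbatim in the proofs of \cref{thm:gss} and \cref{lemma:findminimum}. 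Your bookkeeping (nonemptiness of \((\max(b,\overline{x}), x^{\infty})\), taking \(\epsilon < \min(c'-\overline{x},\, c'-b)\), and the radius argument certifying a local minimum in the sense of the paper's definition) is all sound.
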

\begin{proof}
    Consider any triplet $(a', b', c')$ consisting of 
    three points $a' < b' < c'$ with
    $f(b') \leq f(a') < \infty$ and $f(b') \leq f(c') < \infty$.
    Then $(a', c')$ contains a local minimum of $f$:
    $f$ attains its minimum on $[a', c']$ by the extreme value theorem,
    which is either on $(a', c')$---in which case the result holds immediately---or on $\{a', c'\}$---in which case the result holds because 
    $b'$ is a local minimum since $f(b') \leq \min\{f(a'), f(c')\}$.
 
    We now apply this result to triplets contained in $[a,c]$.
    First, if $c < x^\infty$, use the triplet $(a', b', c') = (a,b,c)$.
    For the remaining cases, assume $c \geq x^\infty$.
    If $b \geq \overline{x}$, set the triplet $(a', b', c') = (a, b, d)$
    for any $d \in (b, x^\infty)$. If $b < \overline{x}$
    and $f(\overline{x}) \geq f(b)$, set the triplet $(a', b', c') = (a, b, \overline{x})$. Otherwise, if $b < \overline{x}$ and $f(\overline{x}) < f(b)$,
    set the triplet $(a', b', c') = (b, \overline{x}, d)$ for any $d \in (\overline{x}, x^\infty)$.
\end{proof}


    
    

\newpage
\subsection{\(\mathtt{GoldenSectionSearch}\) (\Cref{thm:gss})}

We first establish that, under suitable initialization, GSS is able to locate a local minimum.
Most existing results assume that \(f\) is unimodal~\citep[\S 7.1]{luenberger_linear_2008} and show that GSS converges to the unique global minimum.
Here, we prove a more general result that holds under weaker conditions: GSS can also find a local minimum even when unimodality doesn't hold.
For this, we establish that our assumptions in \cref{assumption:general} and initializing at a triplet \((a, b, c)\) satisfying the condition in \cref{thm:minima} are sufficient.
Furthermore, while it is well known that GSS achieves a linear convergence rate with coefficient \(\phi \triangleq (1 + \sqrt{5})/2\), we could not find a proof that exactly applied to the GSS variant by~\citet{press_numerical_1992}, which is the one we use.
Therefore, we also prove linear convergence rate with a proof that precisely applies to \Cref{alg:gss}.

\vspace{1ex}

\begin{lemma}\label{thm:gss}
    Suppose \cref{assumption:general} holds.
    Then, for any triplet \((a, b, c)\) satisfying \(a < b < c\), \(f(b) \leq f(a) < \infty\), and \(f(b) \leq f(c)\), \(\mathtt{GoldenSectionSearch}\left(f, a, b, c, \epsilon\right)\) returns a point \(x^* \in (-\infty, x^{\infty})\) that is \(\epsilon\)-close to a local minimum after
    \[
        \mathrm{\Theta}\left(
        \log \abs{c - a} \frac{1}{\epsilon} 
        \right)
    \]
    objective evaluations, where \(\phi = \left(1 + \sqrt{5}\right)/2 \).
\end{lemma}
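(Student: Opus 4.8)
The plan is to prove correctness through a loop invariant and then read off the rate from the geometric shrinkage of the bracket. Concretely, I would track the four points $x_0 < x_1 < x_2 < x_3$ maintained by \cref{alg:gss} and show that the predicate
\[
    J:\quad f(x_0) < \infty \quad\text{and}\quad \min\{f(x_1),f(x_2)\} \le \min\{f(x_0),f(x_3)\}
\]
is a loop invariant. Granting $J$, \cref{thm:minima} immediately implies that the current bracket $(x_0,x_3)$ always contains a local minimum on $(-\infty,x^{\infty})$: whichever interior point attains $\min\{f(x_1),f(x_2)\}$ serves as the middle point of a valid triplet together with $x_0,x_3$, and its value is finite and at most $f(x_0)<\infty$, exactly as the hypotheses of \cref{thm:minima} require.

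First I would verify that $J$ holds after the initialization (Lines 2--12). In both branches of the opening conditional, the input point $b$ becomes one of the two interior points while the other is freshly placed in $(b,c)$ or $(a,b)$; since the input triplet satisfies $f(b)\le f(a)<\infty$ and $f(b)\le f(c)$, the interior minimum is bounded by $f(b)\le\min\{f(a),f(c)\}$, giving $J$. For preservation, I would examine the two branches of the \texttt{while} body separately. A small but essential observation handles the degenerate regime: because \cref{assumption:general}(a) makes $f=+\infty$ exactly on $[x^{\infty},\infty)$, finiteness is monotone toward $-\infty$, so the test $f_2<f_1$ can never hold with $f_1=\infty$; hence in the right-moving branch the new left endpoint $x_0'=x_1$ inherits a finite value, and in the left-moving branch $x_0'=x_0$ is unchanged. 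Combining this with the comparison that triggered the branch ($f_2<f_1$ or $f_1\le f_2$) and the old invariant yields $\min\{f(x_1'),f(x_2')\}\le\min\{f(x_0'),f(x_3')\}$ in each case, so $J$ persists.

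Next I would pin down the geometry. I would check by direct substitution that the update rules of \cref{alg:gss} preserve the symmetric golden-section placement $x_1=x_0+\phi^{-2}W$, $x_2=x_0+\phi^{-1}W$ with $W=x_3-x_0$ and $\phi^{-1}=(\sqrt5-1)/2$, using the identities $1-\phi^{-1}=\phi^{-2}$ and $2\phi^{-1}-1=\phi^{-3}$; this gives $x_2-x_1=\phi^{-3}W$ and the per-iteration contraction $W\mapsto\phi^{-1}W$. For the stopping guarantee, at termination $\abs{x_1-x_2}=\phi^{-3}W\le\epsilon/2$, so $W\le\phi^{3}\epsilon/2$. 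The returned value is the interior point of smaller objective: if it is $x_1$ (so $f_1\le f_2$), then $(x_0,x_1,x_2)$ is a valid triplet by $J$, so \cref{thm:minima} localizes a local minimum to $(x_0,x_2)$; since $x_1\in(x_0,x_2)$ with $\max\{x_1-x_0,x_2-x_1\}=\phi^{-2}W\le\phi\,\epsilon/2<\epsilon$, the returned point is $\epsilon$-close to it, and the case of returning $x_2$ is symmetric via the triplet $(x_1,x_2,x_3)$. Finiteness of the returned value also yields membership in $(-\infty,x^{\infty})$.

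Finally, for the rate, since $W$ contracts by the fixed factor $\phi^{-1}$ each iteration and each iteration performs exactly one new objective evaluation, the loop runs for $\Theta\!\left(\log(\abs{c-a}/\epsilon)\right)$ iterations, and adding the two initialization evaluations leaves the count at $\Theta\!\left(\log(\abs{c-a}/\epsilon)\right)$. The main obstacle is the bookkeeping for the preservation of $J$ across both branches while correctly accommodating the $+\infty$ regime of \cref{assumption:general}; once the ``finiteness is monotone toward $-\infty$'' observation is in hand, the remaining case analysis and the golden-ratio distance computations are routine.
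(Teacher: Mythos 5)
Your proof is correct and takes essentially the same route as the paper's: the identical loop invariant (finiteness of the left endpoint plus the interior minimum dominating both endpoint values), the same inductive verification of the golden-section placement identities giving the per-iteration contraction by $\phi^{-1}$, the same appeal to \cref{thm:minima} for local-minimum containment, and the same $\Theta\left(\log\left(\abs{c-a}/\epsilon\right)\right)$ evaluation count. If anything, your write-up is slightly tighter in two spots the paper glosses over: the explicit ``finiteness is monotone toward $-\infty$'' observation needed to justify that the new left endpoint inherits a finite value in the $f_2 < f_1$ branch, and the localization of the minimum to the terminal sub-triplet $(x_0, x_1, x_2)$ or $(x_1, x_2, x_3)$, which yields genuine $\epsilon$-closeness rather than closeness up to the full terminal bracket width $\phi^3\epsilon/2$.
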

\begin{proof}
    For clarity, let us denote the value of the variables \(x_0, x_1, x_2, x_3\) set at iteration \(k \geq 1\) of the while loop in Line 13-27 as \(x_0^k, x_1^k, x_2^k, x_3^k\).
    Before the while loop at \( k = 0\), they are initialized as follows: If \(\abs{c - b} \geq \abs{b - a} \), 
    \begin{align*}
        \left(x_0^{0}, \; x_1^{0}, \; x_2^{0},\; x_3^{0} \right)
        = 
        \left(a, \; b , \; b + \left(1 - \phi^{-1} \right) \left(c - b\right) , \; c \right) 
    \end{align*}
    and 
    \begin{align*}
        \left(x_0^{0}, \; x_1^{0}, \; x_2^{0},\; x_3^{0} \right)
        = 
        \left(a, \; b + \left(1 - \phi^{-1} \right) \left(b - a\right) , \; b , \; c \right) 
    \end{align*}
    otherwise.
    For all \(k \geq 0\), the following set of variables is set as follows: If \(f \left(x_2^k\right) < f \left(x_1^k\right)\), the next set of variables is set as
    \begin{align}
        &\left(x_0^{k+1}, \; x_1^{k+1}, \; x_2^{k+1},\; x_3^{k+1} \right)
        \nonumber
        \\
        &\qquad
        \triangleq
        \left(x_1^k, \; x_2^k, \; \phi^{-1} x_2^k + \left(1 - \phi^{-1}\right) x_3^k, \; x_3^{k} \right) \, ,
        \label{eq:gss_recursion_cond1}
    \end{align}
    and
    \begin{align*}
        &(x_0^{k+1}, \, x_1^{k+1}, \, x_2^{k+1}, \, x_3^{k+1}) 
        \\
        &\qquad
        \triangleq
        (x_0^k, \, \phi^{-1} x_1^k + \left(1 - \phi^{-1}\right) x_0^k, \, x_1^k, \, x_2^{k}) 
    \end{align*}
    otherwise.
    We also denote \(f_2^k \triangleq f\left(x_2^k\right)\)  and \(f_1^k \triangleq f\left(x_1^k\right)\).
    
    Assuming the algorithm terminates at some \(k^* < \infty\), the algorithm outputs either \(x_1^{k^*}\) or \(x_2^{k^*}\).
    Therefore, it suffice to show that \(k^* < \infty\), \(\abs{x_2^{k^*} - x_1^{k^*}} \leq \epsilon/2\), and that the interval \((x_0^{k^*}, x_3^{k^*})\) contains a local minimum.

    First, let's establish that \(k^* < \infty\).
    GSS terminates as soon as \(\abs{x_3^{k} - x_0^{k}} \leq \epsilon\) for some \(0 \geq k < \infty\).
    We will establish this by showing that \(\abs{x_3^k - x_0^k}\) satisfies a contraction.
    For this, however, we first have to show that \(x_1^k, x_2^k\) satisfy 
    \begin{align}
        x_1^k 
        &= 
        \phi^{-1} x_0^k + \left( 1 - \phi^{-1} \right) x_3^k
        \label{eq:gss_x1k_assumption}
        \\
        x_2^k 
        &= 
        \left( 1 - \phi^{-1} \right) x_0^k + \phi^{-1} x_3^k
        \label{eq:gss_x2k_assumption}
    \end{align}
    at all \(k \geq 0\).
    We will show this via induction. 
    Before we proceed, notice that the name ``golden'' section search comes from the fact that \(\phi\), the golden ratio, is the solution to the equation
    \begin{equation}
        \phi^2 = \phi + 1 \quad\Rightarrow\quad 1 - \phi^{-1} = \phi^{-2} \, . \label{eq:golden_ratio_equation}
    \end{equation}
    Now, for some \(k > 0\), suppose \Cref{eq:gss_x1k_assumption,eq:gss_x2k_assumption} hold.
    Then, if \(f_2^k < f_1^k\), 
    \begin{alignat*}{3}
        x_1^{k+1} 
        &= 
        x_2^k 
        \\
        &= 
        \left( 1 - \phi^{-1} \right) x_0^k +  \phi^{-1} x_3^k ,
        \\
        &= 
        \phi^{-2} x_0^k + \left(1 - \phi^{-2}\right) x_3^k
        &&\text{ (\cref{eq:golden_ratio_equation}) }
        \\
        &= 
        \phi^{-2} x_0^k + \left(1 + \phi^{-1}\right) \left(1 - \phi^{-1} \right) x_3^k
        \\
        &= 
        \phi^{-1} \left( \phi^{-1} x_0^k + \left(1 - \phi^{-1}\right) x_3^k \right) 
        \\
        &
        \qquad\qquad
        + \left(1 - \phi^{-1}\right) x_3^{k} \, ,
        \\
        &= 
        \phi^{-1} x_1^{k} + \left(1 - \phi^{-1}\right) x_3^{k} 
        &&\text{ (\cref{eq:gss_x1k_assumption}) }
        \\
        &= 
        \phi^{-1} x_0^{k+1} + \left(1 - \phi^{-1}\right) x_3^{k+1} \, .
        &&\text{ (\cref{eq:gss_recursion_cond1}) }
    \end{alignat*}
    This establishes \Cref{eq:gss_x1k_assumption} for \(k+1\).
    Similarly,
    \begin{alignat*}{4}
        x_2^{k+1} 
        &= 
        \phi^{-1} x_2^k +  \left(1 - \phi^{-1}\right) x_3^k \, ,
        \\
        &=
        \phi^{-1} \left( \left(1 - \phi^{-1}\right) x_0^k + \phi^{-1} x_3^k \right) 
        \\
        &\qquad\qquad
        + 
        {\left(1 - \phi^{-1} \right)} x_3^k
        &&\quad \text{(\cref{eq:gss_x2k_assumption})}
        \\
        &=
        \phi^{-1} \left(1 - \phi^{-1}\right) x_0^k 
        \\
        &\qquad\qquad
        + 
        {\left(1 - \phi^{-1} + \phi^{-2} \right)} x_3^k
        \\
        &=
        \left(1 - \phi^{-1}\right) \left(\phi^{-1} x_0^k 
        + \left(1 - \phi^{-1}\right) x_3^k \right) 
        \\
        &\qquad\qquad
        + \phi^{-1} x_3^{k} \, ,
        \\
        &=
        \left(1 - \phi^{-1}\right) x_1^{k} + \phi^{-1} x_3^{k+1}
        &&\quad \text{(\cref{eq:gss_x1k_assumption})}
        \\
        &=
        \left(1 - \phi^{-1}\right) x_0^{k+1} + \phi^{-1} x_3^{k+1} \, .
        &&\quad \text{(\cref{eq:gss_recursion_cond1})}
    \end{alignat*}
    This establishes \Cref{eq:gss_x2k_assumption} for \(k+1\).
    The proof for the remaining case of \(f_2^k \geq f_1^k\) is identical due to symmetry.
    Furthermore, the base case for \(k = 0\) automatically holds due to the condition on \((a, b, c)\).
    Therefore, \Cref{eq:gss_x1k_assumption,eq:gss_x2k_assumption} hold for all \(k \geq 0\).
    
    From \Cref{eq:gss_x1k_assumption,eq:gss_x2k_assumption}, we now have a precise rate of decrease for the interval \(\abs{x_3^{k} - x_0^{k}}\).
    That is, for \(f_2^k < f_1^k\), 
    \begin{align*}
        \abs{ x_3^{k} - x_0^{k} }
        &=
        \abs{ x_3^{k-1} - x_1^{k-1} }
        \\
        &=
        \abs{ x_3^{k-1} - \left( \phi^{-1} x_0^{k-1} + \left(1 - \phi^{-1}\right) x_3^{k-1} \right) }
        \\
        &=
        \phi^{-1} \abs{ x_3^{k-1} - x_0^{k-1}  } \,
    \end{align*}
    and for \(f_2^k \geq f_1^k\),
    \begin{align*}
        \abs{ x_3^{k} - x_0^{k} }
        &=
        \abs{ x_2^{k-1} - x_0^{k-1} }
        \\
        &=
        \abs{ \left( \left(1 - \phi^{-1}\right) x_0^{k-1} + \phi^{-1} x_3^{k-1} \right) - x_0^{k-1} }
        \\
        &=
        \phi^{-1} \abs{ x_3^{k-1} - x_0^{k-1}  } \, .
    \end{align*}
    Furthermore, 
    This implies, for all \(k \geq 1\), the interval \([x_3^k, x_0^k]\) shrinks at a geometrical rate 
    \[
        \abs{x_3^k - x_0^k} = \phi^{-k} \abs{x_3^0 - x_0^0} \, .
    \]
    Then
    \begin{align*}
        \abs{x_2^k - x_1^k} 
        &=
        \left(2 \phi^{-1} -  1\right)
        \abs{  
          x_0^k - x_3^k 
        } 
        \leq \epsilon/2
    \end{align*}
    can be guaranteed by iterating until the smallest iteration count \(k \geq 1\) that satisfies 
    \[
        \phi^{-k} \abs{x_3^0 - x_0^0} \leq \frac{1}{2 \phi^{-1} -  1} \, \frac{\epsilon}{2} \, ,
    \]
    \[
        k
        =
        \left\lceil \frac{1}{\log \phi} \log \frac{ 2 \left(2 \phi^{-1} - 1\right)  \abs{c - a} }{ \epsilon } \right\rceil \, ,
    \]
    which yields the execution time complexity statement.

    We now prove that the interval \((x_0^{k^*}, x_0^{k^*})\) contains a local minima.
    For this, we will prove a stronger result that \((x_0^{k}, x_3^{k})\) contains a local minimum for all \(k \geq 0\) by induction.
    Suppose, for some \(k \geq 1\), 
    \begin{align}
        \min\left( f_1^k, f_2^k \right) &\leq f(x_0^k) < \infty
        \label{eq:gss_induction_cond1}
        \\
        \min\left( f_1^k, f_2^k \right) &\leq f(x_3^k)
        \label{eq:gss_induction_cond2}
    \end{align}
    hold.
    If \(f_2^k < f_1^k\), the next set of variables is set as
    \begin{equation}
        (x_0^{k+1}, x_1^{k+1}, x_3^{k+1}) = (x_1^k, x_2^k, x_3^{k}) \, ,
    \end{equation}
    which guarantees that the inequalities
    \begin{align*}
        \min\left( f_1^{k+1}, f_2^{k+1} \right) &\leq f_1^{k+1} = f_2^k \leq f(x_0^{k+1}) < \infty \\
        \min\left( f_1^{k+1}, f_2^{k+1} \right) &\leq f_1^{k+1} = f_2^k \leq f(x_3^{k+1})
    \end{align*}
    hold.
    Otherwise, if \(f_2^k \geq f_1^k\), 
    \begin{equation*}
        (x_0^{k+1}, x_2^{k+1}, x_3^{k+1}) = (x_0^k, x_1^k, x_2^{k}) \, ,
    \end{equation*}
    guarantee
    \begin{align*}
        \min\left( f_1^{k+1}, f_2^{k+1} \right) &\leq f_2^{k+1} = f_1^k \leq f(x_0^{k+1}) < \infty \\
        \min\left( f_1^{k+1}, f_2^{k+1} \right) &\leq f_2^{k+1} = f_1^k \leq f(x_3^{k+1})  \, .
    \end{align*}
    The base case \(k = 0\) trivially holds by assumption \(f(b) < f\left(x_0^0\right) < \infty\), \(f(b) < f(x_3^0)\), and the fact that either \(x_1^0\) or \(x_2^0\) is set as \(b\).
    Therefore, \Cref{eq:gss_induction_cond1,eq:gss_induction_cond2} hold for all \(k \geq 0\).
    Now, \Cref{eq:gss_induction_cond1,eq:gss_induction_cond2} imply that either \((x_0^k, x_1^k, x_3^k)\) or \((x_0^k, x_2^k, x_3^k)\) satisfy the condition in \Cref{thm:minima}.
    Therefore, a local minimum is contained in \((x_0^k, x_3^k)\) for all \(k \geq 0\).
\end{proof}

\newpage
\subsection{\(\mathtt{BracketMinimum}\) (\Cref{lemma:findminimum})}

We now prove that \(\mathtt{BracketMinimum}\) returns a triplet \((x^-, x_{\mathsf{mid}}, x^+)\) satisfying the condition in \cref{thm:minima}.
Furthermore, under \cref{assumption:general}, we analyze the width of the initial search interval represented by the triplet, \(\abs{x^+ - x^-}\).
Note that, while \(\mathtt{BracketMinimum}\) is designed to be valid even if \(x_0 \geq x^{\infty}\), accommodating this complicates the analysis.
Therefore, in the analysis that will follow, we will assume \(x_0 < x^{\infty}\).

\vspace{1ex}
\begin{lemma}\label{lemma:findminimum}
    Suppose \cref{assumption:general} holds.
    Then \(\mathtt{BracketMinimum}\left(f, x_0, r, c\right)\) for \(x_0 \in (-\infty, x^{\infty})\) returns a triplet \((x^-, x_{\mathsf{mid}}, x^+)\), where \(x^- < x_{\mathsf{mid}} < x^+\),
    \begin{equation}
    f\left(x_{\mathsf{mid}}\right) \leq f\left(x^-\right) < \infty, \quad
    f\left(x_{\mathsf{mid}}\right) \leq f\left(x^+\right),
    \label{eq:bracket_minimum_output_guarantee}
    \end{equation}
    and
    \begin{align*}
        \abs{ x^+ - x^- } 
        \leq 
        r^2 \left( (r + 1) [\overline{x} - x_0]_+ + [x_0 - \underline{x} ]_+ \right) + 3 r^2 c
    \end{align*}
    after
    \begin{align*}
        \mathrm{O}\left\{ {\left(\log r\right)}^{-1}\log_+\left((r [\overline{x} - x_0]_+ + [x_0 - \underline{x}]_+  )/c \right)\right\}
    \end{align*}
    objective evaluations.
\end{lemma}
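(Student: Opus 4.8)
The plan is to read off the behaviour of \(\mathtt{BracketMinimum}\) as two exponential searches and to track a single monotonicity invariant throughout. Write \(K^+\) and \(K^-\) for the number of completed iterations of the first (rightward) and second (leftward) \texttt{while} loops, and let \(x_0'\) denote the value of the variable \(x_0\) after the first loop, so that \(x_0' = x_0 + c r^{K^+ - 1}\) for \(K^+ \geq 1\) and \(x_0' = x_0\) for \(K^+ = 0\). The output is then \(x^+ = x_0 + c r^{K^+}\), \(x^- = x_0' - c r^{K^-}\), and \(x_{\mathsf{mid}} = x_0' - c r^{K^- - 1}\) (with \(x_{\mathsf{mid}} = x_0'\) when \(K^- = 0\)), since within the first loop the test points are formed from the \emph{original} \(x_0\) and the reassignment \(x_0 \leftarrow x\) happens only at the break. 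The crucial observation is that each loop advances its accepted iterate only when the objective fails to increase; hence \(f\) is non-increasing along the accepted right iterates up to \(x_0'\) and along the accepted left iterates down to \(x_{\mathsf{mid}}\). In particular \(f(x_{\mathsf{mid}}) \leq f(x_0')\), and since \(y\) is initialized finite at \(x_0 < x^{\infty}\) and is never overwritten by a \(+\infty\) value, every accepted iterate—\(x_0'\), \(x_{\mathsf{mid}}\), and \(x^-\)—lies strictly below \(x^{\infty}\), so \(f\) is finite there by \cref{assumption:general}(a).

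For the output guarantee, the ordering \(x^- < x_{\mathsf{mid}} < x^+\) is immediate since \(c r^{K^-} > c r^{K^- - 1} \geq 0\) and \(x_0' < x^+\). The two break conditions supply the inequalities directly: the break in the second loop is triggered by \(f(x_{\mathsf{mid}}) < f(x^-)\), while the break in the first loop gives \(f(x_0') < f(x^+)\), which combined with \(f(x_{\mathsf{mid}}) \leq f(x_0')\) yields \(f(x_{\mathsf{mid}}) < f(x^+)\). Together with \(f(x^-) < \infty\) (from \(x^- < x^{\infty}\)), this is exactly \cref{eq:bracket_minimum_output_guarantee}, so \((x^-, x_{\mathsf{mid}}, x^+)\) meets the hypotheses of \cref{thm:minima}.

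For termination and the query count, I would show the rightward loop must break no later than the first iteration whose \emph{previous} test point \(x_0 + c r^{k-1}\) reaches \(\overline{x}\): once the previous point lies in \([\overline{x}, x^{\infty})\), strict monotone increase there (and the value \(+\infty\) on \([x^{\infty}, \infty)\)) forces an increase at the next point, triggering the break. Solving \(c r^{k-1} \geq \overline{x} - x_0\) gives \(K^+ = \mathrm{O}(1 + \log_+((\overline{x} - x_0)/c)/\log r)\). Symmetrically, using strict monotone decrease on \((-\infty, \underline{x}]\) from \cref{assumption:general}(b), the leftward loop breaks once the current iterate \(x_0' - c r^{k-1}\) drops to \(\underline{x}\), giving \(K^- = \mathrm{O}(1 + \log_+((x_0' - \underline{x})/c)/\log r)\). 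Substituting \(x_0' - \underline{x} \leq [x_0 - \underline{x}]_+ + c r^{K^+ - 1}\) together with the bound on \(c r^{K^+ - 1}\) obtained below, and adding the single evaluation per iteration, collapses \(K^+ + K^-\) into the stated \(\mathrm{O}\{(\log r)^{-1}\log_+((r[\overline{x} - x_0]_+ + [x_0 - \underline{x}]_+)/c)\}\).

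Finally, for the width, note \(x_0 - x_0' = -c r^{K^+ - 1} \leq 0\), so \(\abs{x^+ - x^-} = (x_0 - x_0') + c r^{K^+} + c r^{K^-} \leq c r^{K^+} + c r^{K^-}\). The heart of the bound is the estimate \(c r^{K^+} \leq r^2 [\overline{x} - x_0]_+ + r c\): when \(\overline{x} - x_0 \leq c\) the loop stops by the first step so \(c r^{K^+} \leq c r\), and when \(\overline{x} - x_0 > c\) the ceiling in the exponential schedule gives \(c r^{K^+} < r^2(\overline{x} - x_0)\). The same argument yields \(c r^{K^-} \leq r^2 [x_0' - \underline{x}]_+ + r c\); plugging in \([x_0' - \underline{x}]_+ \leq [x_0 - \underline{x}]_+ + r[\overline{x} - x_0]_+ + c\) (from \(c r^{K^+ - 1} \leq r[\overline{x} - x_0]_+ + c\)) and collecting terms produces exactly \(r^2((r+1)[\overline{x} - x_0]_+ + [x_0 - \underline{x}]_+) + 3 r^2 c\). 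I expect the main obstacle to be precisely this constant bookkeeping: one must use the \emph{tight} form \(c r^{K} \leq r^2[\cdot]_+ + r c\) rather than the easier \(r^2[\cdot]_+ + r^2 c\), so that the accumulated slack in the \(c\)-terms closes up to \(3 r^2 c\) once \(r > 1\), while uniformly absorbing the degenerate cases \(K^+ = 0\), \(K^- = 0\), \(\overline{x} \leq x_0\), and \(x_0 \leq \underline{x}\) through the \([\cdot]_+\) and \(\log_+\) conventions.
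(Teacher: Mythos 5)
Your proof is correct and takes essentially the same approach as the paper's: a two-stage exponential-search analysis in which the break indices are bounded via the monotonicity of \(f\) on \([\overline{x}, x^{\infty})\) and \((-\infty, \underline{x}]\), the bracket inequalities follow from the break conditions plus the non-increase of \(f\) along accepted iterates, and the width is bounded through the per-stage estimate \(c r^{K} \leq r^{2}[\cdot]_+ + rc\), matching (indeed cleaning up) the paper's constant bookkeeping. The only inconsequential slip is labeling \(x^-\) an ``accepted iterate''---it is the rejected test point at which the second loop breaks---but your stated justification that \(f(x^-) < \infty\) because \(x^- < x_{\mathsf{mid}} < x^{\infty}\) is the correct one, so nothing breaks.
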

\begin{proof}
    \(\mathtt{BracketMinimum}\) has two stages: exponential search to the right (Stage I) and exponential search to the left (Stage II).
    In the worst case, Stage I must pass $\overline{x}$ moving to the right starting from $x_0$, which takes at most $\mathrm{O}(\bar{k}_r)$ iterations, where
    \begin{align*}
        \bar{k}_r = \left\lceil (\log r)^{-1}\log_+((\overline{x} - x_0 )/c) \right\rceil \; .
    \end{align*} 
    Similarly, in the worst case Stage II must pass $\underline{x}$ moving to the left starting from $x_0 + cr^{\bar{k}_r}$,
    which takes at most $\mathrm{O}(\bar{k}_\ell)$ iterations, where
    \begin{align*}
        \bar{k}_\ell &= \left\lceil {\left(\log r\right)}^{-1} \log_+\left(\left(x_0 + c r^{\bar{k}_r} - \underline{x} \right)/c\right)\right\rceil 
        \\
        &\leq \left\lceil {\left(\log r\right)}^{-1}\log_+\left(x_0 
 + \left(r {\left[\overline{x} - x_0\right]}_+ - \underline{x} \right)/c \right)\right\rceil \, .
    \end{align*}
    Adding these two costs yields the stated result.
    At the end of Stage I, by inspection, we know that $f(x) \leq f(x^+)$,
    and that $f(x) < \infty$. 
    Also, Stage II continues until the first increase in objective value,
    which guarantees that $\infty > f(x^-) \geq f(x_{\text{mid}})$
    and $f(x_{\text{mid}}) \leq f(x) \leq f(x^+)$.
    Finally, 
    \begin{align*}
        \abs{x^+ - x^-} 
        &\leq 
        \left(x_0 + c r^{\bar k_r+1}\right) - \left(x_0 + c r^{\bar{k}_r} - c r^{\bar{k}_\ell+1} \right)
        \\
        &\leq 
        r c \left( r^{\bar{k}_r} + r^{\bar{k}_\ell} \right)
        \\
        &\leq 
        r \left( r  [\overline{x} - x_0]_+ + r c + r [x_0 + c r^{\bar{k}_r} - \underline{x}]_+ + r c \right)
        \\
        &\leq r^2 \left( [\overline{x} - x_0]_+ + [x_0 - \underline{x} ]_+ + c r^{\bar{k}_r} + 2 c \right) 
        \\
        &\leq 
        r^2 \left( [\overline{x} - x_0]_+ + [x_0 - \underline{x} ]_+ + r [\overline{x} - x_0]_+ + 3 c \right) 
        \\
        &=
        r^2 \left( (r + 1) [\overline{x} - x_0]_+ + [x_0 - \underline{x} ]_+ \right) + 3 r^2 c \; .
    \end{align*}
\end{proof}

\clearpage
\subsection{\(\mathtt{Minimize}\) (\Cref{thm:minimize})}

We prove that combining \(\mathtt{BracketMinimum}\) and \(\mathtt{GoldenSectionSearch}\), which we call \(\mathtt{Minimize}\), results in an optimization algorithm that finds a point \(\epsilon\)-close to local minimum in \(\mathrm{O}\left(\log \left(\Delta/\epsilon\right) \right)\) time.
\vspace{1ex}

\begin{theorem}\label{thm:minimize}
    Suppose \cref{assumption:general} holds.
    Then \(\mathtt{Minimize}\left(f, x_0, c, r, \epsilon\right)\) returns a point that is \(\epsilon\)-close to a local minimum after \(\mathcal{C}_{\mathsf{bm}} + \mathcal{C}_{\mathsf{gss}}\) objective evaluations, where
    \begin{align*}
        \mathcal{C}_{\mathsf{bm}} 
        &= 
        \mathrm{O}\left\{
        \frac{1}{\log r}
        \log_+ \left(\Delta \frac{r}{c}\right)
        \right\}
        \\
        \mathcal{C}_{\mathsf{gss}} 
        &= 
        \mathrm{O}\left\{ 
        \log_+ 
        \left(
        r^3 \Delta + r^2 c
        \right)
        \frac{1}{\epsilon}
        \right\}   
        \, ,
    \end{align*}
    where
    \(
        \Delta \triangleq
        {\left[ x_0 - \underline{x} \right] }_+
        +
        {\left[ \overline{x} - x_0 \right]}_+ \; .
    \)
\end{theorem}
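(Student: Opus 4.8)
The plan is to prove \cref{thm:minimize} purely by composing the two subroutine guarantees already established: \cref{lemma:findminimum} for $\mathtt{BracketMinimum}$ (Line 1 of \cref{alg:minimize}) and \cref{thm:gss} for $\mathtt{GoldenSectionSearch}$ (Line 2). The key structural observation --- the reason no genuinely new work is required --- is that the \emph{postcondition} of $\mathtt{BracketMinimum}$ in \cref{eq:bracket_minimum_output_guarantee} is exactly the \emph{precondition} on the initial triplet demanded by \cref{thm:gss}, so the two pieces chain together without any additional glue.

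First I would invoke \cref{lemma:findminimum}. The input assumption of $\mathtt{Minimize}$ is $f(x_0) < \infty$, which by \cref{assumption:general}~(a) means $x_0 \in (-\infty, x^{\infty})$, so the lemma applies. It returns a triplet $(x^-, x_{\mathsf{int}}, x^+)$ with $x^- < x_{\mathsf{int}} < x^+$ and $f(x_{\mathsf{int}}) \leq f(x^-) < \infty$, $f(x_{\mathsf{int}}) \leq f(x^+)$, at a cost of $\mathrm{O}\{(\log r)^{-1}\log_+((r\,{[\overline{x}-x_0]}_+ + {[x_0-\underline{x}]}_+)/c)\}$ evaluations. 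Since $r > 1$, I would bound $r\,{[\overline{x}-x_0]}_+ + {[x_0-\underline{x}]}_+ \leq r({[\overline{x}-x_0]}_+ + {[x_0-\underline{x}]}_+) = r\Delta$, which collapses the cost to the claimed $\mathcal{C}_{\mathsf{bm}} = \mathrm{O}\{(\log r)^{-1}\log_+(\Delta\, r/c)\}$.

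Next I would feed this triplet, with $b = x_{\mathsf{int}}$, into \cref{thm:gss}. Because its hypotheses are met verbatim, GSS returns a point $\epsilon$-close to a local minimum after $\Theta(\log(\abs{x^+ - x^-}/\epsilon))$ evaluations. To turn this into $\mathcal{C}_{\mathsf{gss}}$, I would substitute the width bound $\abs{x^+ - x^-} \leq r^2((r+1){[\overline{x}-x_0]}_+ + {[x_0-\underline{x}]}_+) + 3 r^2 c$ from \cref{lemma:findminimum} and simplify using $r > 1$: the inequality $r+1 \leq 2r$ yields $\abs{x^+ - x^-} = \mathrm{O}(r^3\Delta + r^2 c)$, whence $\mathcal{C}_{\mathsf{gss}} = \mathrm{O}(\log_+((r^3\Delta + r^2 c)/\epsilon))$. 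Adding $\mathcal{C}_{\mathsf{bm}}$ and $\mathcal{C}_{\mathsf{gss}}$ gives the total. There is no deep obstacle here --- the argument is essentially bookkeeping --- and the only points requiring mild care are the notational overload of $c$ (the exponential-search coefficient versus the right endpoint named $c$ in \cref{thm:gss}, which I would rename to $x^+$ when applying the lemma) and the two $\log_+$ simplifications that repeatedly exploit $r > 1$.
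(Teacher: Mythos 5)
Your proposal is correct and follows essentially the same route as the paper's proof: invoke \cref{lemma:findminimum} for the bracketing cost and triplet guarantee, note that \cref{eq:bracket_minimum_output_guarantee} is exactly the precondition of \cref{thm:gss}, and plug the width bound $\abs{x^+ - x^-} = \mathrm{O}\left(r^3\Delta + r^2 c\right)$ into the GSS complexity. The only difference is that you spell out the bookkeeping (the bounds $r\,{[\overline{x}-x_0]}_+ + {[x_0-\underline{x}]}_+ \leq r\Delta$ and $r+1 \leq 2r$) that the paper leaves implicit behind ``immediately follows.''
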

\begin{proof}
    \(\mathcal{C}_{\mathsf{bm}}\) immediately follows from \cref{lemma:findminimum}, while
    \(\mathcal{C}_{\mathsf{gss}}\), on the other hand, follows from \cref{thm:gss} as
    \begin{align*}
        \mathcal{C}_{\mathsf{gss}}
        &=
        \mathrm{O}\left\{ \log_+ \left( x^+ - x^- \right) \frac{1}{\epsilon}  \right\} 
        \\
        &=
        \mathrm{O}\left\{ \log_+ \left( r^3 \Delta + r^2 c\right) \frac{1}{\epsilon}  \right\} 
        \; ,
    \end{align*}
    where we plugged in the bound on \(\abs{x^+ - x^-}\) from \cref{lemma:findminimum}.
    This yields the stated result.
    Furthermore, since \(\mathtt{BracketMinimum}\) returns a triplet \((x^-, x_{\mathsf{mid}}, x^+)\) that satisfies the requirement of \(\mathtt{GoldenSectionSearch}\) as stated in \cref{thm:gss}, the output \(x^* \in (-\infty, x^{\infty})\) is \(\epsilon\)-close to a local minimum.
\end{proof}

\begin{remark}
    In \cref{thm:minimize}, the ``difficulty'' of the problem is represented by \(\Delta \geq 0\), where the magnitude of \({[x_0 - \underline{x}]}_+\) and \({[\overline{x} - x_0]}_+\) represent the quality of the initialization \(x_0\) (how much \(x_0\) undershoots or overshoots \(\overline{x}\) and \(\underline{x}\)).
    Furthermore, we have \(\Delta \geq \abs{\overline{x} - \underline{x}}\), where \(\abs{\overline{x} - \underline{x}}\) can be thought as the quantitative multimodality of the problem.
    Therefore, the execution time of \(\mathtt{Minimize}\) becomes longer as the problem becomes more multimodal and the initialization is far from \([\overline{x}, \underline{x}]\).
\end{remark}
\vspace{1ex}
\begin{remark}
    The execution time of \(\mathtt{Minimize}(f, x_0, c, r, \epsilon)\) depends on \(r\) and \(c\).
    In general, the best-case performance (\(\Delta = 0\)) can only become worse as \(c\) increases.
    On the other hand, in the worst-case when \(\Delta\) is large, increasing \(r\) reduces \(\mathcal{C}_{\mathsf{bm}}\), while slowly making \(\mathcal{C}_{\mathrm{gss}}\) worse.
    Therefore, a large \(r\) improves the worst-case performance.
\end{remark}

\newpage
\subsection{\(\mathtt{AdaptStepsize}\) (Proof of \Cref{thm:initialize_stepsize})}

We now present the proof for the theoretical guarantees of \cref{alg:adapt_stepsize} in the main text, \cref{thm:initialize_stepsize}.
Since most of the heavy lifting in \cref{alg:adapt_stepsize} is done by \cref{alg:minimize}, \cref{thm:initialize_stepsize} is almost a corollary of \cref{thm:minimize}.
The main difference is that \cref{alg:adapt_stepsize} invokes \cref{alg:find_feasible} at \(t = 1\) and operates in log-space.
Therefore, the proof incorporates these two modifications into the results of \cref{thm:minimize}.

\vspace{1ex}

\printProofs[initializestepsize]

\clearpage
\section{Backward Kernels}\label{section:backward}

\subsection{Some backward kernels are not like the others}

Here, we will discuss some options for the ``backward kernel'' used in SMC samplers in the static model setting~\citep{delmoral_sequential_2006,neal_annealed_2001}.

\vspace{-2ex}
\paragraph{Detailed Balance Formula.}
In the literature, the choice
\begin{alignat}{4}
    L_{t-1}^{\mathsf{dbf}}\left(\vx_t, \vx_{t-1}\right) &\triangleq \frac{ \gamma_{t}\left(\vx_{t}\right) K_{t}\left(\vx_{t-1}, \vx_t\right) }{ \gamma_t\left(\vx_{t-1}\right) } \, ,
    \label{eq:dbf_backward}
\end{alignat}
which we will refer to as the ``detailed balance formula backward kernel,'' has been the widely used~\citep{dai_invitation_2022,bernton_schrodinger_2019,heng_controlled_2020}.
Conveniently, \cref{eq:dbf_backward} results in a simple expression for the potential
\[
    G_t\left(\vx_{t-1}, \vx_t\right) = \frac{ \gamma_t\left(\vx_{t-1}\right) }{ \gamma_{t-1}\left(\vx_{t-1}\right) } \, ,
\]
which does not involve the densities of \(K_t\).
The origin of this backward kernel is that many \(\pi_t\)-invariant MCMC kernels used in practice satisfy the detailed balance formula~\citep[Def. 6.45]{robert_monte_2004} with \(\pi_t\),
\[
    \pi_t\left(\vx_{t-1}\right)  K_t\left(\vx_{t-1}, \vx_{t}\right)
    =
    \pi_t\left(\vx_{t}\right) K_t\left(\vx_{t}, \vx_{t-1}\right)  \, ,
\]
which, given \(L_{t-1}^{\mathsf{dbf}}\left(\vx_{t-1}, \vx_{t}\right) = K_t\left(\vx_{t-1}, \vx_t\right) \), yields \cref{eq:dbf_backward} after re-aranging.

\vspace{-2ex}
\paragraph{The detailed balance formula backward kernel is biased.}
Now, let's focus on the fact that \(K_t = L_{t-1}^{\mathsf{dbf}}\) only holds under the detailed balance condition.
Said differently, \(L_{t-1}^{\mathsf{dbf}}\) is a properly \textit{normalized} kernel only when \(K_t\) satisfies detailed balance.
This implies that, for non-reversible kernels like LMC, using the detailed balance formula kernel with \(h > 0\) may result in biased normalized constant estimates.
This bias can be substantial, as we will see in \cref{section:backward_kernel_experiment}.
Fortunately, this bias does diminish as \(h_t \to 0\) and \(T \to \infty\) since the continuous Langevin dynamics is reversible under stationarity~\citep{heng_controlled_2020}.
However, the need for smaller step sizes means that a larger number of SMC steps \(T\) has to be taken for the Markov process to converge.

\vspace{-2ex}
\paragraph{Forward Kernel.}
The properly normalized analog of \(L^{\mathsf{dbf}}_{t-1}\) at time \(t \geq 1\) is the ``forward'' kernel
\[
    L_{t-1}^{\mathsf{fwd}}\left(\vx_t, \vx_{t-1}\right) \triangleq K_{t}\left(\vx_t, \vx_{t-1}\right) \, .
\]
This has been used, for example, by \citet{thin_monte_2021}.
Recall that the ``optimal'' \(L_{t-1}\) is a kernel that transports the particles following \(P_t\) to follow \(P_{t-1}\).
The fact that we are using \(K_t\) to do this implies that we are assuming \(P_t \approx P_{t-1}\), which is only true if \(T\) is large.
We propose a different option, which should work even when \(T\) is moderate or small.

\vspace{-2ex}
\paragraph{Time-Correct Forward Kernel.}
In \cref{section:implementation_lmc}, we used the forward kernel at time \(t - 1\), 
\[
    L_{t-1}^{\text{\textsf{tc-fwd}}}\left(\vx_t, \vx_{t-1}\right) \triangleq K_{\textcolor{purple}{t-1}}\left(\vx_t, \vx_{t-1}\right) \, ,
\]
which we will refer to as the time-correct forward kernel.
Unlike \(L^{\mathsf{fwd}}_{t-1}\), the stationary distribution of this transport map is properly \(\pi_{t-1}\).
Informally, the reasoning is that
\begin{align*}
    \frac{
        \left( \pi_t \otimes K_{t-1} \right) \left(x_{t}, x_{t-1}\right)
    }{
        \left( \pi_{t-1} \otimes K_{t} \right) \left(x_{t-1}, x_t\right)
    }
    \approx
    \frac{
        \left( \pi_t \otimes \pi_{t-1} \right)  \left(x_{t}, x_{t-1}\right)
    }{
        \left( \pi_{t-1} \otimes \pi_{t} \right) \left(x_{t-1}, x_t\right)
    }
    =
    1 \, .
\end{align*}
Therefore, this should result in lower variance.

\subsection{Empirical Evaluation}\label{section:backward_kernel_experiment}

\begin{figure}
    \vspace{-1ex}
    \centering
    \includegraphics[scale=0.95]{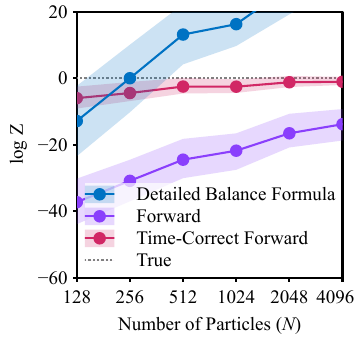}
    \vspace{-3ex}
    \caption{
    \textbf{Comparison of backward kernels for SMC-LMC.}
    The solid lines are the median, while the colored bands mark the \(80\%\) empirical quantile over 256 replications.
    }
    \label{fig:backward_kernel_comparison}
\vspace{-2ex}
\end{figure}

\paragraph{Setup.}
We compare the three backward kernels on a toy problem with \(d = 10\) dimensional Gaussians: \(\pi = \mathcal{N}\left(30 \cdot \mathrm{1}_d, \mathrm{I}_d\right)\), \(q_0 = \mathcal{N}\left(0_d, \mathrm{I}_d\right)\).
Since the scale of the target distribution is constant under geometric annealing, a fixed step size \(h = h_t = 0.5\) should work well.
We use a linear schedule with \(T = 64\).

\vspace{-2ex}
\paragraph{Results.}
The results are shown in \cref{fig:backward_kernel_comparison}.
The backward kernel from the detailed balance formula severely overestimates the normalizing constant due to bias, while the forward kernel exhibits significantly higher variance than the time-correct forward kernel.

\subsection{Conclusions}
We have demonstrated that caution must be taken when using the popular backward kernel based on the detail-balance formula.
Instead, we have proposed the ``time-correct forward kernel,'' which is not only valid but also results in substantially lower variance.
Unfortunately, the time-correct forward kernel is only available for MCMC kernels that have a tractable density, which may not be the case; for instance, the KLMC kernel used in \cref{section:klmc} does not have this option.
However, whenever it is available, it should be preferred.

\clearpage
\onecolumn
\section{Additional Experimental Results}

\subsection{Comparison Against Fixed Step Sizes}\label{section:additional_fixed_stepsizes}

\subsubsection{SMC-LMC}

\begin{figure}[H]
    \vspace{-2ex}
    \centering
    \subfloat[Funnel]{
        \hspace{-1em}
        \includegraphics[]{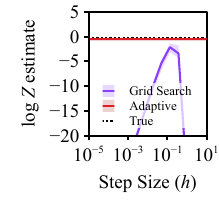}
        \vspace{-2ex}
    }
    \subfloat[Brownian]{
        \hspace{-1em}
        \includegraphics[]{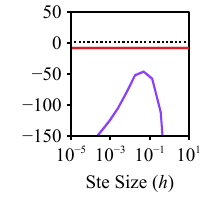}
        \vspace{-2ex}
    }
    \subfloat[Sonar]{
        \hspace{-1em}
        \includegraphics[]{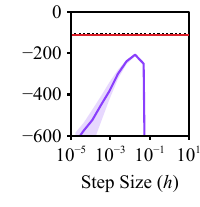}
        \vspace{-2ex}
    }
    \subfloat[Pines]{
        \hspace{-1em}
        \includegraphics[]{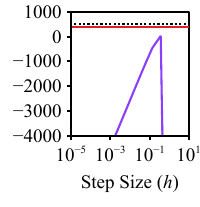}
        \vspace{-2ex}
    }
    \subfloat[Bones]{
        \hspace{-1em}
        \includegraphics[]{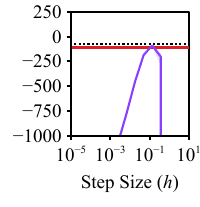}
        \vspace{-2ex}
    }
    \\
    \subfloat[Surgical]{
        \hspace{-1em}
        \includegraphics[]{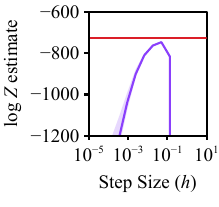}
        \vspace{-2ex}
    }
    \subfloat[HMM]{
        \hspace{-1em}
        \includegraphics[]{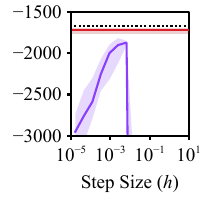}
        \vspace{-2ex}
    }
    \subfloat[Loss Curve]{
        \hspace{-1em}
        \includegraphics[]{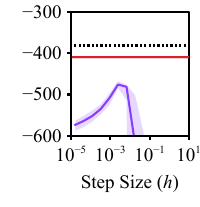}
        \vspace{-2ex}
    }
    \subfloat[Pilots]{
        \hspace{-1em}
        \includegraphics[]{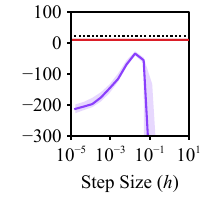}
        \vspace{-2ex}
    }
    \subfloat[Diamonds]{
        \hspace{-1em}
        \includegraphics[]{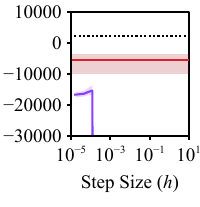}
        \vspace{-2ex}
    }
    \\
    \subfloat[Seeds]{
        \hspace{-1em}
        \includegraphics[]{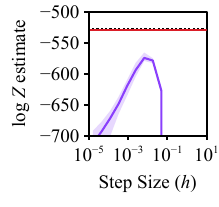}
        \vspace{-2ex}
    }
    \subfloat[Downloads]{
        \hspace{-1em}
        \includegraphics[]{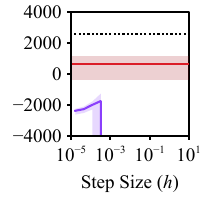}
        \vspace{-2ex}
    }
    \subfloat[Rats]{
        \hspace{-1em}
        \includegraphics[]{figures/gridsearch_rats_smcula_02}
        \vspace{-2ex}
    }
    \subfloat[Radon]{
        \hspace{-1em}
        \includegraphics[]{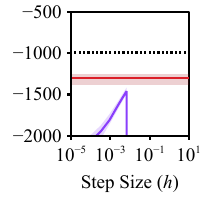}
        \vspace{-2ex}
    }
    \subfloat[Election88]{
        \hspace{-1em}
        \includegraphics[]{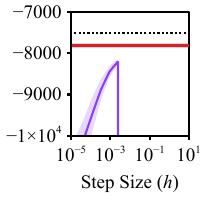}
        \vspace{-2ex}
    }
    \\
    \subfloat[Butterfly]{
        \hspace{-1em}
        \includegraphics[]{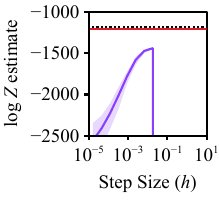}
        \vspace{-2ex}
    }
    \subfloat[Birds]{
        \hspace{-1em}
        \includegraphics[]{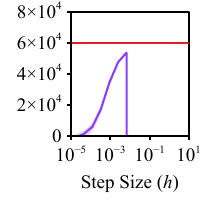}
        \vspace{-2ex}
    }
    \subfloat[Drivers]{
        \hspace{-1em}
        \includegraphics[]{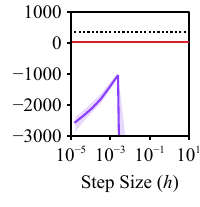}
        \vspace{-2ex}
    }
    \subfloat[Capture]{
        \hspace{-1em}
        \includegraphics[]{figures/gridsearch_capture_smcula_02}
        \vspace{-2ex}
    }
    \subfloat[Science]{
        \hspace{-1em}
        \includegraphics[]{figures/gridsearch_science_smcula_02}
        \vspace{-2ex}
    }
    \\
    \subfloat[Three Men]{
        \hspace{-1em}
        \includegraphics[]{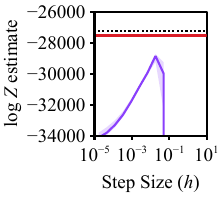}
        \vspace{-2ex}
    }
    \subfloat[TIMSS]{
        \hspace{-1em}
        \includegraphics[]{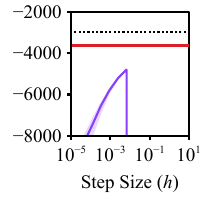}
        \vspace{-2ex}
    }
    \vspace{-1ex}
    \caption{
        \textbf{SMC-LMC with adaptive tuning v.s. fixed step sizes.}
        The solid lines are the median estimate of \(\log Z\), while the colored regions are the \(80\%\) empirical quantiles computed over 32 replications.
    }
    \vspace{-2ex}
\end{figure}

\newpage
\subsubsection{SMC-KLMC}

\begin{figure}[H]
    \centering
    \subfloat[Funnel]{
        \hspace{-1em}
        \includegraphics[]{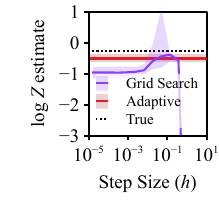}
        \vspace{-2ex}
    }
    \subfloat[Brownian]{
        \hspace{-1em}
        \includegraphics[]{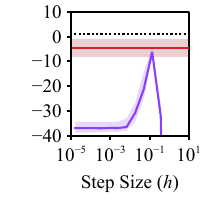}
        \vspace{-2ex}
    }
    \subfloat[Sonar]{
        \hspace{-1em}
        \includegraphics[]{figures/gridsearch_sonar_smcula_02}
        \vspace{-2ex}
    }
    \subfloat[Pines]{
        \hspace{-1em}
        \includegraphics[]{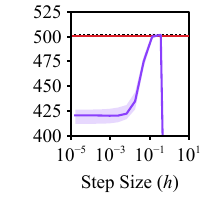}
        \vspace{-2ex}
    }
    \subfloat[Bones]{
        \hspace{-1em}
        \includegraphics[]{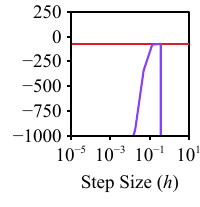}
        \vspace{-2ex}
    }
    \\
    \subfloat[Surgical]{
        \hspace{-1em}
        \includegraphics[]{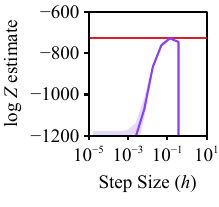}
        \vspace{-2ex}
    }
    \subfloat[HMM]{
        \hspace{-1em}
        \includegraphics[]{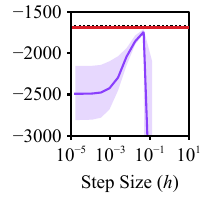}
        \vspace{-2ex}
    }
    \subfloat[Loss Curve]{
        \hspace{-1em}
        \includegraphics[]{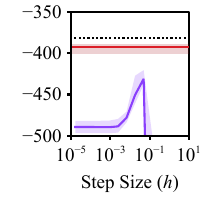}
        \vspace{-2ex}
    }
    \subfloat[Pilots]{
        \hspace{-1em}
        \includegraphics[]{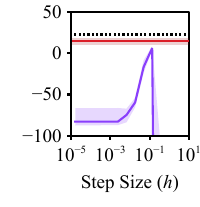}
        \vspace{-2ex}
    }
    \subfloat[Diamonds]{
        \hspace{-1em}
        \includegraphics[]{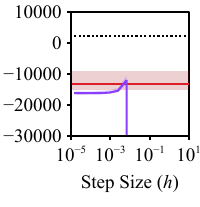}
        \vspace{-2ex}
    }
    \\
    \subfloat[Seeds]{
        \hspace{-1em}
        \includegraphics[]{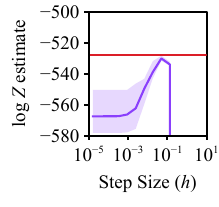}
        \vspace{-2ex}
    }
    \subfloat[Downloads]{
        \hspace{-1em}
        \includegraphics[]{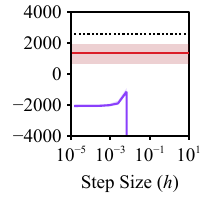}
        \vspace{-2ex}
    }
    \subfloat[Rats]{
        \hspace{-1em}
        \includegraphics[]{figures/gridsearch_rats_smcuhmc_02}
        \vspace{-2ex}
    }
    \subfloat[Radon]{
        \hspace{-1em}
        \includegraphics[]{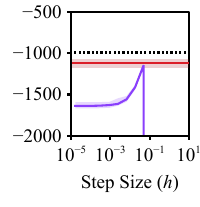}
        \vspace{-2ex}
    }
    \subfloat[Election88]{
        \hspace{-1em}
        \includegraphics[]{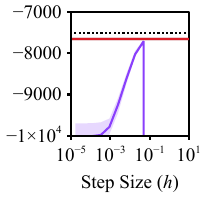}
        \vspace{-2ex}
    }
    \\
    \subfloat[Butterfly]{
        \hspace{-1em}
        \includegraphics[]{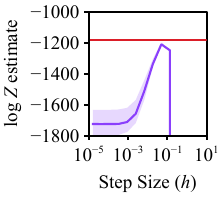}
        \vspace{-2ex}
    }
    \subfloat[Birds]{
        \hspace{-1em}
        \includegraphics[]{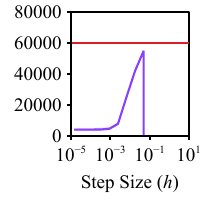}
        \vspace{-2ex}
    }
    \subfloat[Drivers]{
        \hspace{-1em}
        \includegraphics[]{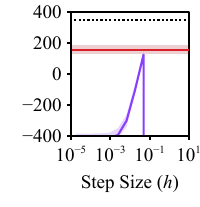}
        \vspace{-2ex}
    }
    \subfloat[Capture]{
        \hspace{-1em}
        \includegraphics[]{figures/gridsearch_capture_smcuhmc_02}
        \vspace{-2ex}
    }
    \subfloat[Science]{
        \hspace{-1em}
        \includegraphics[]{figures/gridsearch_science_smcuhmc_02}
        \vspace{-2ex}
    }
    \\
    \subfloat[Three Men]{
        \hspace{-1em}
        \includegraphics[]{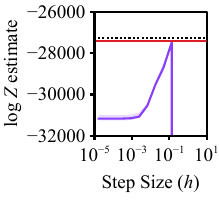}
        \vspace{-2ex}
    }
    \subfloat[TIMSS]{
        \hspace{-1em}
        \includegraphics[]{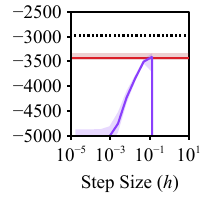}
        \vspace{-2ex}
    }
    \vspace{-1ex}
    \caption{
        \textbf{SMC-KLMC with adaptive tuning v.s. fixed step sizes and refreshment rates.}
        For SMC-KLMC with fixed parameters \(h, \rho\), we show the result of the best-performing refreshment rate.
        The solid lines are the median estimate of \(\log Z\), while the colored regions are the \(80\%\) empirical quantiles computed over 32 replications.
    }
    \vspace{-2ex}
\end{figure}

\clearpage
\subsection{Comparison Against End-to-End Optimization}\label{section:end_to_end_additional}
\vspace{-1ex}

For all results, the ``cost'' is calculated as the cumulative number of gradients and Hessian evaluations used by each method. 
Also, end-to-end optimization with variational tuning of the reference $q$ is referred as ``End-to-End + VI.''
(End-to-end optimization methods require Hessians.)
For all figures, the error bars/bands are \(80\%\) empirical quantiles computed from 32 replications, while \(\gamma\) is the Adam step size used for end-to-end optimization.

\vspace{-1ex}
\subsubsection{SMC-LMC}

\begin{figure}[H]
    \vspace{-2ex}
    \centering
    \subfloat[\(\gamma = 10^{-4}\)]{
        \includegraphics[scale=0.9]{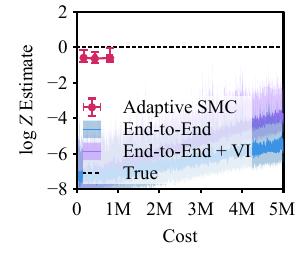} 
        \vspace{-2ex}
    }
    \subfloat[\(\gamma = 10^{-3}\)]{
        \includegraphics[scale=0.9]{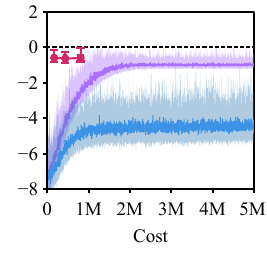} 
        \vspace{-2ex}
    }
    \subfloat[\(\gamma = 10^{-2}\)]{
        \includegraphics[scale=0.9]{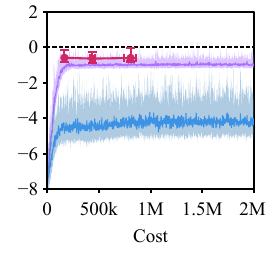} 
        \vspace{-2ex}
    }
    \vspace{-1.5ex}
    \caption{\textbf{Comparison against end-to-end optimization on Funnel.}}
\end{figure}

\begin{figure}[H]
    \vspace{-4ex}
    \centering
    \subfloat[\(\gamma = 10^{-4}\)]{
        \includegraphics[scale=0.9]{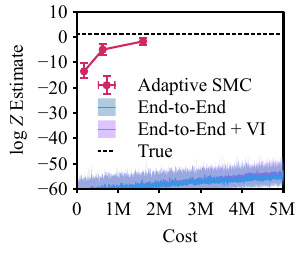} 
        \vspace{-2ex}
    }
    \subfloat[\(\gamma = 10^{-3}\)]{
        \includegraphics[scale=0.9]{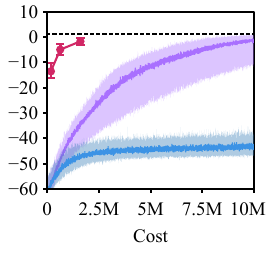} 
        \vspace{-2ex}
    }
    \subfloat[\(\gamma = 10^{-2}\)]{
        \includegraphics[scale=0.9]{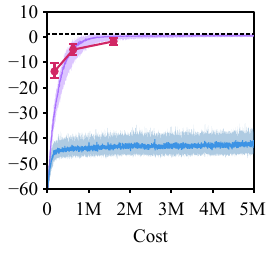} 
        \vspace{-2ex}
    }
    \vspace{-1.5ex}
    \caption{
        \textbf{Comparison against end-to-end optimization on Brownian.}
        The result for variational reference tuning (End-to-End + VI) for \(\gamma = 10^{-2}\) is omitted as most runs diverged.
    }
\end{figure}

\begin{figure}[H]
    \vspace{-4ex}
    \centering
    \subfloat[\(\gamma = 10^{-4}\)]{
        \includegraphics[scale=0.9]{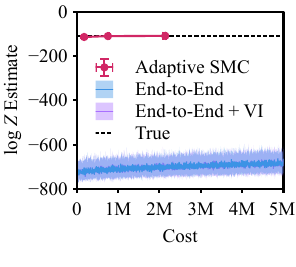} 
        \vspace{-2ex}
    }
    \subfloat[\(\gamma = 10^{-3}\)]{
        \includegraphics[scale=0.9]{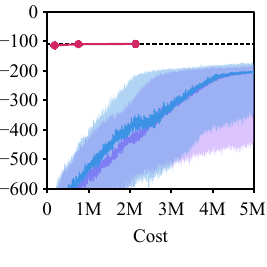} 
        \vspace{-2ex}
    }
    \subfloat[\(\gamma = 10^{-2}\)]{
        \includegraphics[scale=0.9]{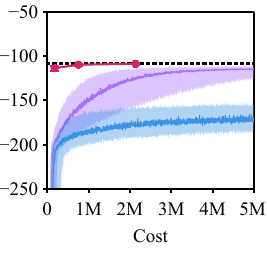} 
        \vspace{-2ex}
    }
    \vspace{-1.5ex}
    \caption{\textbf{Comparison against end-to-end optimization on Sonar.}}
\end{figure}

\begin{figure}[H]
    \vspace{-4ex}
    \centering
    \subfloat[\(\gamma = 10^{-4}\)]{
        \includegraphics[scale=0.9]{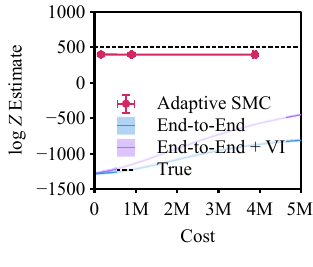} 
        \vspace{-2ex}
    }
    \subfloat[\(\gamma = 10^{-3}\)]{
        \includegraphics[scale=0.9]{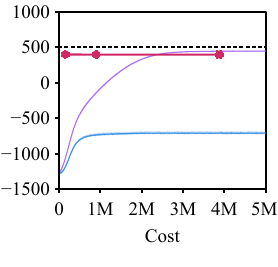} 
        \vspace{-2ex}
    }
    \subfloat[\(\gamma = 10^{-2}\)]{
        \includegraphics[scale=0.9]{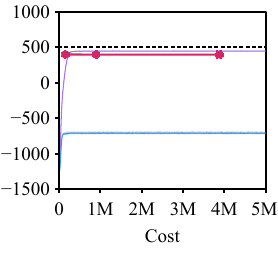} 
        \vspace{-2ex}
    }
    \vspace{-1.5ex}
    \caption{\textbf{Comparison against end-to-end optimization on Pines.}}
\end{figure}

\newpage
\subsubsection{SMC-KLMC}

\begin{figure}[H]
    \vspace{-1ex}
    \centering
    \subfloat[\(\gamma = 10^{-4}\)]{
        \includegraphics{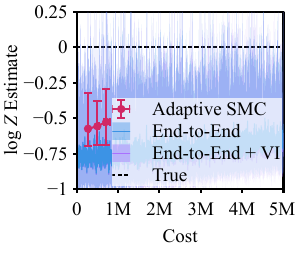} 
        \vspace{-2ex}
    }
    \subfloat[\(\gamma = 10^{-3}\)]{
        \includegraphics{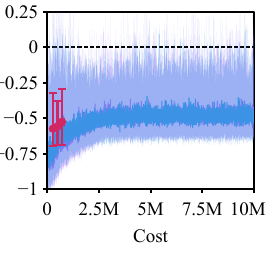} 
        \vspace{-2ex}
    }
    \subfloat[\(\gamma = 10^{-2}\)]{
        \includegraphics{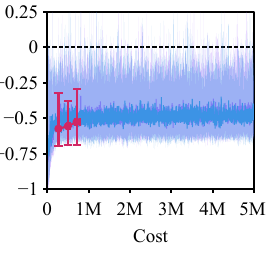} 
        \vspace{-2ex}
    }
    \vspace{-1ex}
    \caption{\textbf{Comparison against end-to-end optimization on Funnel.}}
\end{figure}

\begin{figure}[H]
    \vspace{-4ex}
    \centering
    \subfloat[\(\gamma = 10^{-4}\)]{
        \includegraphics{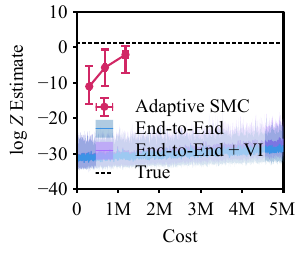} 
        \vspace{-2ex}
    }
    \subfloat[\(\gamma = 10^{-3}\)]{
        \includegraphics{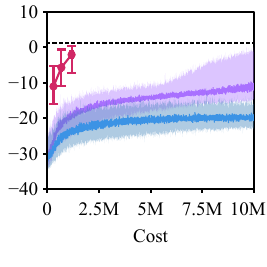} 
        \vspace{-2ex}
    }
    \subfloat[\(\gamma = 10^{-2}\)]{
        \includegraphics{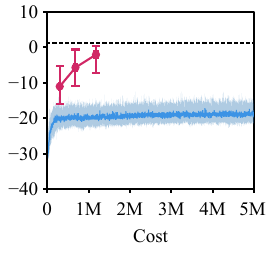} 
        \vspace{-2ex}
    }
    \vspace{-1ex}
    \caption{\textbf{Comparison against end-to-end optimization on Brownian.}}
\end{figure}

\begin{figure}[H]
    \vspace{-4ex}
    \centering
    \subfloat[\(\gamma = 10^{-4}\)]{
        \includegraphics{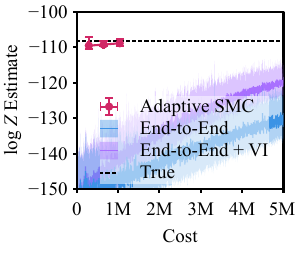} 
        \vspace{-2ex}
    }
    \subfloat[\(\gamma = 10^{-3}\)]{
        \includegraphics{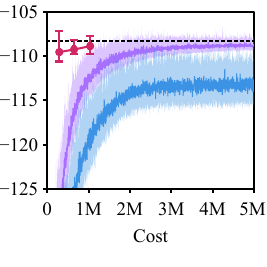} 
        \vspace{-2ex}
    }
    \subfloat[\(\gamma = 10^{-2}\)]{
        \includegraphics{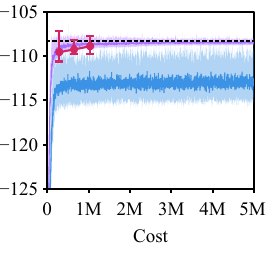} 
        \vspace{-2ex}
    }
    \vspace{-1ex}
    \caption{\textbf{Comparison against end-to-end optimization on Sonar.}}
\end{figure}

\begin{figure}[H]
    \vspace{-4ex}
    \centering
    \subfloat[\(\gamma = 10^{-4}\)]{
        \includegraphics{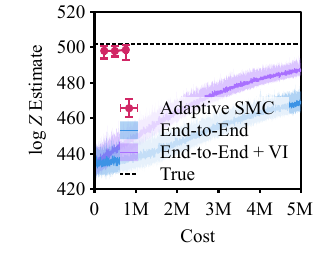} 
        \vspace{-2ex}
    }
    \subfloat[\(\gamma = 10^{-3}\)]{
        \includegraphics{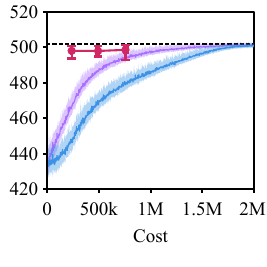} 
        \vspace{-2ex}
    }
    \subfloat[\(\gamma = 10^{-2}\)]{
        \includegraphics{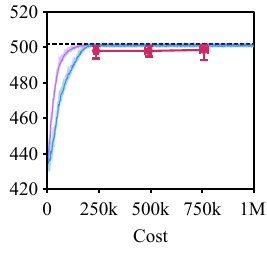} 
        \vspace{-2ex}
    }
    \vspace{-1ex}
    \caption{\textbf{Comparison against end-to-end optimization on Pines.}}
\end{figure}

\newpage
\subsection{Adaptation Cost}\label{section:adaptation_cost}
\vspace{-1ex}

In this section, we will visualize the cost of adaptation of our algorithm.
In particular, we show the number of objective evaluations used at each SMC iteration during adaptation.
Recall that the cost of evaluating our objective is in the order of \(\mathcal{O}\left(B\right)\) unnormalized log-density evaluations (\(\log \gamma\)) and its gradients (\(\nabla \log \gamma\)), where \(B\) is the number of subsampled particles (\cref{section:adaptation}).
Therefore, the cost of \(N/B\) adaptation objective evaluations at every SMC step roughly amounts to the cost of a single vanilla SMC run with \(N\) particles.
That is, for \(N = 1024\) and \(B = 128\), the cost of running our adaptive SMC sampler is comparable to two to three times that of a vanilla SMC sampler.
The exact number of objective evaluations spent at each SMC iteration is shown in the figures that will follow.
All experiments used \(N = 1024\), \(B = 128\), and \(T = 64\)

\vspace{-2ex}
\subsubsection{SMC-LMC}

\begin{figure}[H]
    \vspace{-1ex}
    \centering
    \subfloat[Funnel]{
        \includegraphics[]{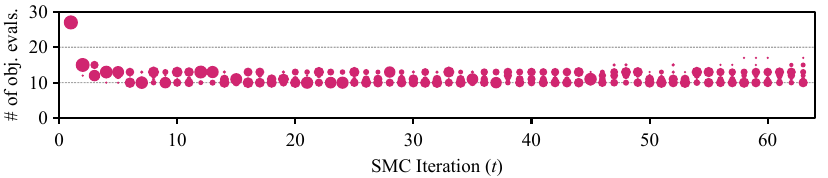}
    }
    \\
    \subfloat[Brownian]{
        \includegraphics[]{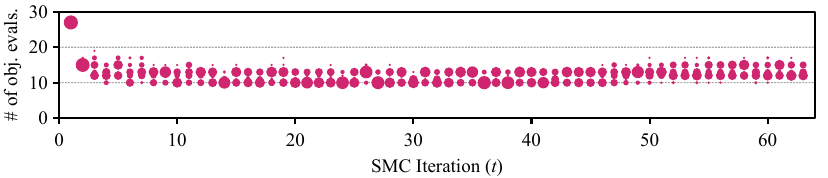}
    }
    \\
    \subfloat[Sonar]{
        \includegraphics[]{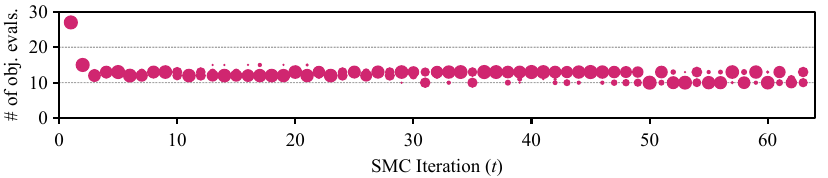}
    }
    \\
    \subfloat[Pines]{
        \includegraphics[]{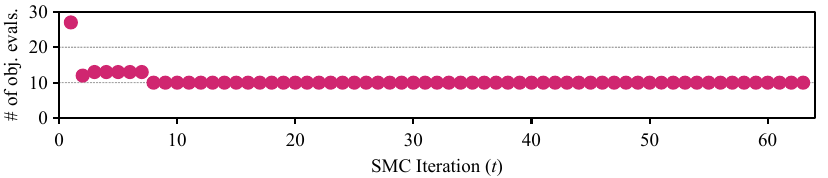}
    }
    \\
    \subfloat[Bones]{
        \includegraphics[]{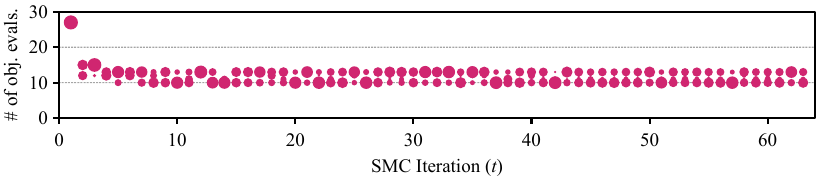}
    }
    \vspace{-2ex}
    \caption{
    \textbf{Number of objective evaluations spent during adaptation at each SMC iteration.}
    The size of the markers represents the proportion of runs that spent each respective number of evaluations among 32 independent runs.
    }
\end{figure}

\begin{figure}[H]
    \centering
    \subfloat[Surgical]{
        \includegraphics[]{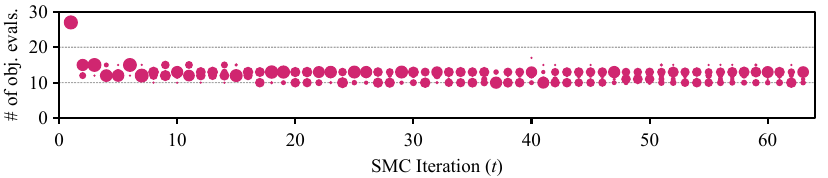}
    }
    \\
    \subfloat[HMM]{
        \includegraphics[]{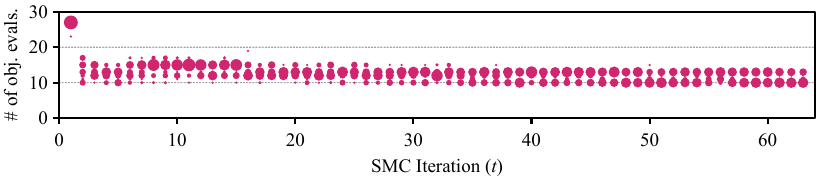}
    }
    \\
    \subfloat[Loss Curves]{
        \includegraphics[]{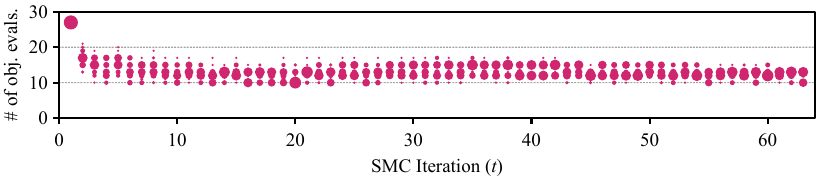}
    }
    \\
    \subfloat[Pilots]{
        \includegraphics[]{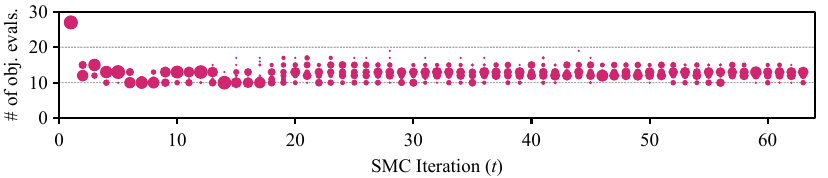}
    }
    \\
    \subfloat[Diamonds]{
        \includegraphics[]{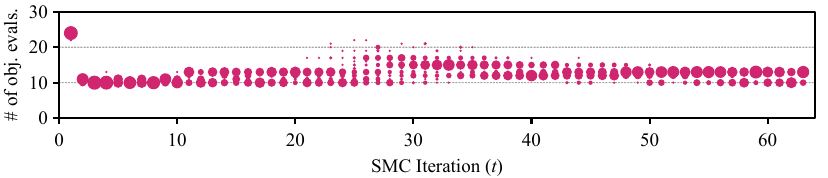}
    }
    \\
    \subfloat[Seeds]{
        \includegraphics[]{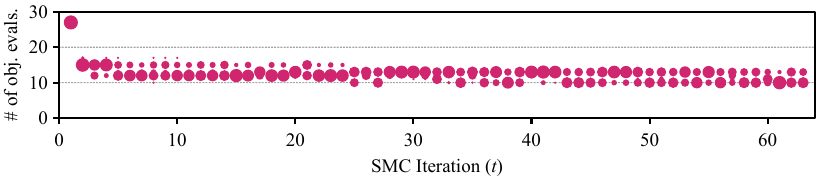}
    }
    \caption{
    \textbf{Number of objective evaluations spent during adaptation at each SMC iteration.}
    The size of the markers represents the proportion of runs that spent each respective number of evaluations among 32 independent runs.
    }
\end{figure}
    
\begin{figure}[H]
    \centering
    \subfloat[Downloads]{
        \includegraphics[]{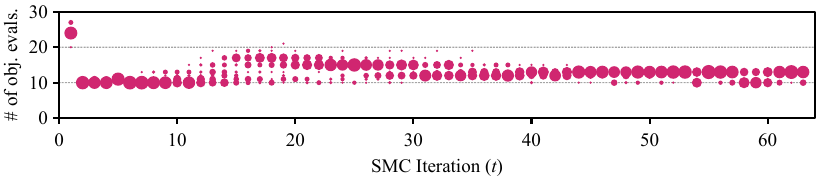}
    }
    \\
    \subfloat[Rats]{
        \includegraphics[]{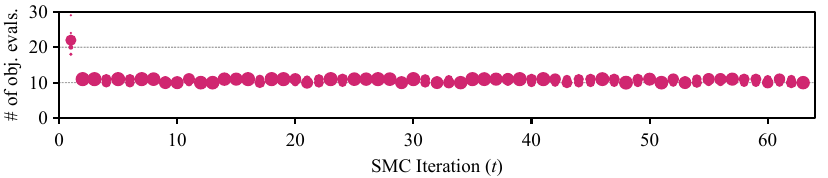}
    }
    \\
    \subfloat[Radon]{
        \includegraphics[]{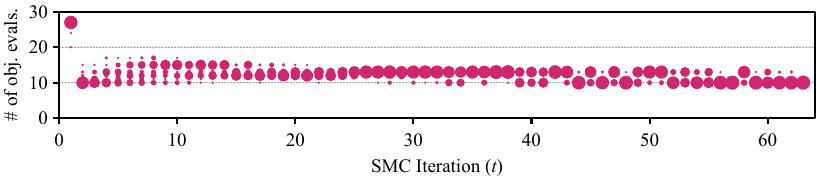}
    }
    \\
    \subfloat[Election88]{
        \includegraphics[]{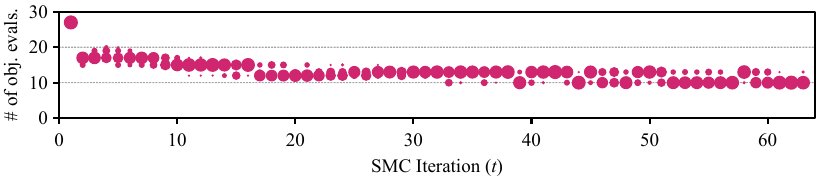}
    }
    \\
    \subfloat[Butterfly]{
        \includegraphics[]{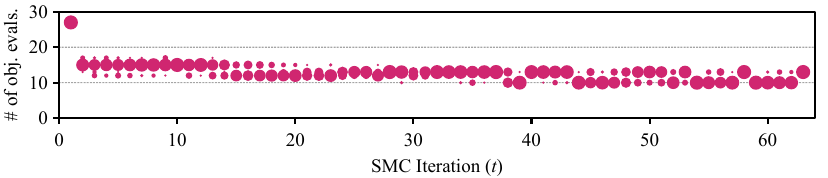}
    }
    \\
    \subfloat[Birds]{
        \includegraphics[]{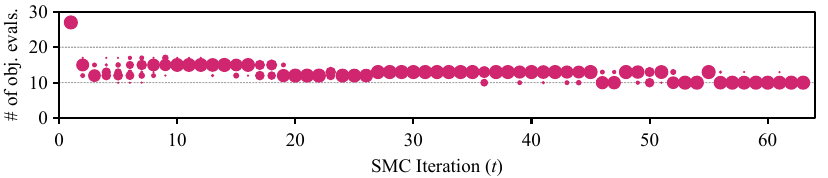}
    }
    \caption{
    \textbf{Number of objective evaluations spent during adaptation at each SMC iteration (continued).}
    The size of the markers represents the proportion of runs that spent each respective number of evaluations among 32 independent runs.
    }
\end{figure}

\begin{figure}[H]
    \vspace{4ex}
    \centering
    \subfloat[Drivers]{
        \includegraphics[]{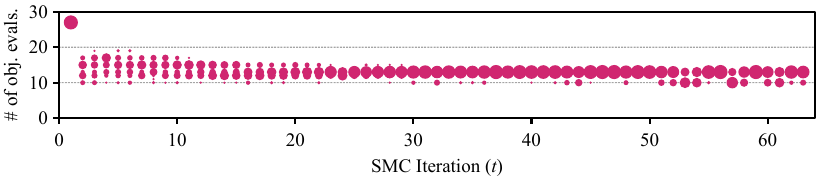}
    }
    \\
    \subfloat[Capture]{
        \includegraphics[]{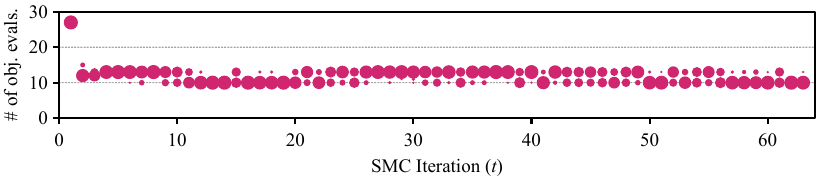}
    }
    \\
    \subfloat[Science]{
        \includegraphics[]{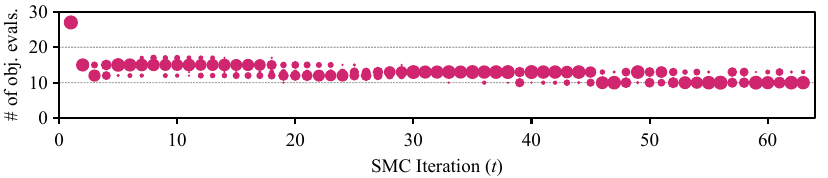}
    }
    \\
    \subfloat[Three Men]{
        \includegraphics[]{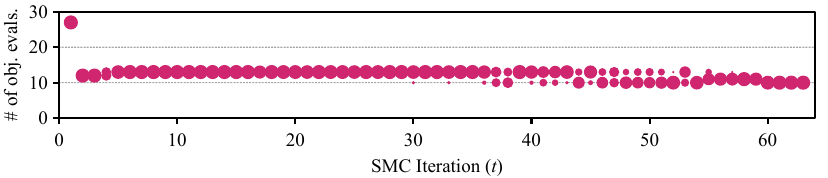}
    }
    \\
    \subfloat[TIMSS]{
        \includegraphics[]{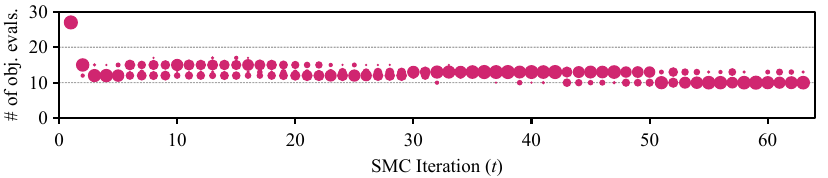}
    }
    \caption{
    \textbf{Number of objective evaluations spent during adaptation at each SMC iteration (continued).}
    The size of the markers represents the proportion of runs that spent each respective number of evaluations among 32 independent runs.
    }
\end{figure}

\vspace{-1ex}
\subsubsection{SMC-KLMC}

\begin{figure}[H]
    \vspace{-1ex}
    \centering
    \subfloat[Funnel]{
        \includegraphics[]{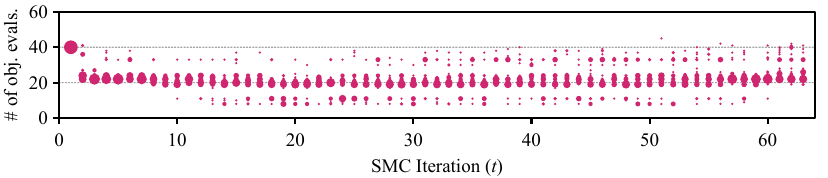}
        \vspace{-0.1ex}
    }
    \\
    \subfloat[Brownian]{
        \includegraphics[]{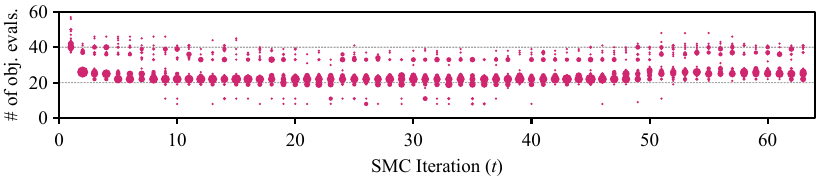}
        \vspace{-0.1ex}
    }
    \\
    \subfloat[Sonar]{
        \includegraphics[]{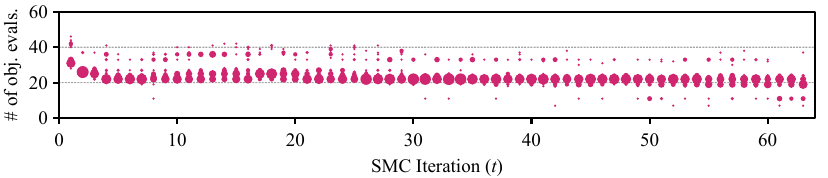}
        \vspace{-0.1ex}
    }
    \\
    \subfloat[Pines]{
        \includegraphics[]{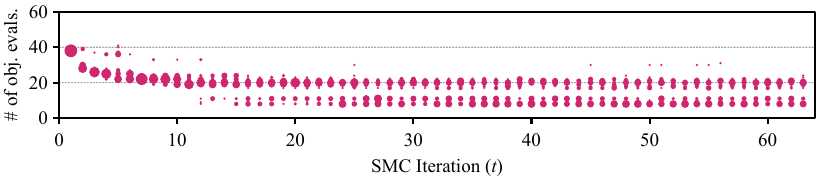}
        \vspace{-0.1ex}
    }
    \\
    \subfloat[Bones]{
        \includegraphics[]{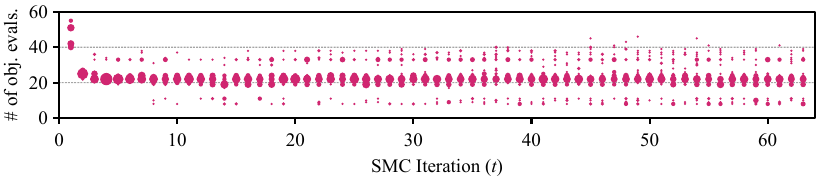}
        \vspace{-0.1ex}
    }
    \\
    \subfloat[Surgical]{
        \includegraphics[]{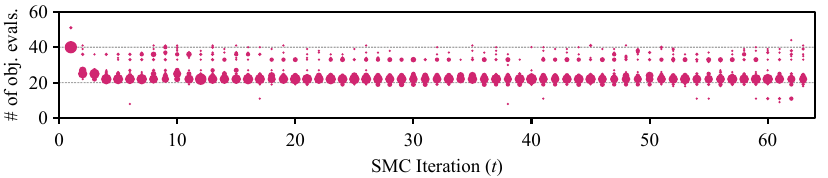}
        \vspace{-0.1ex}
    }
    \vspace{-1ex}
    \caption{
    \textbf{Number of objective evaluations spent during adaptation at each SMC iteration.}
    The size of the markers represents the proportion of runs that spent each respective number of evaluations among 32 independent runs.
    }
    \vspace{-1ex}
\end{figure}

\begin{figure}[H]
    \centering
    \subfloat[HMM]{
        \includegraphics[]{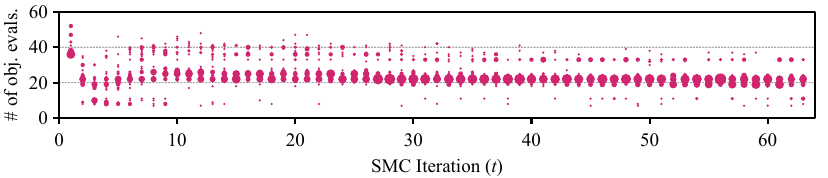}
    }
    \\
    \subfloat[Loss Curves]{
        \includegraphics[]{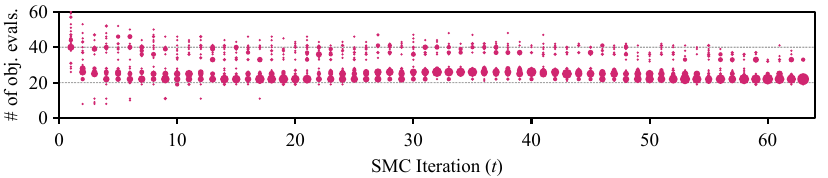}
    }
    \\
    \subfloat[Pilots]{
        \includegraphics[]{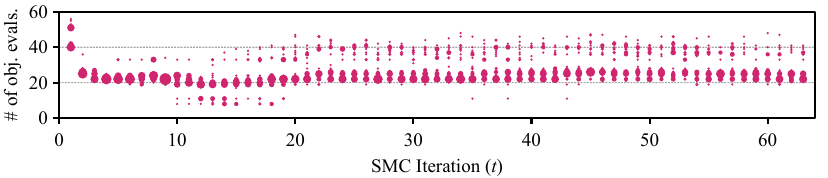}
    }
    \\
    \subfloat[Diamonds]{
        \includegraphics[]{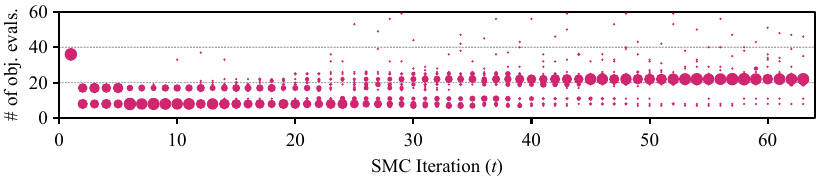}
    }
    \\
    \subfloat[Seeds]{
        \includegraphics[]{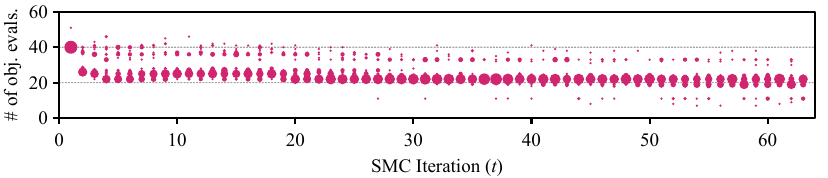}
    }
    \\
    \subfloat[Downloads]{
        \includegraphics[]{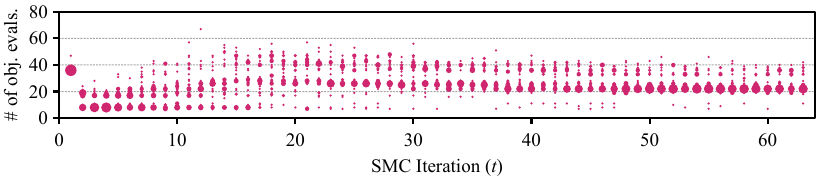}
    }
    \caption{
    \textbf{Number of objective evaluations spent during adaptation at each SMC iteration (continued).}
    The size of the markers represents the proportion of runs that spent each respective number of evaluations among 32 independent runs.
    }
\end{figure}
    
\begin{figure}[H]
    \centering
    \subfloat[Rats]{
        \includegraphics[]{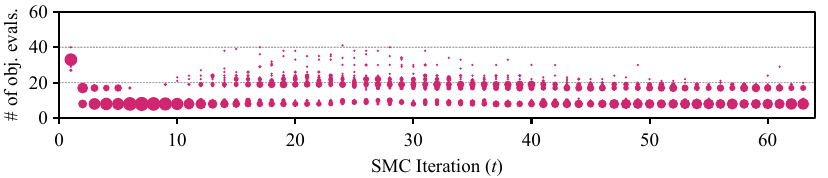}
    }
    \\
    \subfloat[Radon]{
        \includegraphics[]{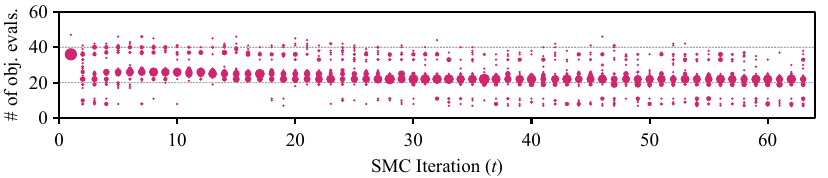}
    }
    \\
    \subfloat[Election88]{
        \includegraphics[]{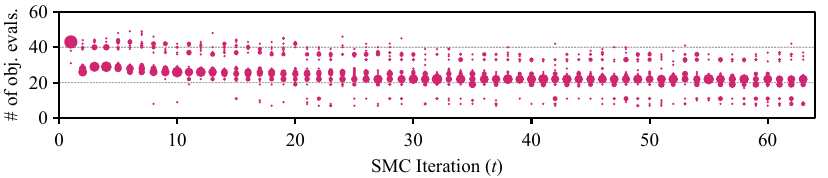}
    }
    \\
    \subfloat[Butterfly]{
        \includegraphics[]{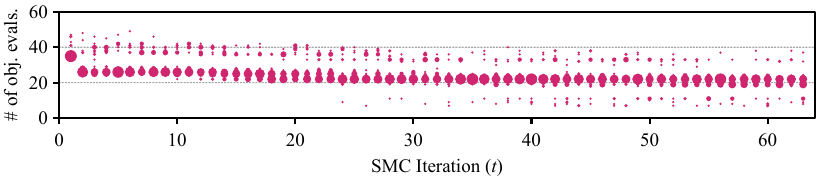}
    }
    \\
    \subfloat[Birds]{
        \includegraphics[]{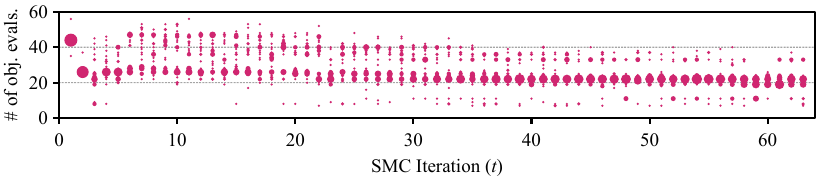}
    }
    \\
    \subfloat[Drivers]{
        \includegraphics[]{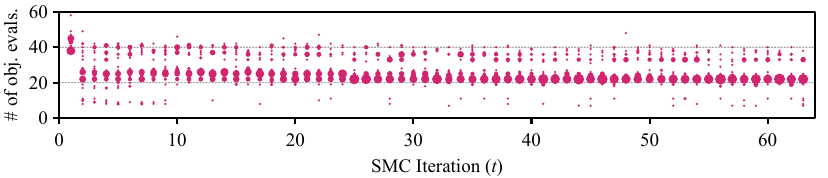}
    }
    \caption{
    \textbf{Number of objective evaluations spent during adaptation at each SMC iteration (continued).}
    The size of the markers represents the proportion of runs that spent each respective number of evaluations among 32 independent runs.
    }
\end{figure}

\begin{figure}[H]
    \centering
    \subfloat[Capture]{
        \includegraphics[]{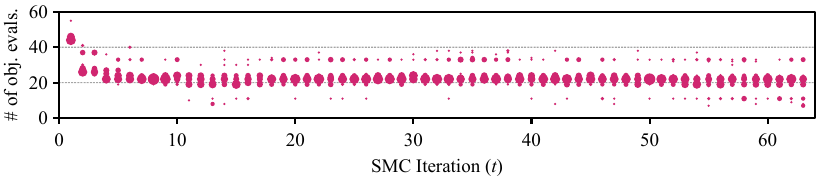}
    }
    \\
    \subfloat[Science]{
        \includegraphics[]{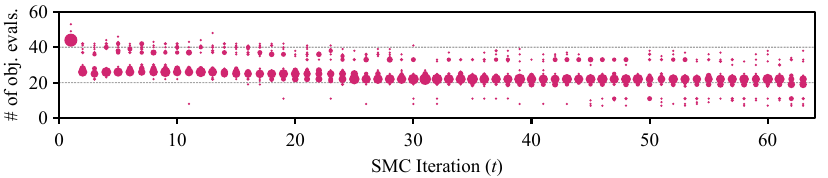}
    }
    \\
    \subfloat[Three Men]{
        \includegraphics[]{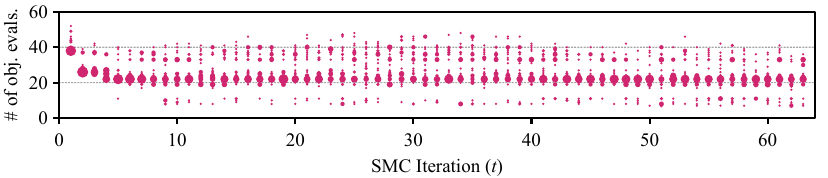}
    }
    \\
    \subfloat[TIMSS]{
        \includegraphics[]{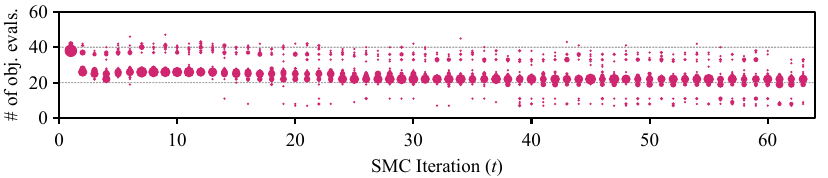}
    }
    \caption{
    \textbf{Number of objective evaluations spent during adaptation at each SMC iteration (continued).}
    The size of the markers represents the proportion of runs that spent each respective number of evaluations among 32 independent runs.
    }
\end{figure}

\newpage
\subsection{Adaptation Results from the Adaptive SMC Samplers}

Finally, we will present additional results generated from our adaptive SMC samplers, including the adapted temperature schedule, step size schedule, and normalizing constant estimates.
The computational budgets are set as \(T_1 = 64\)  and \(N = 1024\) with \(B = 256\).

\subsubsection{SMC-LMC}

\begin{figure}[H]
    \centering
    \subfloat[Funnel]{
        \hspace{-2em}
        \includegraphics{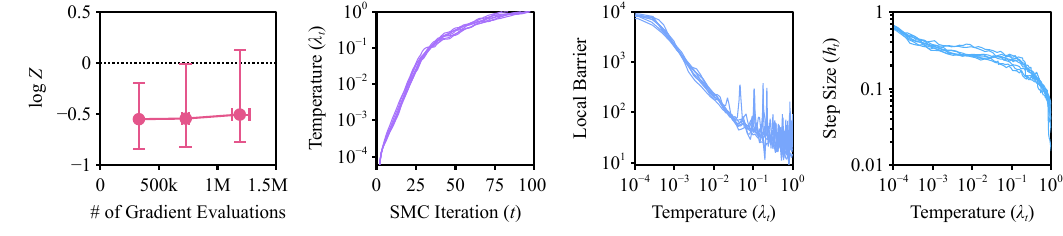}
        \vspace{-1ex}
    }
    \vspace{2ex}
    \\
    \subfloat[Brownian]{
        \hspace{-2em}
        \includegraphics{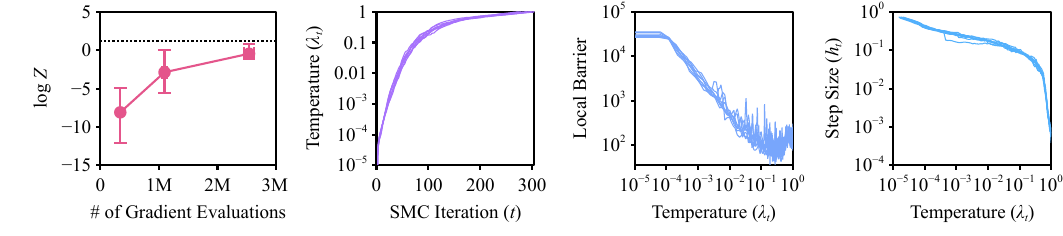}
        \vspace{-1ex}
    }
    \\
    \subfloat[Sonar]{
        \hspace{-2em}
        \includegraphics{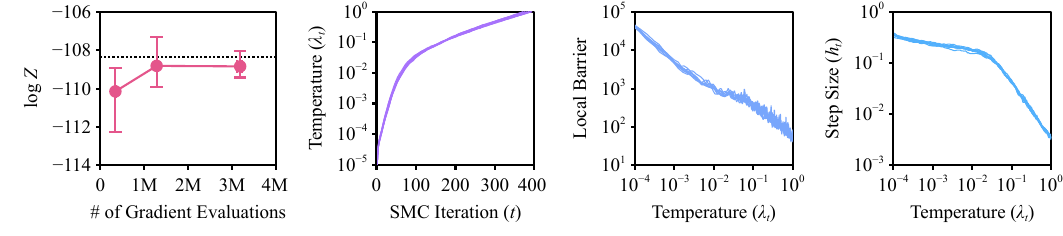}
        \vspace{-1ex}
    }
    \\
    \subfloat[Pines]{
        \hspace{-2em}
        \includegraphics{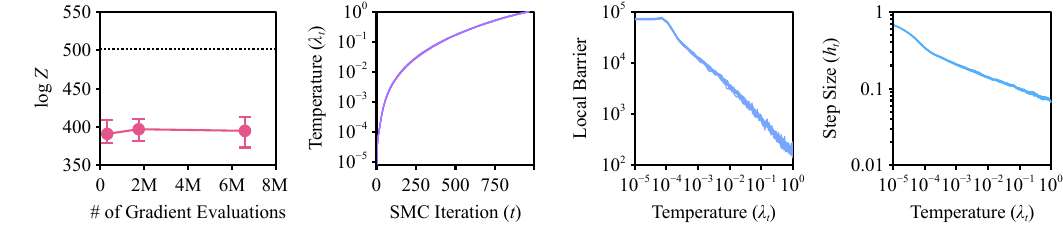}
        \vspace{-1ex}
    }
    \vspace{-1ex}
    \caption{
    \textbf{Normalizing constant estimate, temperature schedule, local communication barrier, and step size schedules obtained by running SMC-LMC.} 
    The dotted line is the ground truth value obtained from a large budget run.
    For the normalizing constant estimate, the confidence intervals in the vertical and horizontal directions are the \(80\%\) quantiles obtained from 32 replications.
    The temperature schedule, local communication barriers, and the step sizes from a subset of 8 runs are shown.
    }
\end{figure}

\newpage
\begin{figure}[H]
    \vspace{2ex}
    \subfloat[Bones]{
        \hspace{-2em}
        \includegraphics{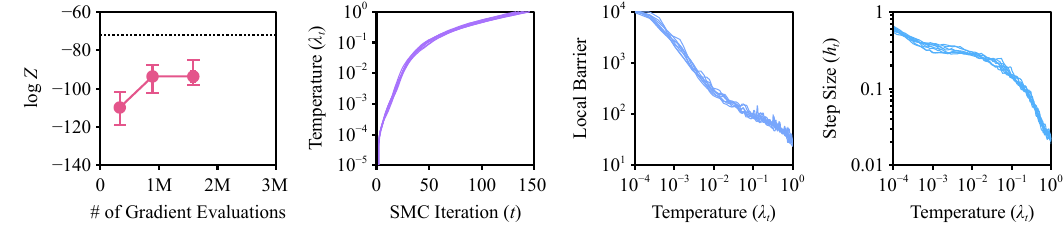}
    }
    \vspace{2ex}
    \\
    \vspace{2ex}
    \subfloat[Surgical]{
        \hspace{-2em}
        \includegraphics{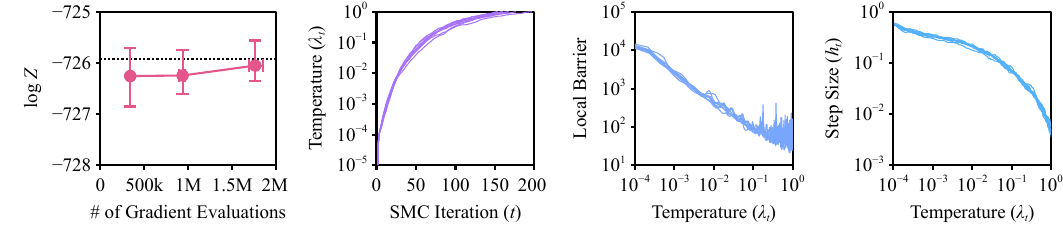}
    }
    \\
    \vspace{2ex}
    \subfloat[HMM]{
        \hspace{-2em}
        \includegraphics{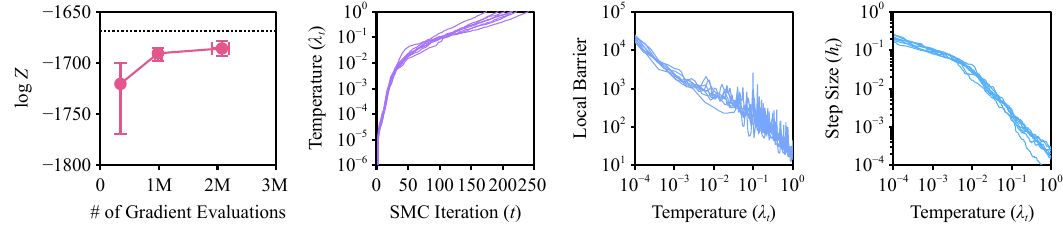}
    }
    \\
    \vspace{2ex}
    \subfloat[Loss Curves]{
        \hspace{-2em}
        \includegraphics{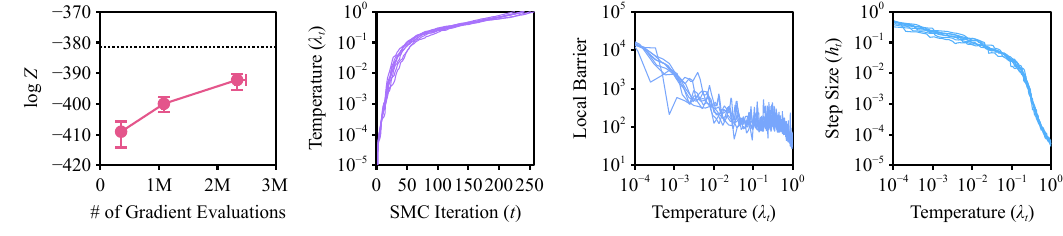}
    }
    \caption{
    \textbf{Normalizing constant estimate, temperature schedule, local communication barrier, and step size schedules obtained by running SMC-LMC (continued).} 
    The dotted line is the ground truth value obtained from a large budget run.
    For the normalizing constant estimate, the confidence intervals in the vertical and horizontal directions are the \(80\%\) quantiles obtained from 32 replications.
    The temperature schedule, local communication barriers, and the step sizes from a subset of 8 runs are shown.
    }
\end{figure}

\newpage
\begin{figure}[H]
    \vspace{2ex}
    \subfloat[Pilots]{
        \hspace{-2em}
        \includegraphics{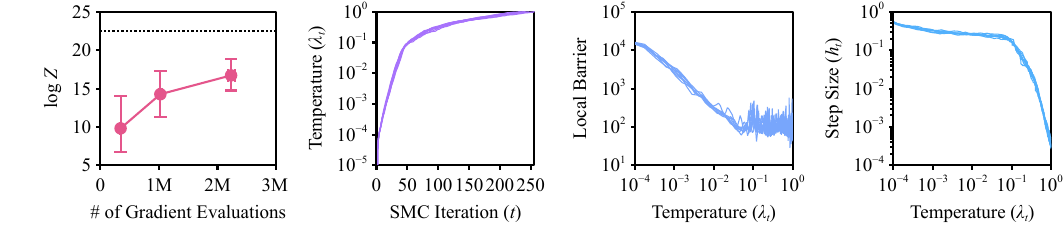}
    }
    \vspace{2ex}
    \\
    \vspace{2ex}
    \subfloat[Diamonds]{
        \hspace{-2em}
        \includegraphics{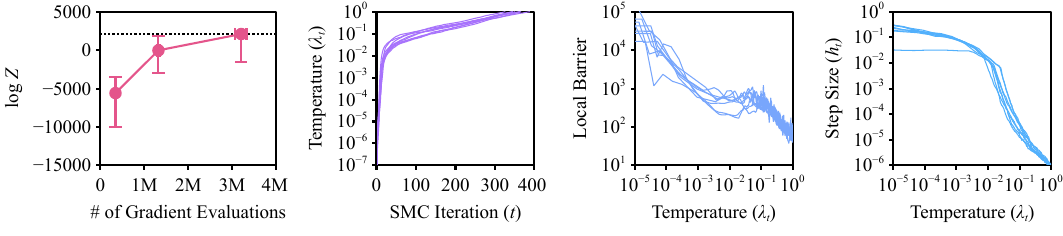}
    }
    \\
    \vspace{2ex}
    \subfloat[Seeds]{
        \hspace{-2em}
        \includegraphics{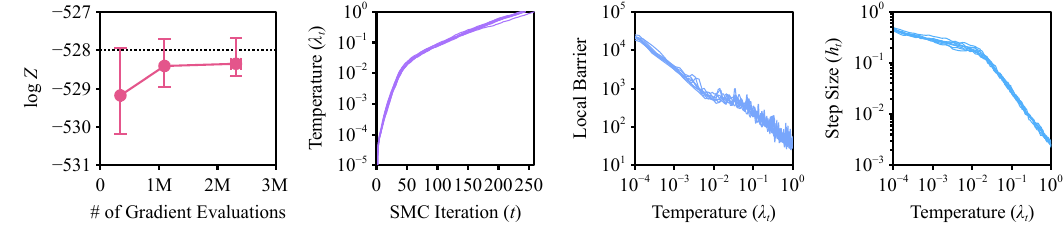}
    }
    \\
    \vspace{2ex}
    \subfloat[Downloads]{
        \hspace{-2em}
        \includegraphics{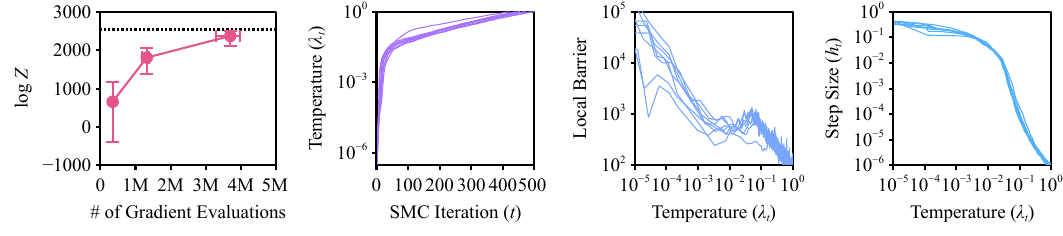}
    }
    \caption{
    \textbf{Normalizing constant estimate, temperature schedule, local communication barrier, and step size schedules obtained by running SMC-LMC (continued).} 
    The dotted line is the ground truth value obtained from a large budget run.
    For the normalizing constant estimate, the confidence intervals in the vertical and horizontal directions are the \(80\%\) quantiles obtained from 32 replications.
    The temperature schedule, local communication barriers, and the step sizes from a subset of 8 runs are shown.
    }
\end{figure}

\newpage
\begin{figure}[H]
    \vspace{2ex}
    \subfloat[Rats]{
        \hspace{-2em}
        \includegraphics{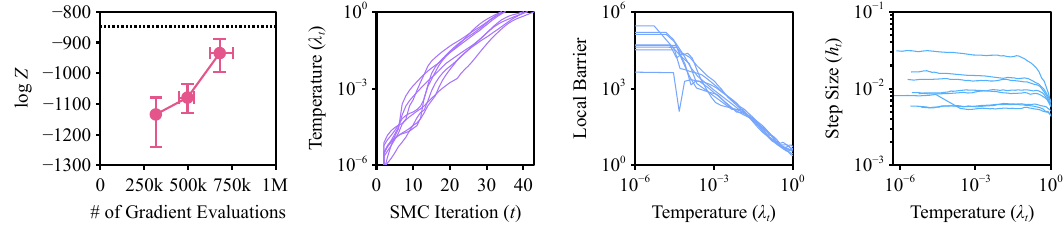}
    }
    \vspace{2ex}
    \\
    \vspace{2ex}
    \subfloat[Radon]{
        \hspace{-2em}
        \includegraphics{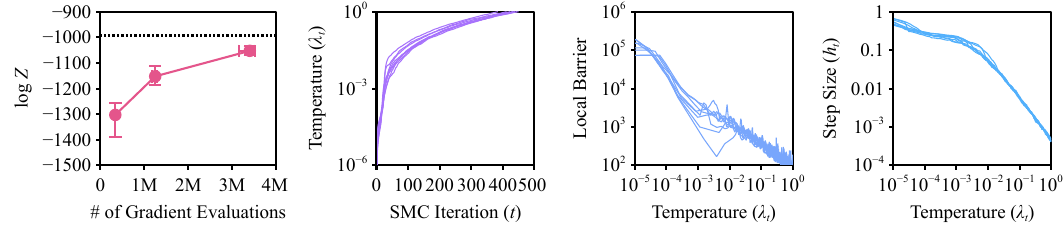}
    }
    \\
    \vspace{2ex}
    \subfloat[Election88]{
        \hspace{-2em}
        \includegraphics{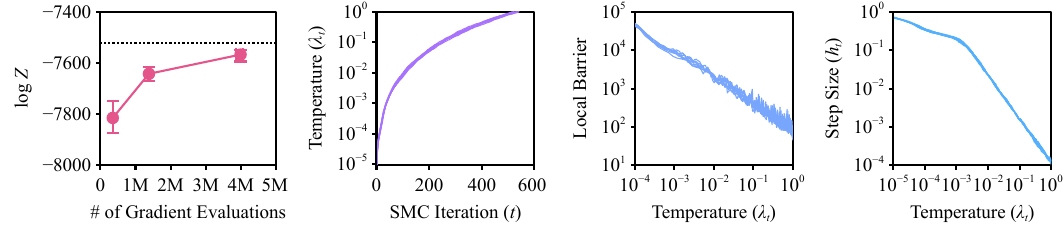}
    }
    \\
    \vspace{2ex}
    \subfloat[Butterfly]{
        \hspace{-2em}
        \includegraphics{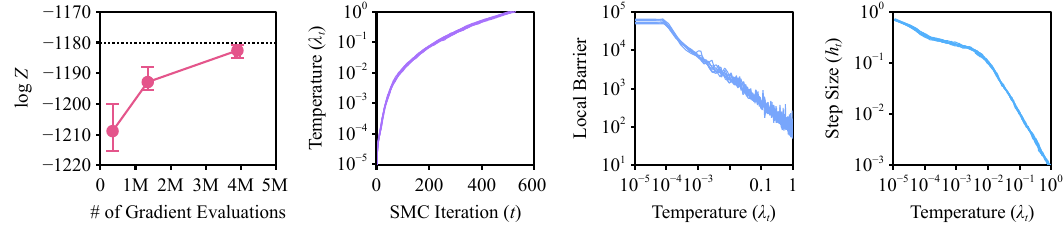}
    }
    \caption{
    \textbf{Normalizing constant estimate, temperature schedule, local communication barrier, and step size schedules obtained by running SMC-LMC (continued).} 
    The dotted line is the ground truth value obtained from a large budget run.
    For the normalizing constant estimate, the confidence intervals in the vertical and horizontal directions are the \(80\%\) quantiles obtained from 32 replications.
    The temperature schedule, local communication barriers, and the step sizes from a subset of 8 runs are shown.
    }
\end{figure}

\newpage
\begin{figure}[H]
    \vspace{2ex}
    \subfloat[Birds]{
        \hspace{-2em}
        \includegraphics{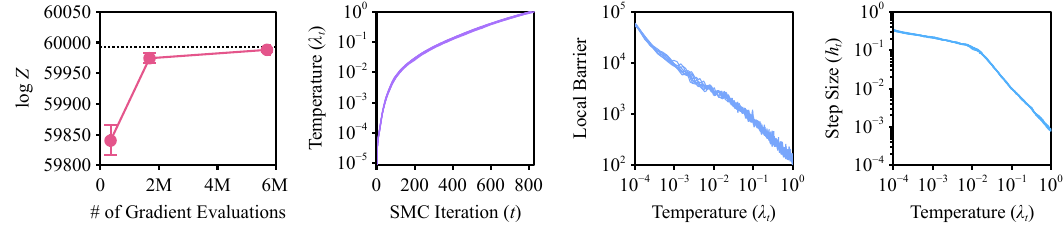}
    }
    \vspace{2ex}
    \\
    \vspace{2ex}
    \subfloat[Drivers]{
        \hspace{-2em}
        \includegraphics{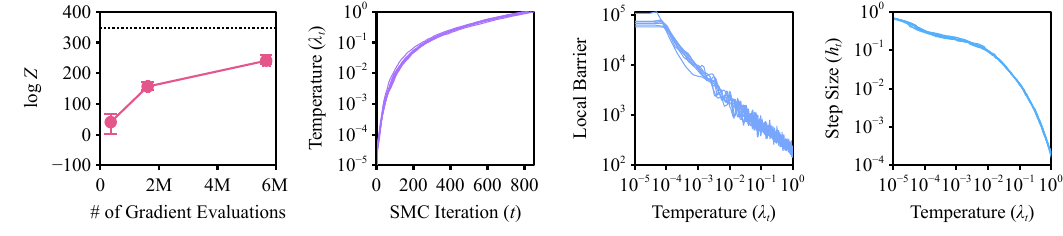}
    }
    \\
    \vspace{2ex}
    \subfloat[Capture]{
        \hspace{-2em}
        \includegraphics{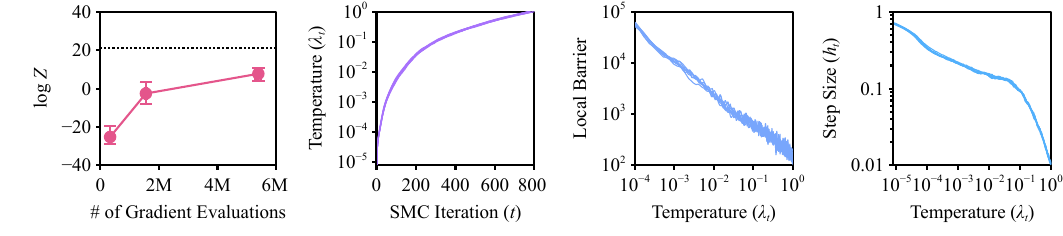}
    }
    \\
    \vspace{2ex}
    \subfloat[Science]{
        \hspace{-2em}
        \includegraphics{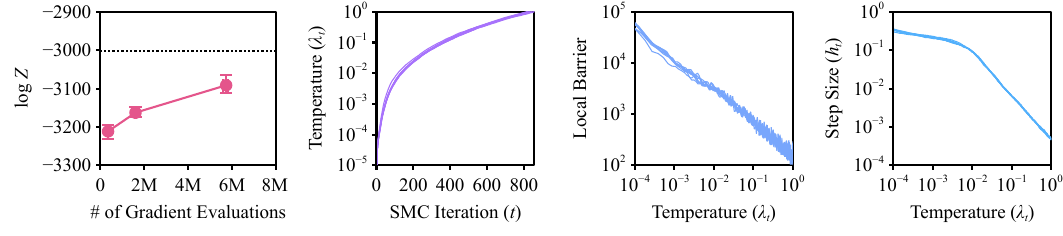}
    }
    \caption{
    \textbf{Normalizing constant estimate, temperature schedule, local communication barrier, and stepsize schedules obtained by running SMC-LMC (continued).} 
    The dotted line is the ground truth value obtained from a large budget run.
    For the normalizing constant estimate, the confidence intervals in the vertical and horizontal directions are the \(80\%\) quantiles obtained from 32 replications.
    The temperature schedule, local communication barriers, and the step sizes from a subset of 8 runs are shown.
    }
\end{figure}

\newpage
\begin{figure}[H]
    \vspace{2ex}
    \subfloat[Three Men]{
        \hspace{-2em}
        \includegraphics{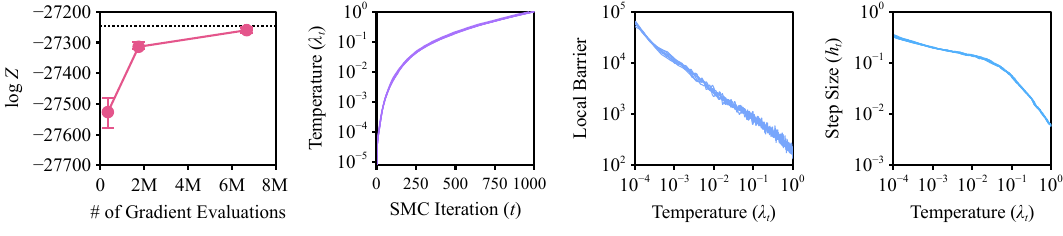}
    }
    \vspace{2ex}
    \\
    \vspace{2ex}
    \subfloat[TIMSS]{
        \hspace{-2em}
        \includegraphics{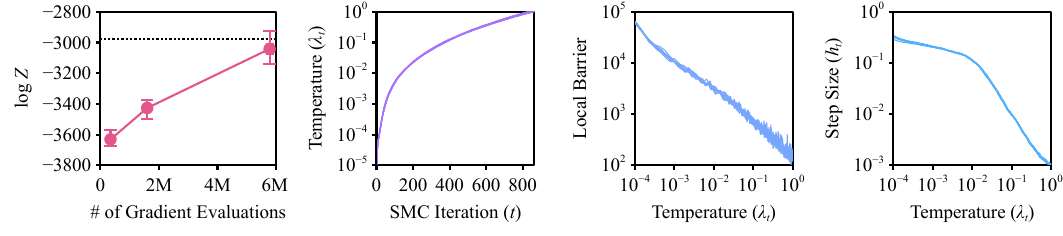}
    }
    \caption{
    \textbf{Normalizing constant estimate, temperature schedule, local communication barrier, and step size schedules obtained by running SMC-LMC (continued).} 
    The dotted line is the ground truth value obtained from a large budget run.
    For the normalizing constant estimate, the confidence intervals in the vertical and horizontal directions are the \(80\%\) quantiles obtained from 32 replications.
    The temperature schedule, local communication barriers, and the step sizes from a subset of 8 runs are shown.
    }
\end{figure}

\newpage
\subsubsection{SMC-KLMC}

\begin{figure}[H]
    \centering
    \vspace{2ex}
    \subfloat[Funnel]{
        \hspace{-2em}
        \includegraphics{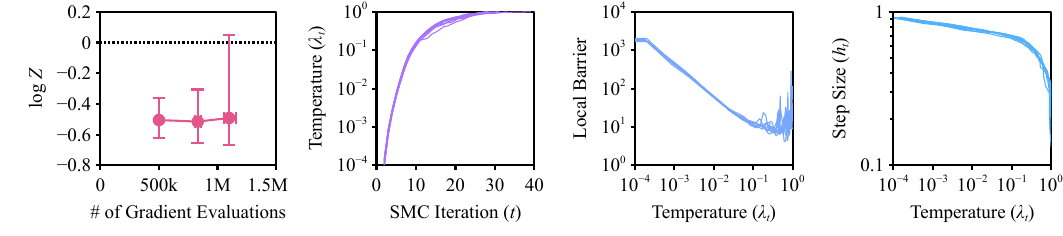}
    }
    \\
    \vspace{2ex}
    \subfloat[Brownian]{
        \hspace{-2em}
        \includegraphics{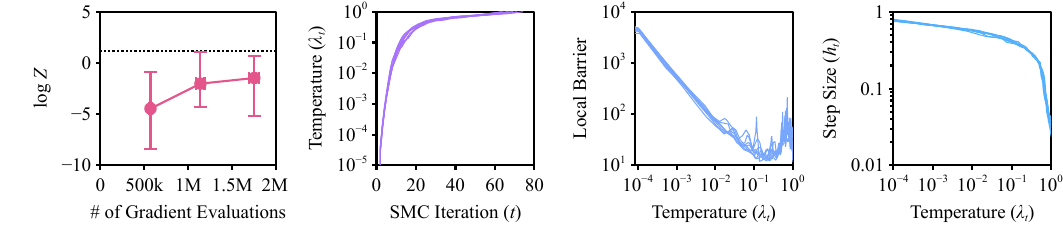}
    }
    \\
    \vspace{2ex}
    \subfloat[Sonar]{
        \hspace{-2em}
        \includegraphics{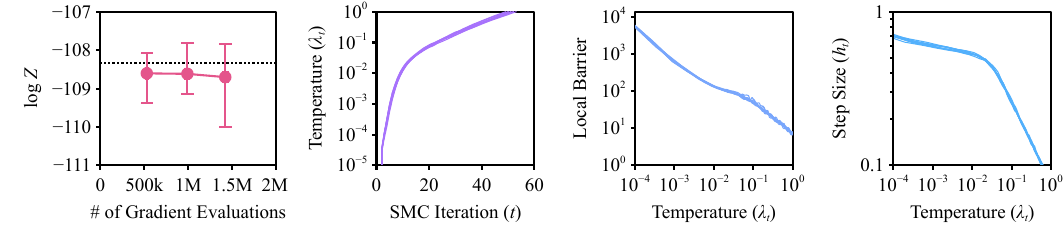}
    }
    \\
    \vspace{2ex}
    \subfloat[Pines]{
        \hspace{-2em}
        \includegraphics{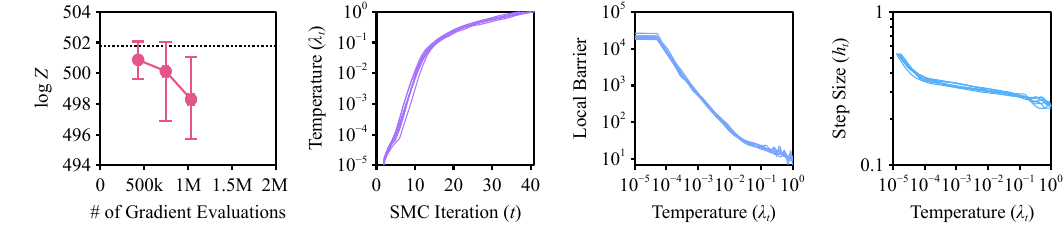}
    }
    \caption{
    \textbf{Normalizing constant estimate, temperature schedule, local communication barrier, and step size schedules obtained by running SMC-KLMC.} 
    The dotted line is the ground truth value obtained from a large budget run.
    For the normalizing constant estimate, the confidence intervals in the vertical and horizontal directions are the \(80\%\) quantiles obtained from 32 replications.
    The temperature schedule, local communication barriers, and the step sizes from a subset of 8 runs are shown.
    }
\end{figure}

\newpage
\begin{figure}[H]
    \vspace{2ex}
    \subfloat[Bones]{
        \hspace{-2em}
        \includegraphics{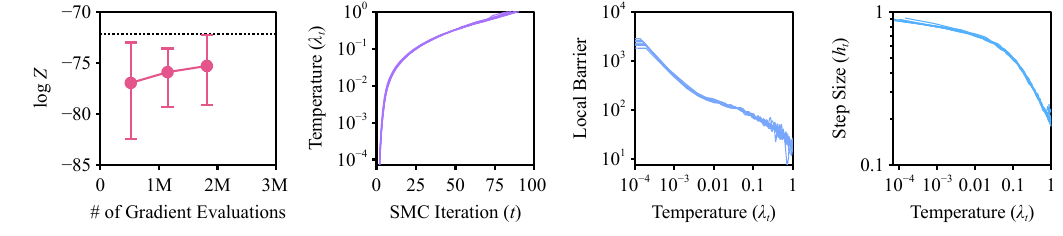}
    }
    \vspace{2ex}
    \\
    \vspace{2ex}
    \subfloat[Surgical]{
        \hspace{-2em}
        \includegraphics{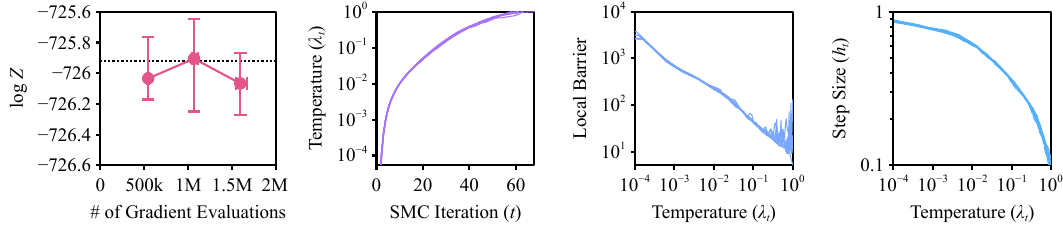}
    }
    \\
    \vspace{2ex}
    \subfloat[HMM]{
        \hspace{-2em}
        \includegraphics{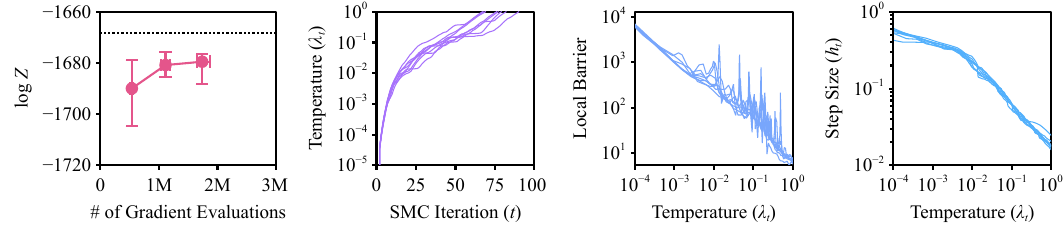}
    }
    \\
    \vspace{2ex}
    \subfloat[Loss Curves]{
        \hspace{-2em}
        \includegraphics{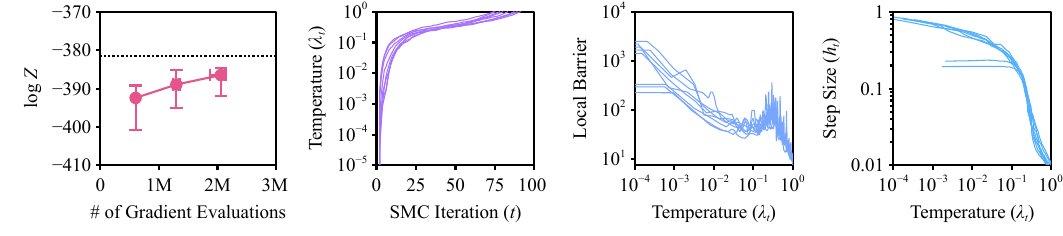}
    }
    \caption{
    \textbf{Normalizing constant estimate, temperature schedule, local communication barrier, and step size schedules obtained by running SMC-KLMC (continued).} 
    The dotted line is the ground truth value obtained from a large budget run.
    For the normalizing constant estimate, the confidence intervals in the vertical and horizontal directions are the \(80\%\) quantiles obtained from 32 replications.
    The temperature schedule, local communication barriers, and the step sizes from a subset of 8 runs are shown.
    }
\end{figure}

\newpage
\begin{figure}[H]
    \vspace{2ex}
    \subfloat[Pilots]{
        \hspace{-2em}
        \includegraphics{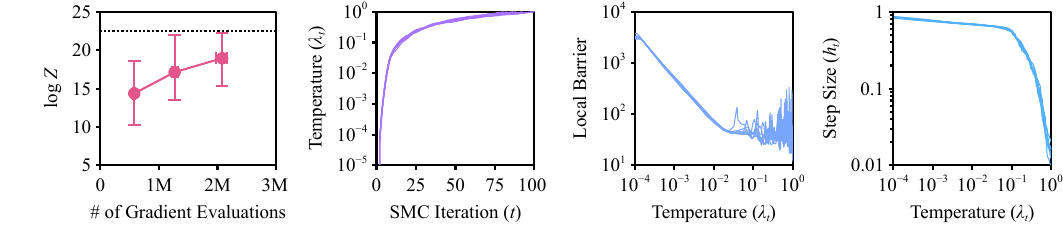}
        \vspace{-1ex}
    }
    \vspace{2ex}
    \\
    \vspace{2ex}
    \subfloat[Diamonds]{
        \hspace{-2em}
        \includegraphics{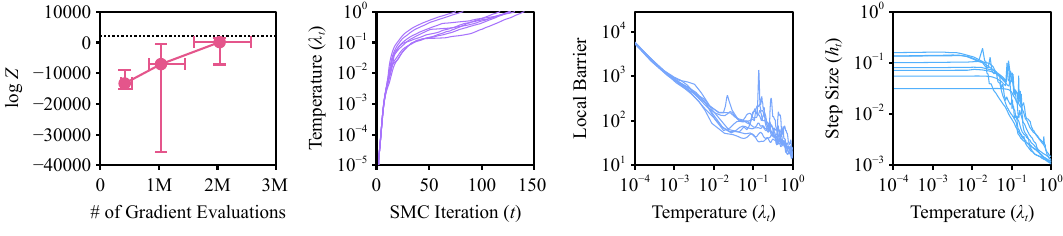}
    }
    \\
    \vspace{2ex}
    \subfloat[Seeds]{
        \hspace{-2em}
        \includegraphics{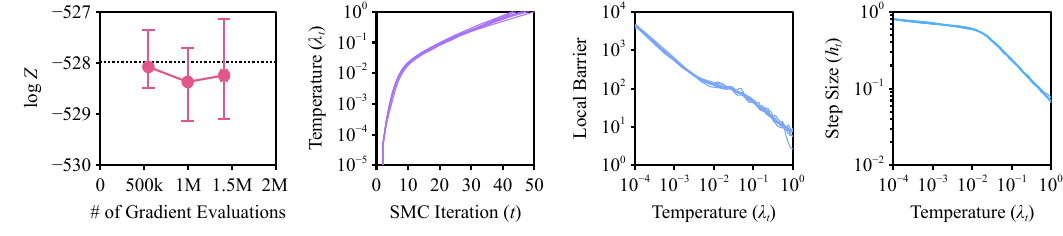}
    }
    \\
    \vspace{2ex}
    \subfloat[Downloads]{
        \hspace{-2em}
        \includegraphics{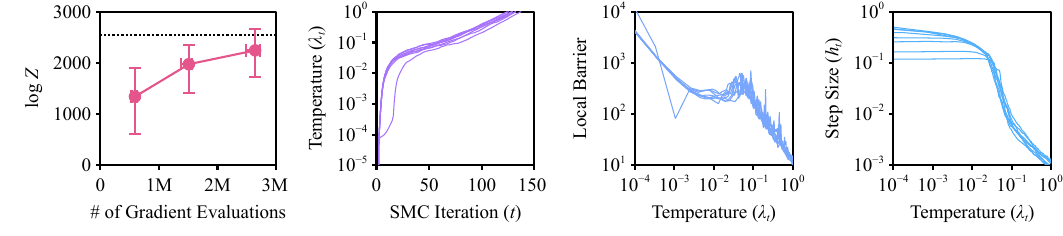}
    }
    \caption{
    \textbf{Normalizing constant estimate, temperature schedule, local communication barrier, and step size schedules obtained by running SMC-KLMC (continued).} 
    The dotted line is the ground truth value obtained from a large budget run.
    For the normalizing constant estimate, the confidence intervals in the vertical and horizontal directions are the \(80\%\) quantiles obtained from 32 replications.
    The temperature schedule, local communication barriers, and the step sizes from a subset of 8 runs are shown.
    }
\end{figure}

\newpage
\begin{figure}[H]
    \vspace{2ex}
    \subfloat[Rats]{
        \hspace{-2em}
        \includegraphics{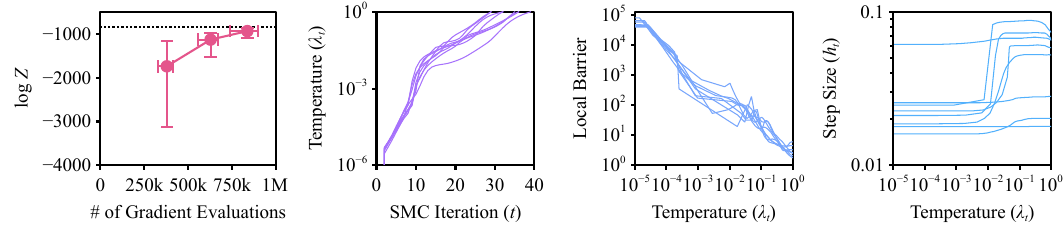}
    }
    \vspace{2ex}
    \\
    \vspace{2ex}
    \subfloat[Radon]{
        \hspace{-2em}
        \includegraphics{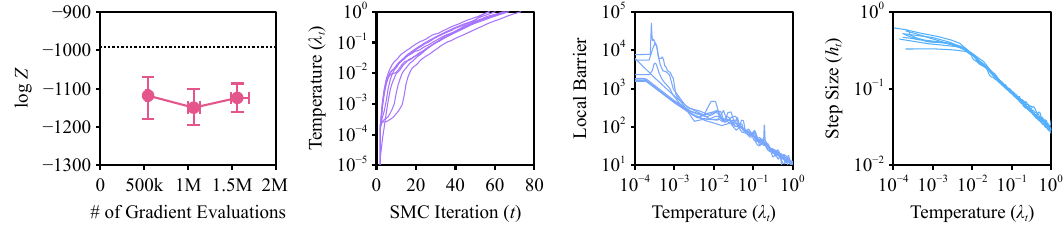}
    }
    \\
    \vspace{2ex}
    \subfloat[Election88]{
        \hspace{-2em}
        \includegraphics{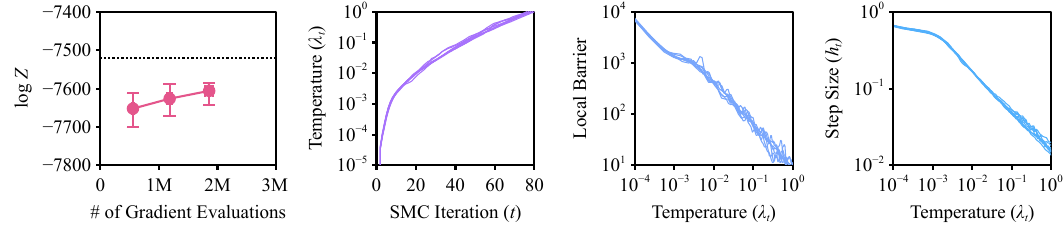}
    }
    \\
    \vspace{2ex}
    \subfloat[Butterfly]{
        \hspace{-2em}
        \includegraphics{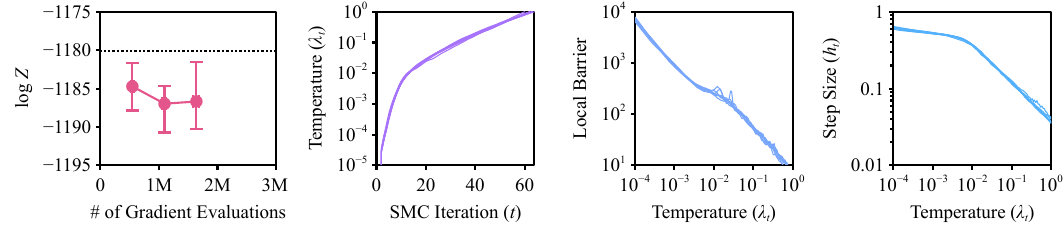}
    }
    \caption{
    \textbf{Normalizing constant estimate, temperature schedule, local communication barrier, and step size schedules obtained by running SMC-KLMC (continued).} 
    The dotted line is the ground truth value obtained from a large budget run.
    For the normalizing constant estimate, the confidence intervals in the vertical and horizontal directions are the \(80\%\) quantiles obtained from 32 replications.
    The temperature schedule, local communication barriers, and the step sizes from a subset of 8 runs are shown.
    }
\end{figure}

\newpage
\begin{figure}[H]
    \vspace{2ex}
    \subfloat[Birds]{
        \hspace{-2em}
        \includegraphics{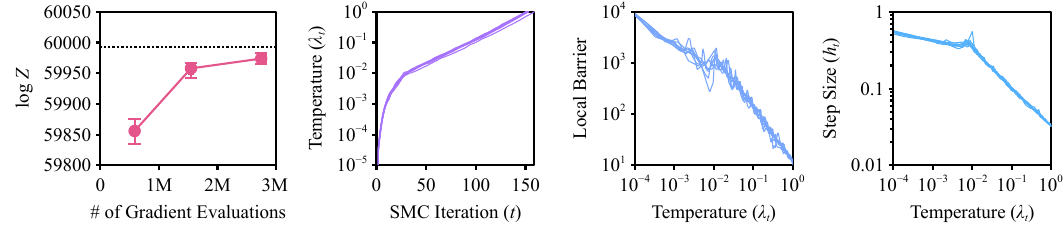}
        \vspace{-1ex}
    }
    \vspace{2ex}
    \\
    \vspace{2ex}
    \subfloat[Drivers]{
        \hspace{-2em}
        \includegraphics{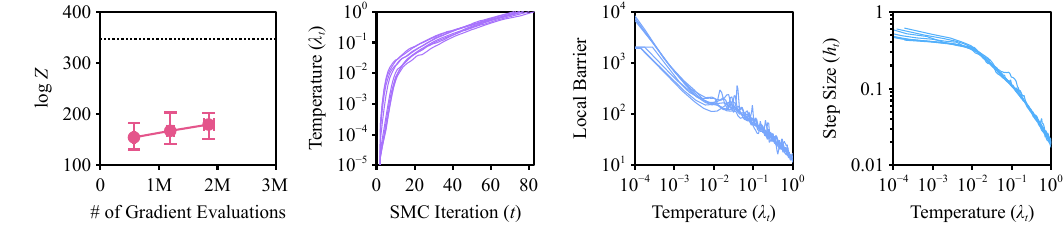}
    }
    \\
    \vspace{2ex}
    \subfloat[Capture]{
        \hspace{-2em}
        \includegraphics{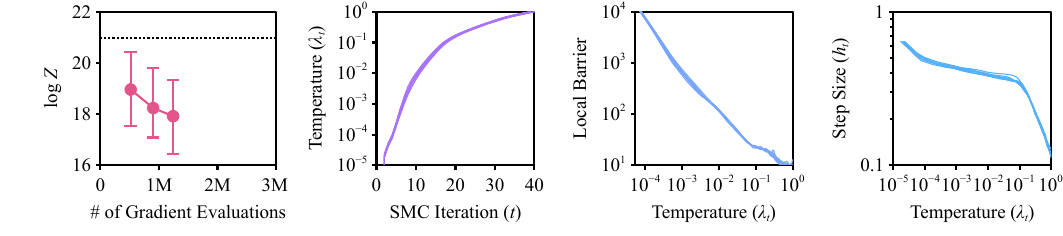}
    }
    \\
    \vspace{2ex}
    \subfloat[Science]{
        \hspace{-2em}
        \includegraphics{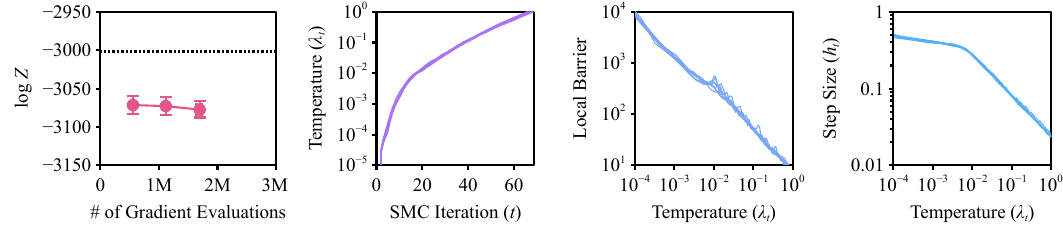}
    }
    \caption{
    \textbf{Normalizing constant estimate, temperature schedule, local communication barrier, and step size schedules obtained by running SMC-KLMC (continued).} 
    The dotted line is the ground truth value obtained from a large budget run.
    For the normalizing constant estimate, the confidence intervals in the vertical and horizontal directions are the \(80\%\) quantiles obtained from 32 replications.
    The temperature schedule, local communication barriers, and the step sizes from a subset of 8 runs are shown.
    }
\end{figure}

\newpage
\begin{figure}[H]
    \vspace{2ex}
    \subfloat[Three Men]{
        \hspace{-2em}
        \includegraphics{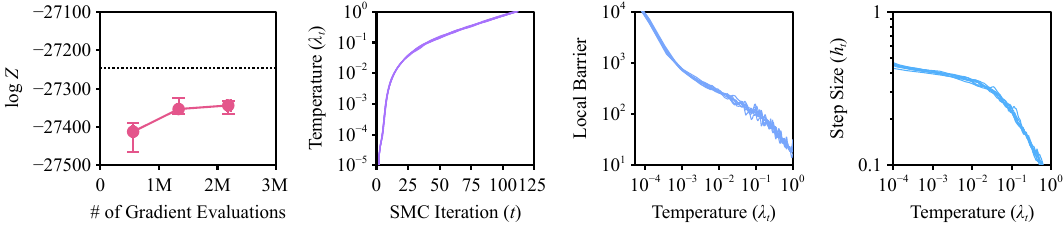}
    }
    \vspace{2ex}
    \\
    \vspace{2ex}
    \subfloat[TIMSS]{
        \hspace{-2em}
        \includegraphics{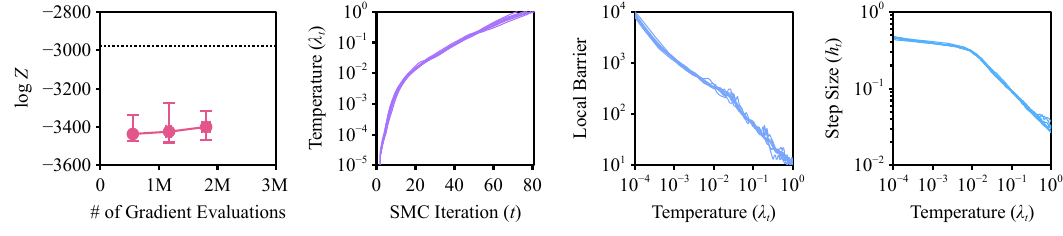}
    }
    \caption{
    \textbf{Normalizing constant estimate, temperature schedule, local communication barrier, and step size schedules obtained by running SMC-KLMC (continued).} 
    The dotted line is the ground truth value obtained from a large budget run.
    For the normalizing constant estimate, the confidence intervals in the vertical and horizontal directions are the \(80\%\) quantiles obtained from 32 replications.
    The temperature schedule, local communication barriers, and the step sizes from a subset of 8 runs are shown.
    }
\end{figure}

\end{document}